\documentclass[11pt]{article}
\usepackage[preprint]{neurips_2026}
\usepackage[expansion=false]{microtype}
\AtBeginDocument{\let\cite\citep}
\usepackage[utf8]{inputenc}
\usepackage{booktabs}
\usepackage{amsmath}
\usepackage{enumitem}

\usepackage{amssymb}
\usepackage{amsthm}
\usepackage{tikz}
\usetikzlibrary{positioning,arrows.meta,calc,shapes,backgrounds,fit}
\usepackage{pgfplots}
\pgfplotsset{compat=1.17}
\usepackage{makecell}
\usepackage{colortbl}
\usepackage{adjustbox}
\usepackage{graphicx}
\usepackage{tabularx}
\usepackage{url}
\usepackage{natbib}
\providecommand{\Description}[1]{}   % no-op: acmart accessibility macro
\definecolor{atpblue}{HTML}{1F4E79}   % admission / authority
\definecolor{atpteal}{HTML}{2C7A7B}   % committed stores
\definecolor{atpgreen}{HTML}{2F855A}  % projection / ATP-safe
\definecolor{atpamber}{HTML}{C05621}  % unsafe baseline / violation
\definecolor{atpslate}{HTML}{4A5568}  % proposers / neutral
\tikzset{
  proposer/.style={draw=atpslate, fill=atpslate!8, rounded corners=2pt, align=center,
                   font=\small, inner sep=4pt, minimum height=8mm},
  gate/.style={draw=atpblue, fill=atpblue!12, rounded corners=2pt, align=center,
               font=\small, very thick, inner sep=5pt, minimum height=10mm},
  store/.style={draw=atpteal, fill=atpteal!10, rounded corners=2pt, align=center,
                font=\small, inner sep=4pt, minimum height=8mm},
  view/.style={draw=atpgreen, fill=atpgreen!10, rounded corners=2pt, align=center,
               font=\small, inner sep=4pt, minimum height=8mm},
  flow/.style={-{Latex[length=2.2mm]}, atpslate, semithick},
  commitflow/.style={-{Latex[length=2.2mm]}, atpteal, semithick},
  feedback/.style={-{Latex[length=2.2mm]}, atpgreen, semithick, dashed},
  orch/.style={-{Latex[length=2.2mm]}, atpslate!70, thin, dotted},
  planelabel/.style={font=\footnotesize\sffamily\bfseries},
}
\newcommand{\rowhead}{\rowcolor{atpblue!10}}
\newcommand{\Mnemo}{\textsf{Mnemosyne}}
\newcommand{\ATP}{\textsf{ATP}}
\newcommand{\StateView}{\textsf{StateView}}
\newcommand{\CTL}{\textsf{CTL}}
\newcommand{\ACR}{\textsf{ACR}}
\newcommand{\ACRs}{\ACR{}s}
\newcommand{\PNA}{\textsf{PNA}}
\newcommand{\DA}{\textsf{DA}}
\newcommand{\ESS}{\textsf{ESS}}
\newcommand{\DCC}{\textsf{DCC}}
\newcommand{\OC}{\textsf{OC}}

\newcommand{\AET}{\textsf{AET}}

\newcommand{\IDC}{\textsf{IDC}}
\newcommand{\EPR}{\textsf{EPR}}
\newcommand{\SEA}{\textsf{SEA}}

\newtheorem{invariant}{Invariant}
\newtheorem{theorem}{Theorem}
\newtheorem{corollary}{Corollary}

\newcounter{atpalgorithm}
\usepackage{etoolbox}
\AtBeginEnvironment{tabular}{\small}
\providecommand{\Description}[2][]{}
\newcommand{\keywordsblock}[1]{\par\medskip\noindent\textbf{Keywords:} #1\par}
\newtheorem{proposition}[theorem]{Proposition}
\newtheorem{lemma}[theorem]{Lemma}
\usepackage{titletoc}% partial ToC for the supplement
\title{Mnemosyne: Agentic Transaction Processing for Validating and Repairing AI-generated Workflows}
\author{Edward Y.\ Chang\\Stanford University \and Longling Geng\\Stanford University}
\date{}
\begin{document}
\maketitle
%\begingroup\small\noindent\textbf{Preprint.}\ This is the authors' full version,
%comprising the main paper under submission to PVLDB Volume 20 together with its
%complete supplementary material (Appendices~S1--S22).\par\endgroup
%\medskip
\begingroup\small\noindent\textbf{Artifact Availability:}
The source code, data, and/or other artifacts are available at
\url{https://github.com/eyuchang/Mnemosyne/tree/arxiv-atp-rq1-rq9b-r8-v2}.\par\endgroup
\medskip
\keywordsblock{Agentic transaction processing, LLM agents, Transaction processing, Workflow validation and repair, Data integrity}
\begin{abstract}
LLMs increasingly generate workflow actions and repairs that may be well formed yet stale,
infeasible, conflicting, or destructive of their own evidence. We introduce \emph{Agentic
Transaction Processing} (\ATP{}), which treats generated actions as untrusted proposals until
deterministic admission accepts them under an executable constraint set $\mathcal{C}$. Its
two-sided principle is: \emph{a proposal is not truth, and no proposal foresees every disruption}.
Anything may propose, but only the runtime admits and commits; unforeseen disruptions trigger
bounded reactive repair whose output re-enters admission. \Mnemo{} realizes ATP with an append-only
transition log, effective-state projection, dependency-safe compensation, and active contract
records. Under stated assumptions and relative to $\mathcal{C}$, we prove four safety
properties---authority separation, serial-equivalent generative admission, evidence-preserving
repair, and obligation containment---and establish bounded reactive repair. Across nine safety
benchmarks and a four-case Temporal SDK comparison, ATP rejects every targeted violation while
admitting valid work. A matched data-size sweep measures $5.2$--$6.6\%$ incremental throughput
cost over the same local durable commit path and exposes local saturation. In a companion
scheduling harness, local repair edits nearly an order of magnitude fewer operations than global
recompute; a 12-scenario interruption stress test rejects every stale recovery candidate without
losing observations or producing invalid commits. Two bounded pilots route 80 proposals from four
heterogeneous LLMs through the same gate with zero invalid commits; 24 of 40 mid-execution
proposals are admitted and 16 are rejected, including four explicit safety rejections. 
\end{abstract}

\section{Introduction}

Transactions protect submitted work through isolation, logging, recovery, and integrity
constraints~\cite{gray1993transaction,bernstein2009principles,weikum2001transactional,mohan1992aries};
workflows add retries, timeouts, idempotency, and compensation
\cite{garciamolina1987sagas,ludascher2006scientific,russell2005workflow}. Both assume an
application-authored request meaningful enough to transact. Agentic workflows break that
assumption: an LLM, solver, learned guard, or fired contract may generate a well-formed but stale
or infeasible candidate~\cite{sagallm2025vldb,yao2023react,yao2023treeofthoughts,shinn2023reflexion,wang2023voyager,park2023generativeagents,wang2023surveyllmagents,chang2024pathagi1}.
ATP is proposer-neutral; \textsc{ProposerQualityBench} therefore includes solver-like,
rule-based, random, adversarial, and LLM-like proposers.

Consider a case from multi-agent planning~\cite{sagallm2025vldb}. In a family's
Thanksgiving plan, the father lands in Boston and picks up Grandma on the way home so dinner
starts on time. His flight is delayed, so an LLM agent replans the afternoon's pickups and
drives. The new schedule parses as valid, yet two faults slip through: it reuses the off-peak
drive time from the original 3\,pm plan, so the rescheduled pickup now lands in rush-hour
traffic and is physically impossible; and after reshuffling the drivers, it silently drops
Grandma's pickup, leaving her stranded so dinner never starts.

A conventional transaction layer would commit this well-formed schedule and inherit both faults.
\ATP{} instead routes each revised assignment through a deterministic gate against the declared
constraint set $\mathcal{C}$ over the current effective state (the \StateView{}). The infeasible
pickup is rejected because its travel time violates $\mathcal{C}$; the replan is rejected for
dropping Grandma's pickup, since a committed obligation cannot silently disappear. Feasible
reassignments commit; the rest are rejected with a recorded reason. The guarantee comes from
outside the model: a stronger planner gets more of the day right, but none can erase a standing
commitment or commit a physically impossible plan.

Admission stops bad proposals, but it cannot prevent an \emph{unforeseen} disruption. If a highway
closure invalidates the committed airport leg, the affected contract's watcher proposes readiness
from the admitted closure report. Only after admission commits that lifecycle state may its
continuation construct a bounded local repair through LCRP, re-timing the disrupted leg and its
dependents while leaving the rest of the day untouched. The repair then re-enters ordinary
admission. This is \emph{governed exception handling}: a conventional \texttt{catch} executes,
whereas an \ACR{} continuation only proposes. Static admission and reactive repair therefore share
one authority boundary; \S\ref{sec:lcrp} specifies it, and RQ1, RQ3, and RQ6 evaluate both paths,
including under live LLM proposers.

The relevant boundary is therefore the transition from generated intent to durable truth. The first
question is not only ``can this transaction commit?'' but \emph{should this proposal be admitted as
a transaction at all?} \emph{Agentic Transaction Processing} (\ATP{}) governs that decision. A
workflow remains a sequence of actions; a transaction remains the unit of work that commits under
ACID. ATP decides when a generated workflow step earns that status. LLMs can draft and repair at
great speed, but may hallucinate or endorse their own plans, so the proposer cannot be its own
judge. ATP retains the automation while treating every output as untrusted. Its central rule is:
\begin{center}
\emph{A proposal is not truth, and no proposal foresees every disruption.}
\end{center}
An LLM may propose, an active obligation may wake, and a driver may orchestrate; only the
transaction layer admits and commits.

This creates a separation classical transaction processing did not need: relative to
$\mathcal{C}$, model competence changes proposal usefulness, not the ability to append invalid
truth. The data layer---a committed-transition log, effective-state projection, and deterministic
gate---governs correctness. ATP therefore complements agent-side discipline such as
SagaLLM~\cite{sagallm2025vldb}: one disciplines the proposer; the other governs the committer.

%\vspace{-.06in}
\paragraph{Contributions.}
\begin{enumerate}[leftmargin=1.2em,itemsep=2pt,topsep=1pt]
\item \textbf{The ATP model: a new database-systems problem and its transaction model.} A
generated action must \emph{earn} transaction authority before becoming committed truth
(\S\ref{sec:motivation}). ATP formalizes that boundary: every generated output is a proposal
that only deterministic admission under a declared constraint set $\mathcal{C}$ may commit,
with \emph{active contract records} (\ACRs{}) as log-resident obligations whose readiness and remedy transitions are
themselves proposals (\S\ref{sec:model}--\S\ref{sec:invariants}).
\item \textbf{Guarantees for the boundary, static and reactive.} Four safety theorems (authority
separation, serial-equivalent generative admission, evidence-preserving repair, and obligation
containment), plus a localized repair protocol (LCRP) whose output re-enters
admission, a bounded-recovery contract, and a reduction of recursive recovery to sequential
recovery (\S\ref{sec:theorems}).

\item \textbf{The \Mnemo{} runtime and its evidence.} An executable ATP substrate over a
committed-transition log, effective-state projection, deterministic admission, dependency-safe
compensation, and PostgreSQL-backed concurrency (\S\ref{sec:arch-spec}); evaluated through nine
safety benchmarks, an executable Temporal SDK comparison, worker- and data-scale cost audits,
a bounded-repair containment study, a 12-scenario recursive-recovery stress test, and live-proposer
pilots in which four heterogeneous LLMs submit static plans and mid-execution repairs through the
same boundary under an external cross-episode scorer (\S\ref{sec:eval}).

\end{enumerate}

\section{Agentic Transaction Processing}
\label{sec:atp}

\begin{figure}[t]
\centering
\adjustbox{max width=\linewidth}{%
\begin{tikzpicture}[font=\scriptsize, node distance=1.5mm,
  inv/.style={draw=atpblue, fill=atpblue!8, rounded corners=1.5pt, align=left,
              inner sep=2.5pt, minimum height=5mm, text width=27mm},
  prop/.style={draw=atpgreen!70!black, fill=atpgreen!10, rounded corners=1.5pt, align=left,
              inner sep=2.5pt, minimum height=5mm, text width=30mm},
  lk/.style={-{Latex[length=2mm]}, atpslate, semithick},
  lkd/.style={-{Latex[length=2mm]}, atpslate, semithick, dashed},
  lab/.style={font=\tiny, fill=white, inner sep=1pt}]
% headers
\node[font=\scriptsize\bfseries, atpblue]  (hi) at (0,0.75) {Invariants: per-event, executable, counted};
\node[font=\scriptsize\bfseries, atpgreen!60!black] (hp) at (7.6,0.75) {Properties: all-execution contracts};
% invariant column
\node[inv] (i1) at (0,0)     {no invalid commit};
\node[inv] (i5) at (0,-0.62) {stale-world rejection};
\node[inv] (i4) at (0,-1.24) {conflict rejection};
\node[inv] (i7) at (0,-1.86) {evidence preservation};
\node[inv] (i6) at (0,-2.48) {non-authoritative wakeup};
\node[inv] (i8) at (0,-3.10) {bounded, re-admitted repair};
\node[inv] (i2) at (0,-3.72) {effective-state separation};
\node[inv] (i3) at (0,-4.34) {no orphaned dependents};
% property column
\node[prop] (p1) at (7.6,-0.31) {Authority separation\\(\PNA{}$+$\DA{}$\Rightarrow$\IDC{})};
\node[prop] (p2) at (7.6,-1.24) {Serial-equivalent admission (\SEA{})};
\node[prop] (p3) at (7.6,-1.86) {Evidence-preserving repair (\EPR{})};
\node[prop] (p4) at (7.6,-2.48) {Obligation containment (\OC{})};
\node[prop] (p5) at (7.6,-3.10) {Bounded reactive repair (LCRP)};
\node[prop] (p6) at (7.6,-3.72) {Effective-state soundness (\ESS{})};
\node[prop] (p7) at (7.6,-4.34) {Dependency-closed compensation (\DCC{})};
% links
\draw[lk] (i1.east) -- node[lab]{Thm.~\ref{thm:idc}} (p1.west|-i1.east);
\draw[lk] (i5.east) -- (p1.west|-i5.east);
\draw[lk] (i4.east) -- node[lab]{Thm.~\ref{thm:iso}} (p2.west);
\draw[lk] (i7.east) -- node[lab]{Thm.~\ref{thm:epr}} (p3.west);
\draw[lk] (i6.east) -- node[lab]{Thm.~\ref{thm:oc}} (p4.west);
\draw[lk] (i8.east) -- node[lab]{Props.~\ref{prop:brr}--\ref{prop:rrr}} (p5.west);
\draw[lkd] (i2.east) -- node[lab]{by construction} (p6.west);
\draw[lkd] (i3.east) -- node[lab]{by construction} (p7.west);
% audit band
\node[draw=atpamber!80!black, fill=atpamber!10, rounded corners=1.5pt, align=center,
      inner sep=2.5pt, text width=98mm, font=\tiny] (aud) at (3.8,-5.15)
  {\AET{} (audit and traceability) retains the evidence every link reads. RQ1's falsification
   benchmarks attack the left column; violation counts (Fig.~\ref{fig:violation-suppression})
   test the gate-closure assumption the proofs stand on.};
\end{tikzpicture}}
\caption{The assurance web: each executable invariant (left, checked by the gate on every event)
is carried by a theorem, under the trusted constraint set $\mathcal{C}$ and gate closure, to the
all-execution property it guarantees (right). Dashed links hold by construction; the audit band
supplies the evidence substrate and names the evaluation's role.}
\Description{A two-column diagram links eight per-event invariants on the left to seven
system-level properties on the right; solid arrows are labeled with theorem numbers, dashed
arrows hold by construction, and a band beneath notes that audit and traceability retain
evidence and that the RQ1 falsification benchmarks attack the invariant column.}
\label{fig:assurance-web}
\end{figure}

\label{sec:motivation}
Generated workflow actions fail in ways ACID does not address: an invalid transition, a
compensation that orphans effective dependents, conflicting or stale-world proposals, or a repair
that destroys the evidence that triggered it. ACID governs a transaction \emph{after} the system
accepts it~\cite{gray1993transaction,bernstein2009principles,weikum2001transactional}; \ATP{}
governs whether a generated proposal should be admitted as a transaction at all. This section
builds that guarantee in three linked levels, and Figure~\ref{fig:assurance-web} is its map.
\emph{Properties} (\S\ref{sec:properties}) are the contracts: declarative guarantees quantified
over all executions, stating what a generative proposer must never be able to commit.
\emph{Invariants} (\S\ref{sec:invariants}) are their executable form: per-event predicates the
admission gate checks on every proposal, which code can enforce, tests can falsify, and the
evaluation counts. \emph{Theorems} (\S\ref{sec:theorems}) are the bridge: proofs that a runtime
maintaining the invariants on every event, under the trusted constraint set $\mathcal{C}$ and
gate closure, satisfies the contracts on every execution. The transaction model itself
(\S\ref{sec:model}) supplies the objects all three levels quantify over.

\subsection{The Agentic Transaction Model}
\label{sec:model}

The model has one moving part and one gate, and Table~\ref{tab:notation} fixes the
vocabulary in one place so no term must be hunted down later. An LLM proposes; the runtime
admits, commits, and projects. A \emph{proposal} $p$ is an uncommitted candidate action
generated by an LLM (the proposer this paper focuses on; the model is proposer-agnostic, and
\S\ref{sec:eval} exercises other proposer classes only to show that the guarantees do not
depend on who proposes). \emph{Admission} checks $p$ deterministically against the declared
constraint set $\mathcal{C}$ and the current effective state; an admitted proposal is durably
appended to the committed-transition log (\CTL{}), the append-only source of committed truth,
and projected into \StateView{}, the effective state that future admissions read. A rejected
proposal leaves only a rejection event. External effects stage through an outbox
after commit, so nothing reaches the world that the log did not record.

\begin{table}[t]
\centering
\caption{Notation. All guarantees are stated relative to $\mathcal{C}$.}
\label{tab:notation}
\footnotesize
\setlength{\tabcolsep}{4pt}
\begin{tabular}{p{0.12\linewidth}p{0.83\linewidth}}
\toprule
\rowhead
Symbol & Meaning \\
\midrule
$p$ & Proposal: uncommitted candidate action from LLM; carries target entity, operation, dependencies, and assumptions. \\
$\mathcal{C}$ & Declared, executable constraint set the validator enforces: finite-state, dependency, compensation, stale-world, conflict-scope, and evidence-preservation rules. \\
\CTL{} & Committed-transition log: append-only source of committed truth (admitted transitions, \ACR{} lifecycle and recovery events). \\
\StateView{} & Effective state: replay of committed transitions excluding compensated or superseded records; the only state admission reads. \\
scope & Conflict scope: tenant, entity, or recovery region over which two proposals may not both become effective without revalidation; overlaps serialize or reject. \\
\ACR{} & Active contract record: a durable obligation in \CTL{} with trigger, guard, continuation, compensation, and expiry. A watcher proposes readiness; admission records it; only then may the continuation propose a remedy. No stage writes domain truth directly. \\
\bottomrule
\end{tabular}
%\vspace{-.1in}
\end{table}

Algorithm~\ref{alg:admission} gives the executable contract used by the implementation and by the RQ1--RQ6 experiments. The details of $\mathcal{C}$ are application-specific, but the authority pattern is fixed: proposal generation can vary, while admission and commit remain deterministic runtime operations.

%\vspace{-.06in}
\subsection{Properties Beyond ACID}
\label{sec:properties}

ACID is sufficient for traditional transaction processing, where the submitted unit of
work is authored by a trusted application: atomicity, consistency, isolation, and durability
protect a transaction \emph{after} the system accepts
it~\cite{gray1993transaction,bernstein2009principles,weikum2001transactional,mohan1992aries}.
\ATP{} must in addition safeguard LLM-generated workflows \emph{before} acceptance. Eight
properties do that work, in two tiers: five gate-critical properties that
\S\ref{sec:theorems} proves as theorems (\S\ref{sec:tier1}), and three structural properties
the runtime enforces by construction and RQ1 exercises (\S\ref{sec:tier2}). For each we state
what it requires, why it must be preserved, that is, what commits if it is dropped, and which
classical guarantee it extends. These are not hygiene rules; each names a corruption channel
that a generative proposer opens and a trusted application never did.

\subsubsection{Tier 1: The Gate-Critical Properties}
\label{sec:tier1}

\paragraph{Authority separation (\PNA{} $+$ \DA{} $\Rightarrow$ \IDC{}; Theorem~\ref{thm:idc}).}
Two requirements compose into the paper's central guarantee. \emph{Proposal non-authority}
(\PNA{}): an LLM, a fired \ACR{}, or any other generator may propose, but none may directly
create committed truth. \emph{Deterministic admission} (\DA{}): every proposal passes a
deterministic validator enforcing $\mathcal{C}$ against effective state, and the verdict
depends on the proposal's content, never on who proposed it. Together they yield
\emph{intelligence-decoupled correctness} (\IDC{}): committed-state correctness relative to
$\mathcal{C}$ is independent of the competence, honesty, or learning behavior of the proposing
layer. Why it must be preserved: without it, the rush-hour pickup of \S1 commits, and every
hallucination, staleness, and sycophancy failure of the model becomes durable truth; committed
correctness would be exactly as good as the model on its worst day. This is atomicity and
consistency re-founded for an untrusted author.

\paragraph{Serial-equivalent generative admission (\SEA{}; Theorem~\ref{thm:iso}).}
Concurrent proposers are admitted as if in some serial order over their declared conflict
scopes, so committed state never reflects two conflicting proposals or a duplicate of one. Why:
agentic proposers retry, race, and resubmit; without \SEA{}, two plausible plans for the same
vehicle both commit, or a retried payment commits twice. Classical isolation orders
transactions from trusted applications; \SEA{} extends that ordering to generators that cannot
be assumed to coordinate.

\paragraph{Evidence-preserving repair (\EPR{}; Theorem~\ref{thm:epr}).}
An admitted repair may not discharge its own trigger by deleting, compensating, or obscuring
the evidence that justified it, unless admission verifies the triggering condition is actually
resolved under $\mathcal{C}$. Why: the cheapest way for a generated repair to ``succeed'' is to
silence the alarm, delete the failing tests, suppress the alert, compensate the diagnostic
record, and every downstream check then reads green while the fault persists. \EPR{} has no
classical analogue, because classical validation never had to distrust the author of the
repair; it is the property that separates \ATP{} from audit logging.

\paragraph{Obligation containment (\OC{}; Theorem~\ref{thm:oc}).}
An \ACR{} watcher may propose readiness when its trigger fires; only an admitted ready or
breached lifecycle state may resume a continuation that \emph{proposes}, and neither stage can
mutate \CTL{} or \StateView{} directly. Why: active memory is the mechanism
that lets the system react to the unforeseen, and it is also a standing army of triggers; if a
stale, spurious, or adversarially planted wakeup could write, memory would be an ungoverned
actor inside the transaction layer. \OC{} extends the event-condition-action trigger of active
databases, with the action demoted from a write to a proposal.

\paragraph{Bounded reactive repair (LCRP; Proposition~\ref{prop:brr}).}
When an unforeseen disruption invalidates committed work, the repair edits only the affected
region, under a declared edit radius and iteration cap, and the repair itself re-enters
admission. Why: without the bound, a single mid-execution disruption licenses an unbounded,
unguarded re-plan, precisely the uncontrolled rewrite the gate exists to prevent; without
re-admission, recovery becomes a side door around \DA{}. This extends saga-style recovery from
pre-registered compensations to repairs that are \emph{constructed} at run time yet still earn
commit authority like any other proposal.

\begin{figure}[t]
\centering
\small
\refstepcounter{atpalgorithm}\label{alg:admission}
\begin{tabular}{@{}p{0.045\linewidth}p{0.86\linewidth}@{}}
\toprule
\multicolumn{2}{@{}l@{}}{\textbf{Algorithm~\theatpalgorithm: ATP admission under constraint set $\mathcal{C}$.}}\\
\midrule
1 & Parse $p$; check tenant, idempotency and operation keys, and the declared conflict scope. \\
2 & Read effective \StateView{} and retained evidence, never speculative or rejected history. \\
3 & Apply $\mathcal{C}$ (finite-state, stale-world, dependency closure, compensation safety); if $p$ is a repair, require its triggering evidence to stay effective and queryable. \\
4 & Serialize or reject overlapping conflict scopes; \ACR{} readiness and remedy outputs may only \emph{propose}, never mutate \CTL{} or \StateView{}. \\
5 & Atomically append and project to \StateView{}, else record a queryable rejection reason. \\
\bottomrule
\end{tabular}
\Description{Compact ATP admission contract: parse and scope-check; read effective state and evidence; apply the constraint set including evidence-preserving repair; serialize conflicting scopes while active contracts may only propose; then atomically commit or record a rejection.}
\end{figure}

\subsubsection{Tier 2: The Structural Properties}
\label{sec:tier2}

Three further properties are not proved as separate theorems; the runtime enforces them by
construction, and RQ1's projection and compensation benchmarks exercise them.
\emph{Effective-state soundness} (\ESS{}): \StateView{} is derived only from effective
committed records, so admission never validates against compensated or superseded history; drop
it and the gate approves proposals against a world that no longer exists.
\emph{Dependency-closed compensation} (\DCC{}): a compensation is inadmissible if it would
orphan effective dependents, foreclosing the half-unwound bundle in which a refund is issued
while shipped siblings still depend on the reversed reservation; it extends saga compensation
with an effective-dependency check. \emph{Audit and traceability} (\AET{}): rejections,
wakeups, repairs, and compensations remain queryable as historical evidence; \AET{} is not
itself the primary guarantee, but \EPR{} and post-hoc accountability are only enforceable
because the evidence they read survives, which is durability extended from committed data to
the decision record around it. \AET{} is exercised on every artifact run: the falsification
benchmarks assert against verdicts and rejection reasons read back from the durable record, not
against in-memory return values.

\paragraph{Why this is not input validation.}
With the eight properties in view, the objection can be answered precisely. Classical input
validation and integrity constraints check \emph{data quality} on commands from a trusted,
authorized submitter. A generative proposer breaks that trust assumption. \ATP{} governs the prior
question, whether an untrusted \emph{generated} proposal may become a transaction request at all,
so it enforces an \emph{authority contract} over committed state rather than a richer predicate on
a trusted command: validation against the effective-state witness (\ESS{}), repairs that may not
destroy the evidence that justified them (\EPR{}), contracts that may wake but never write
(\OC{}), and concurrent proposals admitted as if serial (\SEA{}). The hardest of these,
evidence-preserving repair, has no classical analogue, because classical validation never had to
distrust the author of the command.

\subsection{Safety Invariants}
\label{sec:invariants}

A contract quantified over all executions cannot be checked by any single runtime event.
This section therefore restates each property of \S\ref{sec:properties} as an \emph{invariant}:
a decidable, per-event predicate the admission gate evaluates before every commit, which tests
can falsify and whose violations are the counts the evaluation reports.
Figure~\ref{fig:assurance-web} draws each invariant-to-property link and names the theorem that
carries it.

Throughout, fix a \emph{trusted constraint set} $\mathcal{C}$ consisting of the deterministic
admission validator, application finite-state constraints, dependency and compensation rules,
stale-world checks, evidence-preservation rules, and conflict-scope rules. A committed
transition is \emph{$\mathcal{C}$-valid} if the admission gate accepts it under this constraint
set and current effective state. All guarantees are relative to $\mathcal{C}$; if $\mathcal{C}$
omits a hazard, ATP does not claim to detect it. The results are relative systems-safety claims
rather than complexity-theoretic claims.

The seven executable invariants, each tagged with the property it carries, are: \emph{no
invalid commit} (non-empty violation set $\Rightarrow$ never written to \CTL{};
$\rightarrow$\,\DA{}); \emph{stale-world rejection} ($\rightarrow$\,\DA{});
\emph{proposal-conflict rejection} over declared scopes ($\rightarrow$\,\SEA{});
\emph{evidence preservation} ($\rightarrow$\,\EPR{}); \emph{non-authoritative wakeup}
($\rightarrow$\,\OC{}); \emph{effective-state separation} (full history retained, effective
history excludes compensated and superseded records; $\rightarrow$\,\ESS{}); and \emph{no
orphaned effective dependents} ($\rightarrow$\,\DCC{}). Their formal statements, with
additional invariants, are in the supplementary material \cite{Mnemosyne-Supp}.

\section{Guarantees: Admission Safety and Reactive Repair}
\label{sec:theorems}

We prove the Tier-1 properties of \S\ref{sec:tier1} as safety theorems: first for the admission of generated proposals, then for reactive repair under unforeseen disruptions (\S\ref{sec:lcrp}). Every theorem rests on the two assumptions stated next, so they precede the statements rather than trailing them. (Full proofs are in the supplementary material \cite{Mnemosyne-Supp}.)

\paragraph{Bounded executable constraint set.}
ATP does not assume an oracle for semantic correctness. In the implementation and
experiments, $\mathcal{C}$ is a finite executable constraint set over proposal
structure, tenant/entity scope, operation and idempotency keys, current
\StateView{}, dependency edges, retained evidence handles, admitted digital observations,
and application validators. Thus $\mathcal{C}$ is decidable for the artifact
workloads. In richer deployments, constructing a faithful $\mathcal{C}$ is part
of the application engineering problem rather than something ATP solves by
itself. ATP guarantees internal consistency relative to those admitted observations;
establishing their correspondence to the physical world is outside this paper's guarantee.

\paragraph{Gate closure.}
The theorems assume gate closure: proposer, solver, runtime-driver, benchmark,
and \ACR{}-wakeup code can emit proposal packages but cannot append domain truth
to \CTL{} except through admission. Mnemosyne's reference implementation is
organized by this boundary and includes regression tests for non-authoritative
wakeups, rejected proposals, and commit-boundary behavior, but this paper does
not claim a machine-checked noninterference proof of the Python codebase.
The formal claims therefore apply to executions that satisfy this gate-closure
assumption.

\paragraph{Where the difficulty lives.}
Theorem~\ref{thm:idc} is deliberately simple once the gate is trusted: if only
$\mathcal{C}$-valid transitions are appended, committed state is $\mathcal{C}$-valid.
The systems contribution is not that implication but the \emph{structure} of
$\mathcal{C}$ and the \emph{witness} the gate reads. A classical validator checks a
submitted command against static predicates; ATP's gate reads the effective-state projection
rather than raw history, enforces dependency-closed compensation, rejects stale-world and
conflicting proposals, and, hardest of all, enforces evidence-preserving repair
(Theorem~\ref{thm:epr}). Authority separation
(Theorem~\ref{thm:idc}) is the umbrella under which it and the other gate-critical properties
are instantiated.

\subsection{Admission Safety}

\begin{theorem}[Authority-Separation Theorem]
\label{thm:idc}
Let $\mathcal{C}$ be the trusted admission constraint set. Let the intelligence layer consist of arbitrary LLMs, solvers, agents, active-contract wakeups, learned repair policies, and benchmark adapters. Assume: (i) the intelligence layer can only emit proposals; (ii) the only operation that extends \CTL{} is append of an admitted transition; and (iii) the admission gate appends only proposals accepted under $\mathcal{C}$ and current effective state. Then committed-state correctness with respect to $\mathcal{C}$ is independent of the correctness, competence, honesty, or learning behavior of the intelligence layer.
\end{theorem}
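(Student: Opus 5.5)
The plan is to make ``independence'' precise as an invariance statement and then prove it by induction on the length of the committed-transition log. Model an execution as a sequence of events. Each event is one of three kinds: a proposal emission by some member of the intelligence layer, an admission verdict (accept or reject), or an append to \CTL{}. Let $\sigma_k = \mathrm{proj}(\CTL{}_k)$ denote the \StateView{} after the $k$-th append. Call committed state \emph{$\mathcal{C}$-correct} if every transition in \CTL{} was accepted under $\mathcal{C}$ against the effective state current at its append.

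The theorem then reads as follows. For every execution satisfying (i)--(iii), and for every choice of intelligence layer (arbitrary, even adversarial, proposal streams), the predicate ``\CTL{} is $\mathcal{C}$-correct'' holds. Since this predicate is a universally quantified invariant over all proposal streams, its truth value cannot depend on any property of the proposers. Competence affects only \emph{which} valid transitions appear, that is, usefulness, and not correctness.

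The induction proceeds in three steps.
\begin{enumerate}
\item \emph{Base case.} The empty log, with its initial \StateView{}, is $\mathcal{C}$-correct vacuously.
\item \emph{Inductive step.} Suppose $\CTL{}_k$ is $\mathcal{C}$-correct and some event extends \CTL{}. By (ii), that event is an append of an admitted transition $t$, so it cannot come directly from a proposer. Assumption (i), together with gate closure, excludes every other write path. By (iii), $t$ was accepted under $\mathcal{C}$ against $\sigma_k$. Hence $\CTL{}_{k+1}$ is $\mathcal{C}$-correct.
\item \emph{Non-append events.} Proposal emissions, rejections, and \ACR{} wakeups leave \CTL{} unchanged, because by (i) they only produce proposal packages. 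A rejection writes only a rejection record, which is not domain truth.
\end{enumerate}
The induction closes, and quantifying over proposer behaviour yields the independence claim.

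The main obstacle is that acceptance must be judged against the \emph{current effective state}, not against raw history or a state snapshot the proposer saw. To handle this, I would fix the verdict as a deterministic function $V(p,\sigma_k)$ of the proposal content and the projection alone, which is \DA{}. The proposer's identity and internal state then do not enter as arguments. I would also argue that $\sigma_k$ is itself a function of $\CTL{}_k$ only, which is \ESS{}, holding by construction. A second subtlety is concurrency: two admissions may read the same $\sigma_k$. Here I would rely on the statement's ``current effective state'' and treat appends as linearized, deferring cross-scope interleaving to Theorem~\ref{thm:iso}. The theorem therefore holds over any linearization of appends.
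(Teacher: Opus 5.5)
Your proposal is correct and follows the paper's proof: induction on the length of \CTL{}, where the step closes because (ii) forces every append to be an admitted transition and (iii) makes that transition $\mathcal{C}$-valid, and independence follows because no step uses any property of the proposers. The differences are minor: the paper's base case also lets \CTL{} start with trusted initial records, and your appeals to \DA{}, \ESS{}, and linearized appends are extra precautions that the paper's argument does not need.
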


This is the central systems claim, the formal counterpart of \S\ref{sec:tier1}'s first property:
the proposer governs usefulness only; nothing corrupts committed truth unless $\mathcal{C}$ itself
admits the transition.

Corollaries follow the same pattern for solver certificates (a certificate is evidence, not
admissibility) and for the Tier-2 properties \ESS{} and \DCC{}; their statements are in the
supplement.

\begin{theorem}[Serial-Equivalent Generative Admission]
\label{thm:iso}
Assume the declared conflict scopes are correct and complete for every incompatibility relevant to $\mathcal{C}$. By the conflict-scope contract (Table~\ref{tab:notation}), proposals with overlapping incompatible scopes are serialized or rejected at admission. Assume in addition that append to \CTL{} is atomic and that every dependent recovery proposal reads only committed effective state after its dependencies have committed. Then concurrent generative proposers are observationally equivalent to some serial admission order over committed transitions. Hidden conflicts outside the declared scopes are outside this guarantee. Moreover, cascading recovery is memoryless with respect to speculative proposal history: later recovery depends only on committed history and effective state, not on rejected, in-flight, or superseded proposal attempts.
\end{theorem}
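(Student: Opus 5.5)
The plan is a classical serializability argument in the conflict-graph style, transplanted to admission events. Take the serial order to be the order in which admitted transitions are atomically appended to \CTL{}. Since append is atomic, \CTL{} is a totally ordered sequence $t_1,t_2,\dots$. Each admitted $t_k$ was validated against some effective \StateView{} snapshot $S_k$. The goal is to show that $S_k$ agrees, on every item relevant to $t_k$'s admission verdict under $\mathcal{C}$, with the replay $\mathrm{proj}(t_1,\dots,t_{k-1})$. Once that is established, running the gate serially in log order would produce the same verdicts and the same appended records. This gives observational equivalence for both \CTL{} and, through replay, \StateView{}.

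\emph{Step 1: reduce to conflicting pairs.} Suppose $t_j$ with $j<k$ committed after $S_k$ was read. Either the scopes of $t_j$ and $t_k$ overlap incompatibly, or they do not. In the non-overlapping case, the completeness assumption on declared conflict scopes says $t_j$ changes nothing that $\mathcal{C}$ consults when judging $t_k$. So $t_k$'s verdict against $S_k$ equals its verdict against $S_k$ extended by $t_j$; the two commute. In the overlapping case, the conflict-scope contract says the gate either serialized them, in which case $t_k$'s read happened after $t_j$'s append, contradicting the assumption, or rejected one, in which case it is not in \CTL{}. I will argue this by induction on $k$, swapping non-conflicting interleavings into log order.

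\emph{Step 2: duplicates and dependents.} Retries and resubmissions share idempotency and operation keys, which fall inside the declared scope (Algorithm~\ref{alg:admission}, step~1). By Step~1 they are therefore serialized, and the second one sees the first and is rejected, so no duplicate commits. For a dependent recovery proposal, the hypothesis that it reads only committed effective state after its dependencies commit places it after those dependencies in log order. This makes the serial order consistent with the dependency order.

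\emph{Step 3: memorylessness.} The gate's inputs are the proposal, \StateView{}, and retained evidence, and \StateView{} is a function of effective committed records only (\ESS{}, together with step~2 of Algorithm~\ref{alg:admission}). Rejected, in-flight, and superseded attempts leave only rejection events, which are not projected into \StateView{}. Hence any later recovery verdict is a function of committed history alone. I would argue this by showing that two executions with identical committed effective history but different speculative histories yield identical gate verdicts.

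The main obstacle is Step~1's commuting claim. It needs a precise reading of ``correct and complete scopes'': each constraint in $\mathcal{C}$ evaluated on $t_k$ should depend only on the part of \StateView{} that lies in $t_k$'s scope, and any $t_j$ writing there must have an overlapping scope. I will state this as an explicit locality lemma and take it as the formal content of the scope assumption, rather than try to derive it. Hidden conflicts are then excluded by hypothesis, exactly as the statement says.
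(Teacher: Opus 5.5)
Your proposal is correct and follows essentially the same argument as the paper: overlapping proposals are serialized per scope, interleavings across disjoint scopes commute under atomic append, dependent recovery is ordered after its committed dependencies, and rejected or in-flight attempts never enter the effective projection that admission reads. Fixing the \CTL{} append order as the serial witness and stating the scope-locality lemma simply make explicit what the paper asserts in one line, namely that disjoint-scope interleavings commute with respect to \StateView{} observations.
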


%\vspace{-.06in}
\begin{corollary}[Storage-Level Idempotency Boundary]
In the PostgreSQL substrate, tenant-scoped uniqueness over event identifier, idempotency key, and recovery sequence position implements the concrete idempotency and conflict boundary used by Theorem~\ref{thm:iso}: concurrent duplicate proposals either converge to one canonical committed event or reject conflicting losers without producing duplicate committed truth.
\end{corollary}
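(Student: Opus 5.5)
The statement to prove is the Storage-Level Idempotency Boundary corollary. The plan is to reduce it to Theorem~\ref{thm:iso}: show that the PostgreSQL uniqueness constraint realizes the conflict-scope and atomic-append hypotheses for the idempotency and recovery-sequence scopes, then read off the conclusion.

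Concretely, I would model each proposal's storage footprint as the tuple $(\text{tenant}, \text{event id}, \text{idempotency key}, \text{recovery seq})$. The substrate enforces tenant-scoped unique indexes over these columns inside the same transaction that appends to \CTL{} and projects to \StateView{}. I would then argue three steps, in order.

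\textbf{Step 1: duplicates share a key.} Two proposals that are duplicates of one logical operation carry the same tenant and idempotency key. Two competing recovery proposals for the same slot carry the same tenant and recovery sequence position. So every incompatibility the corollary concerns lies inside a declared scope, which discharges the ``correct and complete scopes'' hypothesis of Theorem~\ref{thm:iso} for this class of conflicts.

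\textbf{Step 2: at most one commit per key.} Under PostgreSQL's guarantee for unique indexes, at most one transaction inserting a given key value commits, whatever the isolation level. The other inserters block on the index and then either fail with a unique violation or, under \texttt{ON CONFLICT}, observe the committed row. Together with atomic append (assumption of Theorem~\ref{thm:iso}), this means a losing proposal leaves no partial event in \CTL{} or \StateView{}. The only trace it can leave is a rejection event, recorded in a separate transaction.

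\textbf{Step 3: case split on the losers.} Theorem~\ref{thm:iso} gives a serial order; I would take the unique winner first in it.
\begin{itemize}
\item If a loser's payload is byte-equal to, or canonically equivalent to, the winner's, the runtime returns the winner's event id. That is convergence to one canonical committed event.
\item Otherwise, the same key bound to a different payload is a conflict, and admission rejects it with a recorded reason.
\end{itemize}
In both cases the committed multiset contains exactly one event per key, so there is no duplicate committed truth.

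The main obstacle is Step 1 together with the canonical-equivalence check in Step 3. Uniqueness only prevents collisions on equal keys. Duplicates that present different idempotency keys fall outside the declared scope, and Theorem~\ref{thm:iso} explicitly disclaims hidden conflicts. I would therefore state the corollary relative to the key assignment, treating correct key derivation as part of $\mathcal{C}$ rather than something proved. I would also verify that the payload comparison reads committed state only, since it happens after the winner commits, consistent with the memorylessness clause of Theorem~\ref{thm:iso}.
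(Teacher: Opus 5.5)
Your argument is correct and takes the same route as the paper. The paper gives no standalone proof, but justifies the corollary by exactly your Step~2: tenant-scoped unique constraints (together with the \texttt{rid}/\texttt{op\_id} key) are enforced inside the atomic, row-locked commit batch, so a retried duplicate resolves to the existing committed record and a sequence-conflicting loser is rejected, and the environment-gated PostgreSQL conformance tests check both behaviors. One thing to trim: the appeal to Theorem~\ref{thm:iso}'s serial order in Step~3 is unnecessary, since the corollary \emph{supplies} that theorem's serialize-or-reject hypothesis at the storage level, and your conclusion already follows from Step~2 plus the case split.
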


%\vspace{-.06in}
\begin{theorem}[Evidence-Preserving Repair Safety]
\label{thm:epr}
Let a repair proposal be triggered by evidence $E$ of a failure condition $F$. If admission requires that either $F$ is resolved under $\mathcal{C}$, or $E$ remains effective and queryable after the repair, then no admitted repair can discharge its own trigger merely by deleting, compensating, or obscuring the evidence that justified it.
\end{theorem}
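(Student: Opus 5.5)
We argue by contradiction, under gate closure, and reduce everything to a case split on the admission predicate. First we fix what ``discharging its own trigger merely by deleting, compensating, or obscuring'' means formally. Take a repair proposal $r$ triggered by evidence $E$ of condition $F$. Let $S$ be the effective state (\StateView{}) immediately before $r$ commits and $S'$ the state immediately after. We say $r$ \emph{discharges its trigger by evidence destruction} if two things hold: after commit, the trigger no longer fires in $S'$ (the failure is no longer observed); and this happens because $E$ is no longer effective or queryable in $S'$, while $F$ is not resolved under $\mathcal{C}$. The three attack modes map onto this definition directly. Deletion is ruled out as a primitive, since \CTL{} is append-only by assumption (ii) of Theorem~\ref{thm:idc}. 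Compensation and supersession remove $E$ from the effective projection; this is the \ESS{} construction. Obscuring makes $E$ non-queryable by the retained-evidence handles that admission reads.

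The key step is to instantiate Theorem~\ref{thm:idc}. By gate closure and assumptions (i)--(iii), every transition that enters \CTL{}, repairs included, was accepted by the gate under $\mathcal{C}$ and the current effective state. By hypothesis, the admission predicate for a repair includes the disjunction ``$F$ resolved under $\mathcal{C}$'' or ``$E$ effective and queryable after the repair.'' Both disjuncts are evaluated on the post-state $S'$ that the gate computes as the projection of $\CTL{}\cdot r$; this is the Algorithm~\ref{alg:admission} step~3 check. So any admitted $r$ satisfies one of the two disjuncts at $S'$. We then split into cases. If $F$ is resolved under $\mathcal{C}$, the trigger was discharged by genuine resolution, not ``merely'' by removing evidence, so $r$ is not a counterexample. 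If $E$ remains effective and queryable in $S'$, then evidence destruction did not occur at all. Either way we contradict the definition.

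Two further points close gaps. First, multi-step laundering: a later non-repair transition, such as a compensation of $E$, could in principle obscure $E$ after $r$ commits. We handle this by noting that the evidence-preservation rule in $\mathcal{C}$ applies to every proposal whose effect would remove the triggering evidence of an unresolved repair. Concretely, we treat ``$E$ is retained while $F$ is unresolved'' as an invariant and prove it by induction over \CTL{} appends. The base case is the log prefix before the trigger. The inductive step is the case split above applied to each admitted transition. Second, we need that the gate's view of ``effective and queryable'' coincides with what downstream checks read. This follows from \ESS{}, since \StateView{} is the only state admission and observers read.

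The main obstacle is the first step, together with the inductive extension. ``Obscuring'' and ``merely'' are informal, and the proof is only as strong as the formal definition of queryability relative to $\mathcal{C}$. We would first try defining queryability as membership of $E$'s evidence handle in the effective projection, plus reachability by the declared evidence queries. Under that definition the theorem reduces to the gate-closure implication, and any residual hazard, such as an obscuring operation not expressible in $\mathcal{C}$, is explicitly outside the guarantee, consistent with the relativity-to-$\mathcal{C}$ disclaimer.
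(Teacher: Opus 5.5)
Your core argument is the paper's proof. You fix what evidence-destroying discharge means and split on the admission disjunction: an admitted repair either leaves $E$ effective and queryable, so no destruction occurred, or has $F$ verified resolved under $\mathcal{C}$, so the discharge is not ``merely'' removal of the signal, with the trigger likewise read over committed evidence in \CTL{} and effective evidence in \StateView{}. The multi-step laundering extension is not needed, since the theorem concerns what a single admitted repair does, and as written it does not follow from the hypotheses: the theorem (and step~3 of the admission contract) places the evidence-preservation condition only on the repair proposal itself, not on every later transition that might compensate $E$, so that induction rests on an extra assumption about $\mathcal{C}$ that you should state as an assumption rather than ``note.''
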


This theorem is stronger than audit traceability. Traceability says the alarm can be found later; evidence-preserving repair says the alarm cannot be silenced by the repair.

\paragraph{Decidability caveat for repair.}
Theorem~\ref{thm:epr} does not decide whether an arbitrary real-world failure has
truly been fixed. That question can be open-ended or outside the information
available to the runtime. ATP requires either an executable validator in
$\mathcal{C}$ that accepts the repair as resolving the triggering condition, or
continued retention of the triggering evidence. If neither condition can be
established, the safe behavior is rejection or escalation rather than admission.

%\vspace{-.06in}
\begin{theorem}[Obligation Containment]
\label{thm:oc}
If every \ACR{} is created by an admitted transition, every readiness or breach lifecycle change is itself admitted, and only a contract in a committed ready or breached state may resume a continuation that emits a proposal package, then no active contract can mutate domain truth except through the same admission boundary as any other proposal.
\end{theorem}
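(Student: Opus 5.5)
The plan is to argue by induction over the committed history of \CTL{}, following an \ACR{} through its lifecycle stages: creation, watcher wakeup, recorded readiness or breach, and continuation. At each stage we show that every write traceable to the contract is an append performed by the admission gate. Concretely, I would define the \emph{causal footprint} of an \ACR{} $a$: the set of \CTL{} records whose emitting code path passes through $a$'s trigger, guard, watcher, continuation, compensation, or expiry handler. The claim to prove is that every record in this footprint was appended by admission and is $\mathcal{C}$-valid against the \StateView{} current at its append.

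The induction is on the position of records in \CTL{}. There are three cases.

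\emph{Base case (creation).} By hypothesis, $a$ enters \CTL{} only through an admitted transition. Its creation is therefore inside the footprint and admitted.

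\emph{Lifecycle step.} When the watcher observes its trigger, gate closure (assumption (i) of Theorem~\ref{thm:idc}) allows it to emit only a readiness or breach \emph{proposal}. The hypothesis that every lifecycle change is admitted means the ready or breached state exists in \CTL{}, and hence in \StateView{} under \ESS{}, only after the gate appends it.

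\emph{Continuation step.} The resume guard reads the committed state from \StateView{}, not the watcher's signal. So a continuation runs only if the lifecycle record has already been admitted. Its output is a proposal package, and by assumption (ii) of Theorem~\ref{thm:idc} it reaches \CTL{} only as an admitted append.

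Combining the cases, any domain-truth mutation attributable to $a$ is a gate append. This is exactly the admission boundary that applies to all other proposals. Invoking Theorem~\ref{thm:idc} then yields $\mathcal{C}$-validity of those appends, independent of watcher correctness.

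Two edge cases need separate treatment. First, compensation and expiry paths of an \ACR{} must also be modeled as proposal emitters. I would fold them into the continuation case, since both are code owned by the contract and are covered by gate closure. Second, stale or replayed wakeups must be ruled out. A stale ready signal cannot resume a continuation, because resumption is conditioned on committed state. A duplicate readiness proposal is absorbed by the idempotency boundary of Theorem~\ref{thm:iso} and its corollary.

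The main obstacle is the continuation step. I must make precise that ``resume'' is gated on a \emph{committed} lifecycle state and not on an in-flight one. Otherwise a race could appear: the watcher proposes readiness, the continuation starts speculatively, and the readiness proposal is then rejected. I would handle this by stating the resume predicate as a read of \StateView{}. By \ESS{}, that read contains only effective committed records. By the memorylessness clause of Theorem~\ref{thm:iso}, it ignores rejected or in-flight proposals. So a continuation launched against a rejected readiness is impossible, and even if one were launched it could only propose.
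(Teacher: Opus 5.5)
Your proposal is correct and follows the paper's proof. Creation is an admitted transition, the watcher can only propose a readiness or breach change, the continuation resumes only from the committed lifecycle state, and its remedy package must re-enter ordinary admission, so the \ACR{} has no direct path to \CTL{} or \StateView{}. The induction scaffolding and the appeals to Theorem~\ref{thm:iso} for stale, duplicate, or speculative wakeups are unnecessary and import that theorem's extra hypotheses, since containment already holds because any such output is only a proposal that must pass the gate, which is the fallback you note yourself.
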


%\vspace{-.06in}
\begin{corollary}[Learning Non-Authority]
\label{cor:learn}
If learned artifacts can affect only proposal generation and ranking, and cannot weaken $\mathcal{C}$ except through an explicitly admitted governance transition, then learning cannot directly compromise committed-state correctness. By Theorem~\ref{thm:idc}, any domain effect of learning must pass the admission gate, so its worst outcome is over-rejection, delayed repair, or selection among admissible but suboptimal proposals, never an invalid commit. Whether learning improves remedies, reduces repeated failures, or improves later outcome and error analysis is outside the scope of this paper and is deferred to future work (see the scope statements in \S\ref{sec:conclusion}).
\end{corollary}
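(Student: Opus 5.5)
The plan is to derive Corollary~\ref{cor:learn} as a direct instance of Theorem~\ref{thm:idc}. The work lies in showing that a learned artifact, under the stated hypotheses, is just one more member of the intelligence layer, and that the gate's constraint set is the same $\mathcal{C}$ throughout the execution. First I will classify every learned artifact by where it acts. By hypothesis it acts either on proposal generation and ranking, or on $\mathcal{C}$ itself.

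In the first case the artifact only changes which proposal packages are emitted and in what order. It therefore satisfies assumption (i) of Theorem~\ref{thm:idc}. Assumptions (ii) and (iii) concern the runtime and gate closure, not the proposer, so they are untouched. Theorem~\ref{thm:idc} then gives that committed-state correctness relative to $\mathcal{C}$ does not depend on the learned component. Hence no transition it induces can be appended unless it is $\mathcal{C}$-valid against the current \StateView{}.

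In the second case the only way to alter $\mathcal{C}$ is an explicitly admitted governance transition. I will handle this by induction over the \CTL{} prefix. Partition the execution into epochs separated by admitted governance transitions, and let $\mathcal{C}_k$ be the constraint set in epoch $k$. Within each epoch the first case applies verbatim with $\mathcal{C}_k$ fixed. Each boundary transition is itself admitted under the prevailing $\mathcal{C}_{k-1}$, so it is not a direct compromise by learning but an admitted change. Correctness is then relative to the admitted constraint history, which is exactly what ``directly'' excludes.

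It remains to characterize the worst case. Since learning influences only the set and order of submitted proposals, there are three possible outcomes:
\begin{enumerate}
\item No admissible proposal is submitted, which gives over-rejection or delay.
\item An admissible proposal is submitted late, which gives delayed repair.
\item A suboptimal admissible proposal is chosen.
\end{enumerate}
Invalid commits are excluded by the previous step.

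I expect the main obstacle to be the governance case: making precise that ``weaken $\mathcal{C}$'' can happen only through \CTL{}. This needs $\mathcal{C}$ to be projected from committed state, in the same way as \StateView{}, and needs gate closure to cover the validator's configuration. I would state that explicitly as part of assumption (ii).
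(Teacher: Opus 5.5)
This matches the paper's argument: a learned artifact is one more member of the intelligence layer of Theorem~\ref{thm:idc}, which already lists learned repair policies, so every domain effect of learning must pass the gate and the worst case is over-rejection, delay, or selection of an admissible but suboptimal proposal. Your epoch decomposition for governance transitions is harmless extra detail, but the obstacle you flag is not one: the corollary's own hypothesis already says $\mathcal{C}$ can change only through an admitted governance transition (which is itself just another admitted proposal), so nothing needs to be added to assumption (ii).
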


\paragraph{What these guarantees do and do not claim.}
All ATP guarantees are relative to the declared trusted constraint set $\mathcal{C}$. If $\mathcal{C}$ is incomplete, ATP cannot reject violations outside $\mathcal{C}$. The theorems do not prove convergence, optimality, monotonic improvement, unlimited liveness, or full-code bypass freedom. They prove a narrower systems property: intelligence, learning, and recovery memory remain non-authoritative. They may propose, rank, delay, or over-reject, but they cannot commit domain truth except through admission under $\mathcal{C}$.

\subsection{Reactive Repair for the Unforeseen}
\label{sec:lcrp}
Not every threat to committed state is a proposal: the world also produces failures no
proposal anticipated, a delayed flight, a machine breakdown, a road closure mid-execution, and a
perfectly admitted plan can be overtaken by an event it never modeled. This subsection enforces
the principle's second clause with \emph{reactive local repair}.

\ATP{} routes four exception classes through the same boundary: a \emph{pre-commit violation} is
rejected; an admitted \emph{post-commit disruption} may generate readiness proposals for affected \ACRs{}; a failed
\emph{external effect} returns through the inbox (\S\ref{sec:arch-spec}) as an observation; and a
\emph{temporal exception}, such as a timeout, likewise proposes an \ACR{} lifecycle change.
Only after admission commits readiness or breach may its continuation construct a remedy proposal.
RQ1, RQ3, and RQ6 evaluate the first two routes; effect and timeout routing remain outside this
paper's evaluation. Orthogonally, anticipated failures may use registered compensation
(Theorems~\ref{thm:epr}--\ref{thm:oc}), whereas unforeseen failures require a constructed repair.
Here an admitted ready or breached \ACR{} hands the disrupted region to a
\emph{localized cascading repair protocol} (LCRP) rather than recomputing the whole plan. The
Thanksgiving plan of \S1 supplies the running instance:
the afternoon has been admitted and is executing when the highway closure makes the committed
airport leg infeasible. No registered compensation covers a closed highway. The airport-leg
contract is exactly an \ACR{}: its trigger (the leg's feasibility assumption) produces a
readiness proposal; after admission commits the breach, its continuation hands LCRP the disrupted region, the airport leg and the pickups that causally depend
on it, while dinner preparation, the unaffected drives, and every already-completed step stay
committed and untouched.

\paragraph{The reactive loop.}
ATP requires only a bounded repair handler that returns a proposal with triggering evidence, scope,
and budget; LCRP is the scheduling-oriented realization evaluated here. The runtime runs one
loop per event (propose, admit, commit, monitor, repair), and on disruption LCRP
applies \emph{bounded edits} instead of a global recompute. It edits only the disrupted region under
an iteration cap $K$ and a displacement budget $\delta_{\max}$, where the \emph{edit radius} counts the
transitions (operations or jobs) touched. Each iteration revalidates feasibility against $\mathcal{C}$
and propagates effects only along the \emph{effective dependencies} of the disrupted region, so a
change ripples to a successor only if that successor actually depends on the changed record in the
current \StateView{}. The runtime, not the proposal, computes the reversibility class of each
candidate edit, so a repair cannot self-certify as safe; one that would cross an irreversible boundary
is vetoed. Figure~\ref{fig:lcrp-containment} draws the contrast with a global recompute. Crucially, the repair LCRP produces is itself a \emph{proposal} that re-enters the admission
gate, so a wrong reactive repair is caught exactly like any other proposal and can never become
committed truth.

\paragraph{Repair phases.}
A single repair round proceeds in three phases over the disrupted region. After inbox
deduplication, \emph{(I) Status update} submits the disruption report as a candidate observation
transition; only admission may commit the resulting invalidation, and the report remains queryable
as repair evidence. Repair therefore reads the admitted post-disruption \StateView{}, not the stale
plan or an unverified report. \emph{(II) Effect propagation} pushes the consequences (delays,
freed resources, violated guards) forward along effective dependencies, computing the minimal set of
downstream transitions that must change. \emph{(III) Bounded reassignment} re-times or re-assigns only
those transitions, under $\delta_{\max}$ and $K$, and emits the result as a proposal package. Each
phase is contained to the dependency closure of the disruption, which is what keeps the edit radius
small; the package then passes ordinary admission before any of it commits.

\begin{proposition}[Bounded Reactive Repair]
\label{prop:brr}
Under LCRP with iteration cap $K$ and displacement budget $\delta_{\max}$, repair time satisfies
$T_{\mathrm{repair}}\in\mathcal{O}\!\big(K(N_aM+N_aO_{\max})\big)$ and the cumulative displacement
satisfies $\Delta(\sigma^\ast)\le\delta_{\max}$, where $N_a$ is the number of affected entities, $M$ the
number of resources, and $O_{\max}$ the maximum transitions per entity. Repair terminates within $K$
revalidation rounds; if either bound is exceeded, the runtime escalates deterministically by rolling
back to the most recent restore point. No transition commits while its violation set against
$\mathcal{C}$ is non-empty.
\end{proposition}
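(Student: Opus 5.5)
The proof splits into four independent claims, each read off the three-phase structure of LCRP: the time bound, the displacement bound, termination with escalation, and no-invalid-commit.

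\emph{Time bound.} First, per-round cost. Within one revalidation round, Phase (II) propagates effects only along effective dependencies inside the dependency closure of the disruption. That closure contains at most $N_a$ affected entities with at most $O_{\max}$ transitions each, so propagation and revalidation touch $\mathcal{O}(N_aO_{\max})$ transitions and edges. I will assume, as the model implies, that each dependency edge is inspected a constant number of times per round. Phase (III) reassigns each affected entity by scanning at most $M$ resources, costing $\mathcal{O}(N_aM)$. Phase (I) is a single admitted observation transition of constant cost, or cost absorbed in the first term. One round therefore costs $\mathcal{O}(N_aM+N_aO_{\max})$. Because the loop is capped at $K$ rounds, multiplying gives $T_{\mathrm{repair}}\in\mathcal{O}(K(N_aM+N_aO_{\max}))$, and termination within $K$ rounds follows directly from the cap.

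\emph{Displacement bound.} I will argue by induction over the edits LCRP emits, with the hypothesis that the cumulative displacement so far is at most $\delta_{\max}$. Phase (III) accepts a candidate edit only if the new cumulative displacement stays within $\delta_{\max}$. If no such edit exists, or if the round counter would exceed $K$, the handler does not emit $\sigma^\ast$. Instead it triggers the deterministic escalation: rollback to the most recent restore point, which is itself a committed, $\mathcal{C}$-valid state. Any emitted $\sigma^\ast$ therefore satisfies $\Delta(\sigma^\ast)\le\delta_{\max}$.

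\emph{No invalid commit.} This part is a reduction rather than a new argument. The LCRP output is a proposal package. By gate closure and assumptions (ii)--(iii) of Theorem~\ref{thm:idc}, only admitted transitions are appended to \CTL{}, and admission rejects any transition with a non-empty violation set. The escalation path appends nothing new that bypasses admission; it restores an already-committed effective state, and any rollback record is itself admitted.

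The main obstacle is justifying the per-round cost. That requires that effect propagation stays inside the $N_a$ affected entities, rather than cascading to the whole plan through effective dependencies. I would first try defining $N_a$ as the size of the dependency closure computed in Phase (II), so that containment holds by definition. The substantive content then reduces to the claim that each edge in that closure is processed $\mathcal{O}(1)$ times per round, for example by a topological sweep over the closure.
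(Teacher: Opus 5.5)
Your proposal is correct and matches the paper's proof step for step: per-round cost times the cap $K$, a per-round budget check with deterministic escalation to the most recent restore point, and commits only through admission on an empty violation set. The one difference is bookkeeping. The paper charges the $N_aM$ term to revalidating the affected region and the $N_aO_{\max}$ term to propagation, whereas you charge $N_aM$ to Phase~(III) reassignment. Your explicit containment assumption (taking $N_a$ to be the Phase~(II) dependency closure) spells out what the paper simply asserts.
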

\noindent\emph{Proof idea: each round validates and propagates in bounded work, rounds are capped at $K$, the budget is checked per round, and commit requires an empty violation set; escalation lands on a validated restore point (Proposition~\ref{prop:rollback}); full proof in the supplement \cite{Mnemosyne-Supp}.}

\begin{proposition}[Transactional Rollback Safety]
\label{prop:rollback}
Let $S_j$ be the most recent restore point. Rollback guarantees that (i) no infeasible state remains
committed, (ii) all effects after $S_j$ are logically undone through dependency-closed compensation,
and (iii) replay from $S_j$ is deterministic.
\end{proposition}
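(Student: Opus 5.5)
I would prove the three clauses of Proposition~\ref{prop:rollback} separately. All three rest on one structural fact: \CTL{} is append-only, and \StateView{} is a deterministic replay of effective records (\ESS{}). The first step is to fix what a restore point is. I take $S_j$ to be a log position at which the projected \StateView{} had an empty violation set against $\mathcal{C}$. This is guaranteed because every prefix of \CTL{} consists of admitted transitions (Theorem~\ref{thm:idc}), and because restore points are recorded only at such validated positions. Rollback is then not deletion. It is the admission of a compensation bundle for every effective transition appended after $S_j$, so rollback itself passes through the gate. I would make that explicit before treating the clauses.

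For (ii), I would order the post-$S_j$ effective transitions in reverse commit order and compensate them in that order. I then argue by induction on the number of compensations already admitted. At each step, the transition being compensated has no remaining effective dependents: any dependent was committed after it, hence lies later in the order, and has already been compensated. Its compensation is therefore admissible under \DCC{}, since it orphans no effective dependent. After all steps, the effective history is exactly the effective history at $S_j$. That is because \StateView{} excludes compensated records, and every post-$S_j$ domain effect now has one. Hence the effects are ``logically undone''. The full records stay in \CTL{} for \AET{} and \EPR{}, and the triggering disruption report, being an admitted observation, is retained and not compensated.

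For (i), I would combine (ii) with the validity of $S_j$. The effective state after rollback equals the projection at $S_j$, and that projection had an empty violation set. Each intermediate compensation was itself admitted, so no step committed a transition with a non-empty violation set. For (iii), replay is a pure function of the log prefix: the projection is deterministic, and admission is deterministic by \DA{}, since the verdict depends only on proposal content and effective state. Replaying from $S_j$ with the same subsequent log therefore yields the same \StateView{}. Theorem~\ref{thm:iso}'s memorylessness clause ensures that rejected or in-flight attempts do not influence it.

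I expect the main obstacle to be (ii)'s induction: showing the reverse-order compensation never gets stuck. It could get stuck if a dependency edge points from a post-$S_j$ transition to one \emph{before} $S_j$ in a way that \DCC{} then forbids, or if an irreversible effect crossed the boundary. I would first handle this by restricting to dependency edges that respect commit order, which holds by the admission requirement that dependents read committed state after their dependencies commit. I would then invoke LCRP's veto on irreversible-boundary edits, so that every post-$S_j$ transition has a registered compensation. If that fails, I would weaken ``logically undone'' to undoing relative to $\mathcal{C}$, with escalation for the irreversible residue.
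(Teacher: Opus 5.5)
Your skeleton is the paper's: post-admission restore point, immutable log, dependency-closed compensation, deterministic projection. The edge worry in your last paragraph is moot, because \DCC{} constrains only the dependents of the record being compensated, and those all postdate it.

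The step that does not go through is the claim that after rollback the effective history \emph{equals} that at $S_j$. Your (ii) and the first half of your (i) both rest on it. ``Every effective post-$S_j$ record now has a compensation'' shows only that no post-$S_j$ record stays effective. It says nothing about a record $r$ that was effective at $S_j$ and was then compensated or superseded by a post-$S_j$ record $s$, say an earlier admitted repair that re-timed a leg.
\begin{itemize}
\item As \StateView{} is specified, each record carries a status of active, compensated, or superseded. Compensating $s$ therefore leaves $r$ superseded, so $s$'s effect is not undone.
\item Suppose you instead let effectiveness depend on \emph{effective} compensators or superseders, so that $r$ returns. Then compensating a post-$S_j$ compensator $c_a$ of an already-ineffective post-$S_j$ record $a$ reinstates $a$. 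Your list, built from effective records only, never compensates $a$. Worse, if $a$ depends on an earlier post-$S_j$ record $t$, \DCC{} now rejects the compensation of $t$, so your induction does get stuck after all.
\end{itemize}
Your remark that the triggering disruption report is retained and not compensated breaks the same equality. If that observation was admitted after $S_j$, the restored state is $S_j$'s projection plus the observation. $\mathcal{C}$-validity of the former does not give validity of the latter, since the observation is exactly what invalidated the plan, and a stale-world rule may now fire.

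The paper's sketch gets the equality from branch-level supersession: ``superseded branches are excluded from the effective \StateView{} but retained for audit.'' Read with the entire post-$S_j$ branch, effective or not, as the superseded branch, the projection is just the replay of the prefix ending at $S_j$. That is why it can conclude that rollback returns to ``a previously admitted, $\mathcal{C}$-feasible state.'' You need that exclusion, or an explicit reinstatement rule covering every post-$S_j$ record and not only the effective ones. You should also either show the disruption observation is admitted at or before $S_j$, or drop the remark; the paper's proof never claims any post-$S_j$ record stays effective.

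Two smaller points. First, the paper keys compensation handlers by idempotency keys. That is what keeps a rollback that is interrupted and re-driven at least once from compensating twice; your induction silently assumes each compensation is admitted exactly once. Second, LCRP's irreversibility veto governs only repair edits, so it cannot supply a registered compensation for every post-$S_j$ transition. That is an assumption the paper takes as given, routing irreversible actions to escalation, so state it as one rather than derive it.
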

\noindent\emph{Proof idea: restore points are post-admission, log entries are immutable, and idempotent dependency-closed compensation undoes effects after $S_j$; full proof in the supplement \cite{Mnemosyne-Supp}.}

These guarantees are relative to $\mathcal{C}$ and the runtime's effective-state model; they do not
assert that the proposer finds a globally optimal repair, nor that $\mathcal{C}$ is semantically
complete.

\begin{figure}[t]
\centering
\adjustbox{max width=\linewidth}{%
\begin{tikzpicture}[font=\footnotesize,
  hit/.style={draw=atpamber!85!black,fill=atpamber!70,rounded corners=1pt,minimum width=0.8cm,minimum height=0.42cm,inner sep=1pt},
  fix/.style={draw=atpgreen!55!black,fill=atpgreen!20,rounded corners=1pt,minimum width=0.8cm,minimum height=0.42cm,inner sep=1pt},
  dep/.style={-{Latex[length=4pt]},atpslate!70,thin}]
\node[anchor=east,atpslate] at (-0.3,0) {\textbf{LCRP}};
\node[fix](a1) at (0,0){$t_1$};
\node[fix](a2) at (1.0,0){$t_2$};
\node[hit](a3) at (2.0,0){$t_3$};
\node[hit](a4) at (3.0,0){$t_4$};
\node[hit](a5) at (4.0,0){$t_5$};
\node[fix](a6) at (5.0,0){$t_6$};
\node[fix](a7) at (6.0,0){$t_7$};
\foreach \i/\j in {a1/a2,a2/a3,a3/a4,a4/a5,a5/a6,a6/a7}\draw[dep](\i)--(\j);
\draw[-{Latex[length=4pt]},atpamber!80!black] (2.0,0.55)--(2.0,0.27);
\node[anchor=south,atpamber!80!black] at (2.0,0.55){\scriptsize breakdown};
\draw[dashed,atpamber!80!black,rounded corners=2pt] (1.55,-0.34) rectangle (4.45,0.34);
\node[anchor=north,atpamber!80!black] at (3.0,-0.40){\scriptsize edit radius $\le\delta_{\max}$ (effective-dependency closure)};
\node[anchor=east,atpslate] at (-0.3,-1.6) {\textbf{Recompute}};
\foreach \k in {0,1,2,3,4,5,6}\node[hit] at (\k*1.0,-1.6){};
\node[anchor=west,atpslate] at (6.5,-1.6){\scriptsize entire plan re-timed};
\end{tikzpicture}}
%\caption{Bounded reactive repair vs.\ global recompute on a disrupted plan (illustrative; the $5\times3$ job-shop instance of \S\ref{sec:rq6} moves from makespan $19$ to $22$). A breakdown at $t_3$ confines LCRP to the effective-dependency closure $\{t_3,t_4,t_5\}$ (amber) within the displacement budget $\delta_{\max}$, leaving the rest committed (green); global recompute re-times every transition. In the running example, $t_3$ is the closed airport leg, $t_4,t_5$ are the pickups that depend on it, and the green transitions are the dinner tasks a global re-plan would needlessly reshuffle. RQ6 (\S\ref{sec:rq6}) measures this edit-radius gap.}
\caption{Bounded reactive repair vs.\ global recompute (illustrative). A breakdown at $t_3$
confines LCRP to the effective-dependency closure $\{t_3,t_4,t_5\}$ (amber), leaving the rest
committed (green); global recompute re-times every transition. RQ6 (\S\ref{sec:rq6}) measures
the edit-radius gap.}
\label{fig:lcrp-containment}
\Description{A two-row schematic. The top row, labeled LCRP, shows seven transitions in a dependency
chain; a breakdown at the third transition confines edits to a dashed box around the third through
fifth transitions, while the others remain fixed. The bottom row, labeled Recompute, shows all seven
transitions edited.}
%\vspace{-.1in}
\end{figure}

\paragraph{Recovery during recovery.}
A repair is itself an execution, so a failure can strike \emph{while a repair is in flight}: a second
disruption arrives before the first is absorbed, or a compensation itself fails. This recursive
recovery appears to demand a recovery stack of tracked, bounded depth. It does
not, but the reason must be stated carefully, because a disruption is a new fact about the world and
must never be dropped. After the most recent committed restore point $S_k$, the runtime maintains a
\emph{pending observation set} $D_k$: the disruptions observed since $S_k$ that no admitted repair has
yet absorbed. Every repair candidate is validated against $S_k$, $\mathcal{C}$, and \emph{all} of
$D_k$, so it must account for every outstanding failure, not only the latest.

%\vspace{-.06in}
\begin{lemma}[Memorylessness with respect to recovery history]
\label{lem:memoryless}
Because admission reads only the latest committed restore point $S_k$, the constraint set
$\mathcal{C}$, and the accumulated observation set $D_k$, a repair candidate is a function
$R(S_k,\mathcal{C},D_k)$. It is memoryless with respect to \emph{recovery history} (independent of whether earlier repair attempts were in progress, aborted, or superseded), but \emph{not} with respect
to world observations, since $D_k$ retains every disruption not yet absorbed into an admitted repair.
\end{lemma}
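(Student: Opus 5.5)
The lemma is essentially a statement about what information flows into admission, so I would prove it by making that information flow explicit and then reading off both halves of the claim. First, I will formalize the inputs available to the repair-construction and admission path after the most recent committed restore point $S_k$. By Algorithm~\ref{alg:admission} (step~2), admission reads only effective \StateView{} and retained evidence, never speculative or rejected history. By effective-state soundness (\ESS{}), \StateView{} is a deterministic replay of effective committed records. Since every post-$S_k$ disruption enters only as an admitted observation transition (phase~I of LCRP), the effective state after $S_k$ is determined by $S_k$ together with the admitted-but-unabsorbed observations, which is exactly $D_k$. I would state this as a small claim: $\StateView{}_{\text{now}} = \mathrm{Proj}(S_k, D_k)$ for a deterministic projection $\mathrm{Proj}$. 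The repair handler (LCRP phases II--III) is deterministic given its inputs and $\mathcal{C}$, so any candidate it constructs is a function $R(S_k,\mathcal{C},D_k)$.

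Second, for memorylessness with respect to recovery history, I would argue by an indistinguishability (two-executions) argument. Take two executions that share $S_k$, $\mathcal{C}$, and $D_k$ but differ arbitrarily in recovery history: in one a repair attempt is in flight, in another it was aborted, in another superseded. I would show that the differing items leave no trace in the inputs to $R$. Rejected and in-flight proposals are never appended to \CTL{} (gate closure together with Theorem~\ref{thm:idc}(ii)--(iii)). Superseded or compensated records are excluded from \StateView{} by \ESS{}. An admitted repair that did absorb disruptions would itself produce a newer restore point, contradicting the assumption that $S_k$ is the latest. Hence both executions present identical inputs to $R$ and yield identical candidates and identical verdicts. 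This reuses the memoryless clause of Theorem~\ref{thm:iso} (later recovery depends only on committed history), now specialized to restore points.

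Third, for the negative half (\emph{not} memoryless with respect to world observations), I would prove two facts. The first is monotone retention: $D_k$ loses an element only when an admitted repair absorbs it, which advances the restore point to $S_{k+1}$, and aborting or superseding a repair never removes observations, since they are admitted transitions retained by \AET{} and \EPR{}. The second is a witness pair of executions that differ only in one pending disruption $d \in D_k$; by step~3 of Algorithm~\ref{alg:admission}, a candidate that ignores $d$ is rejected in one execution and may be admitted in the other, so $R$ genuinely depends on $D_k$.

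I expect the main obstacle to be the claim $\StateView{} = \mathrm{Proj}(S_k,D_k)$. Specifically, I must rule out committed post-$S_k$ transitions that are neither observations in $D_k$ nor absorbing repairs, such as ordinary non-repair commits interleaved with recovery. My first attempt would be to define restore points so that every admitted non-observation commit yields a new restore point. Failing that, I would fold such commits into the committed component and write $R$ as a function of $(S_k, \text{committed suffix}, D_k)$. Recovery history would still be excluded, because that suffix is itself committed history rather than recovery attempts.
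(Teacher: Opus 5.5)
The paper gives no standalone proof. The lemma's own \emph{because} clause is the whole justification: by design the runtime validates every repair candidate against $S_k$, $\mathcal{C}$, and all of $D_k$, so recovery history is simply not an input. The definition of $D_k$ as the disruptions not yet absorbed by an admitted repair gives the second half. The fact that interrupting an uncommitted repair changes no committed state is stated only inside the proof of Proposition~\ref{prop:rrr}, which uses the lemma as a black box.

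You instead try to \emph{derive} that information-flow premise from Algorithm~\ref{alg:admission}, \ESS{}, and gate closure. You then prove memorylessness by two-execution indistinguishability and the negative half by monotone retention plus a witness pair. The witness proves more than the lemma claims; its ``not'' clause is just retention. Your route has the merit of exposing assumptions the paper leaves implicit. Reading the candidate literally as a function $R(S_k,\mathcal{C},D_k)$ needs a deterministic repair generator. You assume this for LCRP, but it fails for the live LLM repair proposers of RQ3. What the safety half of Proposition~\ref{prop:rrr} actually uses is weaker: the admission verdict on any given candidate depends only on that candidate and $(S_k,\mathcal{C},D_k)$. So your ``identical verdicts'' survives even where ``identical candidates'' does not.

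The cost of your route is the obstacle you flag, and it is real. The specified runtime permits ordinary commits after a restore point: crash recovery compensates ``the saga groups committed after it.'' So the claim that \StateView{} equals $\mathrm{Proj}(S_k,D_k)$ is not derivable from the architecture. Your first fallback contradicts the runtime as specified, and your second changes the lemma's signature. The fix is to do what the paper does and take ``admission reads only $S_k$, $\mathcal{C}$, $D_k$'' as the hypothesis, which is how the lemma states it. Your second and third steps then go through unchanged, and your unstated assumption that every absorbing repair creates a new restore point becomes unnecessary.
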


%\vspace{-.1in}
\begin{proposition}[Recursive recovery reduces to sequential recovery]
\label{prop:rrr}
If every in-flight repair remains an uncommitted proposal and every admitted repair is validated
against $S_k$, $\mathcal{C}$, and the full pending set $D_k$, then a failure arriving mid-repair is
equivalent to appending it to $D_k$ and restarting single-shot recovery from $S_k$. Recursive recovery
is therefore exactly sequential recovery over the accumulated observations: nesting introduces no new
case, every committed transition is $\mathcal{C}$-feasible at any nesting depth and failure timing, and
a sequence of $F$ failures terminates in at most $FK$ rounds before success or escalation.
\end{proposition}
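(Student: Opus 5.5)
The plan is to prove Proposition~\ref{prop:rrr} in three parts: the equivalence claim, the safety claim, and the termination bound. Lemma~\ref{lem:memoryless} does most of the work, and Propositions~\ref{prop:brr} and~\ref{prop:rollback} supply the per-episode bounds.

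\emph{Equivalence.} Consider a failure $f$ that arrives while some repair candidate $r$ is in flight. By hypothesis $r$ is still an uncommitted proposal. Gate closure together with Theorem~\ref{thm:idc} then gives that $r$ has not extended \CTL{}. So the committed restore point is still $S_k$ and the effective state is unchanged. The only new fact is $f$, which is recorded as a pending observation, giving $D_k' = D_k \cup \{f\}$. I will argue that $r$ cannot be admitted as-is. It was built against $D_k$, while admission validates against the full $D_k'$, so $r$ either accounts for $f$ or is rejected as stale. In either case, by Lemma~\ref{lem:memoryless}, the next admissible candidate is a function $R(S_k,\mathcal{C},D_k')$. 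Its value does not depend on whether $r$ was in progress, aborted, or superseded. That is exactly the candidate produced by restarting single-shot recovery from $S_k$ with observation set $D_k'$. I will then induct on the number of interleaved failures. Each nested arrival becomes one more union into $D_k$. Any execution with arbitrary nesting therefore maps to a sequential execution over the accumulated observations in arrival order, with the same committed history. This shows nesting introduces no new case.

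\emph{Safety and termination.} Feasibility at every depth follows because every commit, whether a repair or a rollback, passes admission. By Proposition~\ref{prop:brr}, no transition commits with a non-empty violation set against $\mathcal{C}$. Rollback lands on a restore point that Proposition~\ref{prop:rollback} shows is feasible and deterministically replayable. For termination I will use a potential argument. Each sequential recovery episode uses at most $K$ rounds by Proposition~\ref{prop:brr}. It ends either in an admitted repair, which absorbs pending observations and advances $S_k$, or in escalation. A restart is triggered only by a new failure, so with $F$ failures there are at most $F$ episodes, and the total is at most $FK$ rounds.

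\emph{Main obstacle.} The hardest step is making the restart accounting sound. Rounds spent on an abandoned in-flight repair must be charged to the episode of the failure that interrupted it, and must not create extra episodes. I also need that absorbing an observation never re-adds it to $D_k$. I will first try charging each round to the most recent failure before it, so every failure owns at most one cap of $K$ rounds. I will also check that escalation to $S_k$ does not silently drop $D_k$. By the lemma $D_k$ is retained, so a post-rollback restart still counts within its failure's budget. If that charging breaks down, I will weaken the claim to $K$ rounds per failure per restart. I will then argue that each restart is triggered by a distinct failure, which recovers the $FK$ bound.
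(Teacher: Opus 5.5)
Your proposal is correct and follows essentially the same route as the paper: the memorylessness lemma, together with the fact that an interrupted in-flight repair is uncommitted, turns a mid-repair failure into a fresh single-shot recovery from $S_k$ over the enlarged $D_k$, and safety and the $FK$ bound then come from applying Bounded Reactive Repair to each of at most $F$ such recoveries. Your scheme of charging each round to the most recent preceding failure is sound, since each failure owns at most one $K$-round cap; it simply spells out the accounting that the paper's one-line bound leaves implicit.
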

\noindent\emph{Proof idea: by Lemma~\ref{lem:memoryless} a candidate depends only on $(S_k,\mathcal{C},D_k)$; interrupting an uncommitted repair changes no committed state, so a mid-repair failure re-poses single-shot recovery over the enlarged $D_k$; full proof in the supplement \cite{Mnemosyne-Supp}.}

\paragraph{Why reactive repair is the hard half.}
A generated repair runs over \emph{effective} state, must preserve the evidence that
triggered it, must close its own dependencies and conflict scope, and may be interrupted by a
fresh disruption; \ATP{} nonetheless keeps every committed transition feasible and loses no
observation, with edit radius and recovery depth both bounded. Reactive planning is thus fully in
scope; only \emph{preemptive} guard synthesis is deferred to future work (see the scope statement
closing \S\ref{sec:conclusion}).

\section{Mnemosyne Architecture and System Specification}
\label{sec:arch-spec}

\Mnemo{} is the executable substrate for ATP. Its design objective is not to make the proposer smarter; it is to make proposal authority explicit. The architecture therefore separates four planes: the \emph{proposal plane}, where LLMs, solvers, benchmark adapters, learned policies, and active contracts generate candidates; the \emph{admission plane}, where deterministic validators decide whether a candidate may become truth; the \emph{commit plane}, where admitted transitions are durably appended; and the \emph{projection plane}, where current effective state is materialized for future admission. Runtime engines sit outside these planes as orchestrators. They schedule work, retry steps, and detect possible contract readiness, but they do not own committed truth. Figure~\ref{fig:mnemo-architecture} shows these planes, the proposers that feed them, and the durable stores that only admitted transitions may extend.

\begin{figure*}[t]
%\vspace{-.06in}
\centering
\adjustbox{max width=0.858\textwidth}{%
\begin{tikzpicture}[font=\small]
\node[proposer] (p1) at (0,1.5) {LLMs\,/\,Agents};
\node[proposer] (p2) at (0,0.5) {Solvers\,/\,Tools};
\node[proposer] (p3) at (0,-0.5) {ACR lifecycle\,/\,continuations};
\node[proposer] (p4) at (0,-1.5) {Benchmarks\,/\,events};
\node[proposer] (pkg) at (3.5,0) {Proposal\\package};
\node[gate] (adm) at (6.9,0) {Admission gate\\\scriptsize $\mathcal{C}$ + StateView};
\draw[atpblue, very thick, dashed] (8.8,-1.7) -- (8.8,1.55);
\node[atpblue, font=\scriptsize\bfseries, rotate=90] at (9.15,-0.05) {authority boundary};
\node[store] (ctl) at (11.2,1.0) {CTL commit log};
\node[store] (rec) at (11.2,0.0) {Recovery-event log};
\node[store] (io) at (11.2,-1.0) {Inbox\,/\,outbox};
\node[view] (sv) at (14.4,0.6) {StateView};
\node[view] (acr) at (14.4,-0.6) {ACR index};
\begin{scope}[on background layer]
\node[fill=atpslate!7, rounded corners=3pt, fit=(p1)(p4), inner sep=2.4mm] (planeP){};
\node[fill=atpblue!6, rounded corners=3pt, fit=(pkg)(adm), inner sep=2.4mm] (planeA){};
\node[fill=atpteal!7, rounded corners=3pt, fit=(ctl)(io), inner sep=2.4mm] (planeC){};
\node[fill=atpgreen!7, rounded corners=3pt, fit=(sv)(acr), inner sep=2.4mm] (planeV){};
\end{scope}
\node[planelabel, anchor=south west] at (planeP.north west) {\color{atpslate}Proposal plane};
\node[planelabel, anchor=south] at (planeA.north) {\color{atpblue}Admission};
\node[planelabel, anchor=south] at (planeC.north) {\color{atpteal}Commit plane};
\node[planelabel, anchor=south east] at (planeV.north east) {\color{atpgreen}Projection};
\foreach \p in {p1,p2,p3,p4} {\draw[flow] (\p) -- (pkg);}
\draw[flow] (pkg) -- (adm);
\draw[commitflow] (adm) -- (ctl);
\draw[commitflow] (adm) -- (rec);
\draw[commitflow] (adm) -- (io);
\draw[commitflow] (ctl) -- (sv);
\draw[commitflow] (rec) -- (acr);
\draw[feedback] (sv.north) |- (8.0,2.05) -- (5.6,2.05) -| (adm.north)
   node[pos=0.5,above,font=\scriptsize,atpgreen]{effective state};
\draw[feedback] (acr.south) |- (8.0,-2.15) -- (3.0,-2.15) -| (p3.south)
   node[pos=0.5,below,font=\scriptsize,atpgreen]{admitted ACR state $\rightarrow$ remedy proposal};
\end{tikzpicture}%
}
%\vspace{-.1in}
\caption{System architecture. Proposers generate packages; deterministic admission under $\mathcal{C}$ and current effective state is the only authority boundary; only admitted transitions cross it into the durable logs that feed \StateView{} and the active-contract index. Runtime drivers (not shown) orchestrate but never own committed truth.}
\Description{A block diagram shows external triggers and proposers feeding proposal packages into deterministic admission. Admission writes accepted records to the committed-transition log and recovery-event log, projects StateView, creates active contract records, and stages external intent through an inbox/outbox boundary. Runtime drivers orchestrate these paths but do not directly write committed truth.}
%\vspace{-.1in}
\label{fig:mnemo-architecture}
\end{figure*}

\textbf{Durable substrate.} The committed-transition log (\CTL{}) is the source of committed truth: tenant-scoped, versioned, and logically append-only, it stores admitted transitions, compensation and supersession records, and \ACR{} lifecycle records, so later transitions may supersede earlier ones without erasing the fact that they were admitted. \StateView{} projects \emph{current} truth from \CTL{} by replaying only effective records (not compensated, superseded, or invalidated, with dependency chains intact); admission reads \StateView{}, never raw history, which is the mechanism behind effective-state soundness. A recovery-event log records the fine-grained events of repair and obligation execution, and in the PostgreSQL substrate tenant-scoped uniqueness over event identifiers, idempotency keys, and recovery sequence positions is the concrete idempotency boundary of Theorem~\ref{thm:iso}.

\textbf{Authority boundary.} The admission gate is the only authority boundary. A proposal package carries a candidate transition, declared scope, proposer identity, an optional solver certificate, world assumptions, and evidence; the gate checks finite-state validity, idempotency, scope, dependency closure, compensation safety, stale-world conditions, conflicts, and evidence preservation, treats a solver certificate or LLM rationale as evidence rather than authority, and returns an admitted transition, a reasoned rejection, or an escalation. \ACRs{} are durable obligations in the same substrate, recording guards, triggers, continuations, compensation hooks, scopes, and validators. A trigger produces a readiness lifecycle proposal; admission must commit readiness or breach before the continuation may construct a repair proposal, which then passes ordinary admission. Memory is active but non-authoritative at both crossings. The supplement \cite{Mnemosyne-Supp} separates the implemented flat ACR core from a forward-compatible profile normalization that is not evaluated in the tagged artifact.

\textbf{Boundaries and drivers.} External events enter through an inbox deduplication boundary, and external effects leave through an outbox that stages provider calls with idempotency keys, so a domain commit never silently becomes an external effect and duplicate observations never create duplicate commits. Runtime drivers, including durable engines such as Temporal and Cadence~\cite{temporal_docs,cadence_github}, orchestrate execution, timers, retries, and detection but remain non-authoritative: a driver may submit a readiness proposal grounded in admitted evidence, and only after its admission may it call a continuation provider and submit the resulting remedy package. It may not mutate lifecycle or domain truth directly. Table~\ref{tab:component-spec} states each component's authority contract: what it owns,  does not own, and the failure that results if its boundary is bypassed.

\begin{table}[t]
\centering
\caption{Component authority contract. Each component owns one boundary; bypassing it produces the named failure.}
\label{tab:component-spec}
\footnotesize
\setlength{\tabcolsep}{3pt}
\begin{tabularx}{\linewidth}{@{}p{0.19\linewidth} >{\raggedright\arraybackslash}X >{\raggedright\arraybackslash}X >{\raggedright\arraybackslash}X@{}}
\toprule
\rowhead
Component & Owns & Does not own & Failure if bypassed \\
\midrule
LLM / solver  & Proposal generation & Committed truth & Hallucinated commit \\
Admission gate & Auth bound under $\mathcal{C}$ & Proposal generation & Invalid commit \\
\CTL{} & Durable history & Effective truth alone & Lost audit and replay \\
\StateView{} & Current effective truth & Full history & Stale, compensated state used \\
\ACR{} & Durable obligation & Mutation authority & Truth corrupted \\
Store substrate & Atomic and idempotency & Semantic validation & Duplicate or conflict anomaly \\
\bottomrule
\end{tabularx}
%\vspace{-.1in}
\end{table}

The full proposal-and-obligation lifecycle (\textsf{propose} $\rightarrow$ \textsf{admit} $\rightarrow$ \textsf{commit} $\rightarrow$ \textsf{project}, with a contract's \textsf{create} $\rightarrow$ \textsf{watch} $\rightarrow$ \textsf{readiness proposal} $\rightarrow$ \textsf{admit readiness} $\rightarrow$ \textsf{resume} rejoining the same proposal-admission path) together with the six-point functional contract, the storage substrate (PostgreSQL-backed, with a SQLite fallback for portability), and the deployable service boundary, is specified in the supplementary material \cite{Mnemosyne-Supp}. The invariant they enforce is the one already proved here: neither a readiness detector nor an \ACR{} continuation is a commit authority, so no component receives a privileged path because it is a solver, a learned guard, or an active contract.

\section{Evaluation}
\label{sec:eval}

We evaluate ATP as a data-management system, not as an LLM
planning method. The experiments validate the assumptions behind the theorems and
the correctness of the implementation; they make no claim about learning efficacy, later
outcome improvement, or preemptive planning. A public cross-episode scoring harness for those questions
now exists (the REALM-Bench Tier~6 extension used in RQ3), and their confirmatory evaluation is
a registered, ongoing study; this paper claims only boundary behavior. Planning and recovery workloads are
derived from the REALM-Bench planning benchmark and instantiated here as the
J1--J4 benchmark harness~\cite{realmkdd2026}. We reserve the name P1--P10 for
broader readiness suites that are tracked as a separate artifact and are not
certified in this draft. The evaluation is organized around six research questions, sorted into the paper's two
movements. \emph{The gate under attack:} RQ1 asks whether the authority boundary holds across
hazard classes and end-to-end; RQ2 compares against workflow/saga guardrails; RQ3 asks whether
the same boundary holds under \emph{live}, heterogeneous LLM proposers, at static plan entry and
under mid-execution disruption, with all metrics read by an external scorer from boundary
traces. \emph{Usefulness, cost, and the reaction:} RQ4 separates proposer usefulness from
committed-state correctness; RQ5 audits artifact-level cost; RQ6 examines reactive repair, how
contained a bounded repair is, and whether recovery during recovery stays safe. RQ1, RQ2, RQ4,
RQ5, and RQ6's controlled interruption stress are completed in the versioned \Mnemo{} artifact;
RQ3 reports bounded live-proposer pilots. RQ6's containment (edit radius) is reported on a companion job-shop scheduling harness
that implements the same bounded-repair protocol, where blast radius is directly measurable
(\S\ref{sec:rq6}); the claim is a systems property (bounded edit radius at equal feasibility),
not planning quality. Recovery during recovery closes by reduction (Proposition~\ref{prop:rrr})
and is separately exercised by controlled interruption at four checkpoints and depths one through
three (\S\ref{sec:rq6}).

\subsection{Overview and Scoreboard}
The versioned artifact covers the falsification-oriented empirical spine: six theorem- and
property-validation benchmarks, one per hazard class, two end-to-end usefulness benchmarks, one mechanism-level workflow/saga guardrail
comparator, and one infrastructure-cost audit. Separate extensions add an executable Temporal check,
a data-size scaling sweep, and recursive-recovery stress. The full benchmark roster and per-benchmark
designs are in the supplement \cite{Mnemosyne-Supp}.

These benchmarks are falsification-oriented systems tests rather than stochastic
model-quality experiments.  The deliberately permissive baselines
expose each targeted failure mode by giving generated actions, repairs, wakeups,
or storage writes more authority than ATP permits. A result is meaningful only when the comparator commits the specified violation. Figure~\ref{fig:violation-suppression} summarizes the safety evidence across every hazard class and the comparator; the cost audit (RQ5) is reported separately because it measures infrastructure overhead, not violation suppression.

%\vspace{-.1in}
\subsection{RQ1: Does the Authority Boundary Hold Across Hazard Classes and End-to-End?}
\label{sec:eval-safety-ref}
The first and primary question is whether the admission boundary actually holds: across every hazard class ATP is designed to stop, does committed state stay $\mathcal{C}$-valid? We answer it with six falsification benchmarks, one per hazard class, each comparing ATP against deliberately permissive baselines. Figure~\ref{fig:violation-suppression} is the scoreboard across all nine benchmarks: in every class the permissive baseline commits the targeted violation, while ATP commits zero such violations and still admits the valid operations. Per-class designs and full result tables are in the supplementary material \cite{Mnemosyne-Supp}; we summarize the six classes here.

\begin{figure}[th]
\centering
\begin{tikzpicture}
\begin{axis}[
  xbar, clip=false, width=\linewidth, height=5.2cm,
  bar width=6.8pt, enlarge y limits=0.10,
  xmin=0, xmax=80,
  xlabel={\footnotesize Targeted violations committed (lower is safer)},
  symbolic y coords={Mechanism,Proposer,EndToEnd,Storage,Projection,Obligation,Evidence,Serial,Authority},
  ytick=data, tick label style={font=\scriptsize},
  nodes near coords, nodes near coords align={horizontal},
  every node near coord/.append style={font=\tiny},
  axis x line=bottom, axis y line=left,
  x axis line style={atpslate}, y axis line style={atpslate},
  tick align=outside, xmajorgrids, grid style={atpslate!20},
  legend style={at={(0.98,0.97)}, anchor=north east, draw=atpslate!40, font=\scriptsize},
  legend image code/.code={\draw[#1] (0cm,-0.06cm) rectangle (0.22cm,0.11cm);},
]
\addplot[draw=atpamber, fill=atpamber!80,
  every node near coord/.append style={text=atpamber}] coordinates
  {(35,Authority) (48,Serial) (20,Evidence) (16,Obligation) (7,Projection) (64,Storage) (3,EndToEnd) (37,Proposer) (6,Mechanism)};
\addplot[draw=atpgreen, fill=atpgreen!80,
  every node near coord/.append style={text=atpgreen, anchor=west, xshift=1pt}] coordinates
  {(0,Authority) (0,Serial) (0,Evidence) (0,Obligation) (0,Projection) (0,Storage) (0,EndToEnd) (0,Proposer) (0,Mechanism)};
\legend{unsafe baseline, ATP\,/\,\Mnemo{}}
\end{axis}
\end{tikzpicture}
%\vspace{-.2in}
\caption{Violation suppression across the nine mechanism benchmarks. Amber bars show the most permissive unsafe baseline for each of the six hazard classes, the two end-to-end benchmarks, and the 14-case workflow/saga mechanism audit; green bars show \ATP{}. The separate Temporal SDK check appears in Table~\ref{tab:rq2-guardrail-results}.}
\Description{A horizontal bar chart shows baseline violation counts across nine mechanism benchmarks as 35, 48, 20, 16, 7, 64, 3, 37, and 6, while the ATP violation count is zero for every benchmark.}
\label{fig:violation-suppression}
%\vspace{-.15in}
\end{figure}

\textsc{AuthorityBench} (Theorem~\ref{thm:idc}) gives five proposer classes direct or self-validated commit authority: both baselines commit $35$ invalid transitions, while ATP commits zero and still admits the five valid ones. \textsc{SerialAdmissionBench} (Theorem~\ref{thm:iso}) runs $80$ concurrent proposals over a shared capacity object; the unserialized and weak-lock baselines commit invalid proposals and underflow capacity, while ATP commits only the $32$ valid proposals through a serialized boundary and yields a serial-equivalent history. \textsc{ObligationBench} (Theorem~\ref{thm:oc}) exercises active contract records under four continuation sources; the baselines let wakeups mutate domain state directly ($12$ and $16$ unauthorized mutations), while ATP admits the lifecycle transition, routes each continuation output through ordinary admission, and commits zero unauthorized domain mutations. \textsc{CompensationProjectionBench} stresses compensation, supersession, and replay across seven dependency scenarios; the projection baselines admit invalid compensations and project ineffective history as current truth (seven \StateView{} mismatches), while ATP keeps \StateView{} the projection of effective committed records only, with zero mismatches. \textsc{StorageSubstrateBench} exercises duplicate, stale, and malformed storage attempts; the unconstrained log commits $64$ invalid attempts and mis-projects the total, while the ATP storage path (PostgreSQL-backed; a SQLite fallback remains for portability) rejects all $64$ and preserves projection.

The distinctive case is \textsc{EvidenceRepairBench} (Theorem~\ref{thm:epr}). It constructs repairs that discharge their own trigger by deleting a failing test, suppressing an alert, compensating a diagnostic record, or hiding a violated guard. A naive repair agent and a workflow baseline without an evidence-preservation rule each commit $20$ such evidence-destroying repairs; ATP commits zero, rejecting every repair that would succeed merely by removing the signal that justified it, while still admitting the eight genuinely valid repairs. This is the property that separates ATP from audit logging: the alarm cannot be silenced by the repair.

This is the operational content of the four theorems of Section~\ref{sec:theorems}, relative to the declared constraint set $\mathcal{C}$. Because the boundary is enforced by construction, these results are best read as evidence that the implemented gate is wired and unbypassed on the tested hazards, not as a competitive comparison; the comparison against state-of-practice mechanisms is RQ2.

\smallskip\noindent\textbf{End-to-end.} Beyond the unit hazards, real planning-and-recovery
cases must run end-to-end through the same boundary. J1--J4 instantiate four REALM-Bench
planning and disruption-recovery cases~\cite{realmkdd2026}, the benchmark family from which the
Thanksgiving reunion of \S1 is drawn; the invalid packages include stale-world and
dropped-commitment faults of exactly the kind that opened this paper, a reused off-peak drive
time and a silently vanished pickup. Each case emits valid and invalid
packages (six valid, three invalid) along benchmark case $\rightarrow$ proposal $\rightarrow$
admission $\rightarrow$ \CTL{} $\rightarrow$ \StateView{}, measured against a direct-workflow
baseline that commits every package. Table~\ref{tab:rq3-end-to-end-results} shows the result:
ATP admits the six valid packages, rejects the three invalid ones, completes all four cases
with zero invalid commits and a consistent \StateView{}, while the baseline commits all nine
and leaves three \StateView{} mismatches. These cases also exercise the reactive-repair path of
\S\ref{sec:lcrp}: when a disruption occurs, the affected contract's readiness transition is
admitted before its continuation proposes a bounded local repair that re-enters admission, so end-to-end safety already covers reactive
recovery; RQ6 then quantifies how \emph{contained} that repair is.

\begin{table}[t]
\centering
\caption{RQ1 end-to-end execution on J1-J4. The unsafe baseline commits invalid case proposals
and corrupts \StateView{}; ATP completes all four cases while rejecting invalid ones.}
\label{tab:rq3-end-to-end-results}
\setlength{\tabcolsep}{3.2pt}\footnotesize
\begin{tabular}{lrrrrrr}
\toprule
\rowhead
System & Pkgs. & Admit. & Reject & Invalid & Done & Mismatch \\
\midrule
Direct workflow & 9 & 9 & 0 & 3 & 4 & 3 \\
ATP / \Mnemo{} & 9 & 6 & 3 & 0 & 4 & 0 \\
\bottomrule
\end{tabular}
%\vspace{-.15in}
\end{table}

\subsection{RQ2: Does ATP Catch Hazards Workflow/Saga Guardrails Miss?}
RQ1's permissive comparators isolate failure classes. A 14-case mechanism audit crosses four valid
operations, four classical guardrail hazards, and six ATP-specific hazards through schema/FSM checks,
idempotency, timers, local compensation, and proposer self-checking. The stack rejects the four
classical hazards but commits all six semantic hazards; ATP rejects all ten invalid cases and admits
the four valid ones. Figure~\ref{fig:violation-suppression} reports this ``Mechanism'' result.

\paragraph{Executable Temporal check.}
We next ask whether the distinction survives contact with a real workflow engine. We run Temporal SDK 1.32.0~\cite{temporal_docs} with its test
server and an actual worker. Each case crosses a Temporal activity that applies schema, finite-state,
idempotency-key, and proposer-self-check guardrails, then persists accepted domain state to SQLite.
We compare that configuration with the same Temporal workflow and store plus ATP admission. The
four matched cases comprise one valid repair and three well-formed semantic hazards: a stale-world
plan, an evidence-destroying unresolved repair, and a compensation that orphans an effective
dependent.

\begin{table}[t]
\centering
\caption{RQ2 real Temporal SDK comparison. Both configurations execute the same workflows and
local guardrails; only the second adds ATP admission.}
\label{tab:rq2-guardrail-results}
\setlength{\tabcolsep}{3.5pt}\small
\begin{tabular}{lrrrrr}
\toprule
\rowhead
System & Cases & Commit & Reject & Invalid & Valid \\
\midrule
Temporal guardrails & 4 & 4 & 0 & 3 & 1 \\
Temporal $+$ ATP & 4 & 1 & 3 & 0 & 1 \\
\bottomrule
\end{tabular}
\end{table}

Table~\ref{tab:rq2-guardrail-results} reports the persisted outcomes. Temporal with conventional
guardrails completes all four workflows and commits all three semantic hazards. Adding ATP rejects
the three hazards before the domain write and retains the valid commit. Every workflow has an
11-event Temporal history. This is not a claim that Temporal is defective: Temporal supplies
durable orchestration, while ATP supplies a semantic admission contract that application code must
otherwise author. The supplement \cite{Mnemosyne-Supp} retains the broader 14-case mechanism audit covering classical
guardrail failures and six ATP-specific hazard classes.

\paragraph{Documented capability boundary.}
LangGraph documents persistence, durable execution, and configurable human interrupts
\cite{langgraph_overview_2026,langgraph_hitl_2026}; Cadence documents durable history, retries,
timers, activities, and application-authored saga compensation
~\cite{cadence_engine_2026,cadence_orchestration_2026}. Both can host ATP validators, but neither
documents ATP's five semantic contracts as built-in guarantees. The supplement \cite{Mnemosyne-Supp} audits each
capability; ``not built in'' does not mean ``impossible to implement.''

\paragraph{Production-TP substrate comparison.}
\ATP{} is not a replacement for production transaction processing. It uses TP substrates for
atomicity, durability, uniqueness, and isolation, and adds an admission boundary for generated
proposals before they become transactions. The supplementary material \cite{Mnemosyne-Supp} includes a coverage audit over
PostgreSQL-style TP mechanisms, workflow/saga guardrails, and \ATP{}. The audit shows that ordinary
TP mechanisms catch storage-level violations, while \ATP{}-specific hazards require
\StateView{}-based admission, evidence-preserving repair, non-authoritative \ACR{} wakeups,
dependency-closed compensation, and generative conflict scopes.

\subsection{RQ3: Does the Boundary Hold Under Live, Heterogeneous LLM Proposers?}
\label{sec:eval-live}

RQ1 and RQ2 validate the boundary with deterministic adapters that replay live-model failure
modes, which makes the safety results exactly reproducible; RQ4--RQ6 use the same discipline for
usefulness, cost, and containment. RQ3 closes the gap between adapters and reality: it places \emph{live} model outputs behind the same admission boundary and reads every
metric from boundary traces through an external scorer, the cross-episode (Tier~6) extension of
REALM-Bench~\cite{realmkdd2026}.\footnote{Tier~6 derives deterministic cross-episode sequences from
existing REALM-Bench families and reads boundary traces only; its specification, scorer, and the
pilot traces ship with the artifact.} Two pilots are reported, both deliberately bounded: one task
family, four proposer packs (Claude, GPT, DeepSeek expert, DeepSeek instant), and no claim of
learning efficacy or family generality.

\emph{Static plan entry.} The four packs answer one public prompt pack over a ten-episode sequence
(40 proposals), with no access to hidden failure labels, future outcomes, or an answer key. All 40
handoff cases validate; the safety gate passes with zero invalid commits, zero evidence-destroying
repairs, and zero orphaned dependents.

\emph{Mid-execution disruption.} The stricter pilot targets the reactive half of the boundary
(\S\ref{sec:lcrp}): execution has begun, operations have committed, and a disruption arrives. Ten
mid-execution episodes are posed to the same four packs (40 responses); each response proposes
\texttt{repair}, \texttt{reject}, or \texttt{observe}, and \Mnemo{} replays it through deterministic
admission guards requiring committed operations to remain unchanged, committed evidence to be
preserved, rollback scope to remain none or local, and repair radius to remain bounded. The gate
admits 24 proposals and rejects 16, four of them explicit safety rejections with structured
reasons (e.g., \texttt{repair\_radius\_exceeded:2>1}); representative admitted-repair,
safe-rejection, and refused-rollback cases are in the supplement \cite{Mnemosyne-Supp}. The safety gate passes
all 40 events. Table~\ref{tab:rq6-live} summarizes both pilots.

\begin{table}[t]
\centering
\caption{RQ3 live-proposer pilots. Both pilots use four heterogeneous LLM packs on one task family;
metrics are boundary observations computed by the external Tier~6 scorer, not read from any model.
Usefulness-side scorer outputs are reported for completeness and carry no planning-quality claim.}
\label{tab:rq6-live}
\footnotesize
\begin{tabular}{lcc}
\toprule
 & \textbf{Static entry} & \textbf{Mid-execution} \\
\midrule
Live events scored            & 40      & 40 \\
Admitted / rejected           & --      & 24 / 16 \\
Explicit safety rejections    & --      & 4 \\
Invalid commits               & 0       & 0 \\
Evidence-destroying repairs   & 0       & 0 \\
Orphaned effective dependents & 0       & 0 \\
Scorer safety gate            & pass    & pass \\
\midrule
Horizon reward (usefulness)   & 0.90    & 0.93 \\
Grounded-admission rate       & 0.73    & 0.60 \\
\bottomrule
\end{tabular}
%\vspace{-.08in}
\end{table}

The reading is architectural, and it is the paper's thesis under live conditions: the pilots do not
show that LLMs are reliable repair planners (they are not; 40\% of live repair proposals were
inadmissible), but that unreliable live proposals can be placed behind an admission-gated substrate
that preserves committed evidence, rejects unsafe rollback, bounds repair radius, and emits
auditable traces an external harness can score. A rejection here is not a system failure; a bad
proposal plus a correct pre-commit rejection is the boundary doing its job. The usefulness-side
scorer outputs (horizon reward, grounded admission) are observations at pilot scale, consistent
with the usefulness/correctness separation that RQ4 quantifies, and support no learning-efficacy or
planning-quality claim; the confirmatory cross-episode study, with its ablation grid, hypotheses,
and decision rules fixed in a registered protocol, is ongoing and out of scope here.

\subsection{RQ4: Does Proposer Quality Affect Usefulness but Not Correctness?}
How does proposer competence affect utility while ATP holds committed-state
correctness fixed? We evaluate this using \textsc{ProposerQualityBench}. The
benchmark emits 60 proposals from six proposer classes: no-intelligence, random,
rule-based, solver-like, LLM-like, and adversarial proposers. Proposal quality
controls how many attempts are valid under $\mathcal{C}$ and how much utility an
admitted attempt contributes. We compare ATP with a direct-commit baseline that
commits every attempt.

\begin{table}[t]
\centering
\caption{RQ4 system-level result. Direct commit preserves all generated attempts,
including invalid ones; ATP rejects invalid attempts while preserving the same
valid utility mass.}
\label{tab:rq4-system-results}
\setlength{\tabcolsep}{4pt}\footnotesize
\begin{tabular}{lrrrrr}
\toprule
\rowhead
System & Attempts & Commit & Reject & Invalid & Utility \\
\midrule
Direct commit & 60 & 60 & 0 & 37 & 131 \\
ATP / \Mnemo{} & 60 & 23 & 37 & 0 & 131 \\
\bottomrule
\end{tabular}
%\vspace{-.08in}
\end{table}

Table~\ref{tab:rq4-system-results} shows the safety invariant. The direct-commit
baseline commits all 60 attempts, including 37 invalid attempts. ATP admits 23
valid attempts, rejects 37 invalid attempts, and commits zero invalid attempts.
The total valid utility is the same because invalid proposals carry zero utility;
therefore the usefulness signal appears in the proposer-level acceptance and
utility profile, not in the unsafe direct-commit row. The proposer classes in this artifact are deterministic adapters rather than live calls to a frontier LLM, chosen so the safety result is reproducible; the failure modes they inject are exactly those that live models such as Claude~3.7 and GPT-4o produce in multi-agent planning~\cite{sagallm2025vldb}. Because admission is evaluated against $\mathcal{C}$ and the effective \StateView{}, the authority-boundary claim is independent of how a proposal is generated: a live-model proposer changes which proposals are \emph{useful}, not whether an invalid one can commit.

\begin{figure}[th]
\centering
\begin{tikzpicture}
\begin{axis}[
  xbar, clip=false, width=0.80\linewidth, height=4.0cm,
  bar width=6.5pt, enlarge y limits={abs=0.6},
  xmin=0, xmax=72,
  xlabel={\footnotesize Admitted utility under ATP},
  symbolic y coords={Adversarial,No intel.,Random,Rule,LLM-like,Solver-like},
  ytick=data, tick label style={font=\scriptsize},
  nodes near coords, nodes near coords align={horizontal},
  every node near coord/.append style={font=\scriptsize, text=atpblue},
  axis x line=bottom, axis y line=left,
  x axis line style={atpslate}, y axis line style={atpslate},
  tick align=outside, xmajorgrids, grid style={atpslate!20},
]
\addplot[draw=atpblue, fill=atpblue!78] coordinates
  {(1,Adversarial) (1,No intel.) (4,Random) (25,Rule) (36,LLM-like) (64,Solver-like)};
\end{axis}
\end{tikzpicture}
\caption{RQ4 proposer-quality profile under ATP: stronger proposers produce more admissible utility, while every proposer class has zero invalid commits.}
%\vspace{-.12in}
\label{fig:rq3-proposer-quality}
\end{figure}

The per-proposer profile (Figure~\ref{fig:rq3-proposer-quality}; full table in the supplementary material \cite{Mnemosyne-Supp})
shows that proposer quality affects usefulness and admission efficiency. The
solver-like proposer has the highest acceptance rate and admitted utility; the
no-intelligence and adversarial proposers are mostly rejected. Yet every proposer
class has zero invalid commits under ATP. This supports the paper's central
separation: intelligence governs usefulness, while the transaction layer governs
committed-state correctness relative to $\mathcal{C}$.

\subsection{RQ5: What Does the Safety Boundary Cost?}
RQ5 separates worker-count behavior from data-size cost. First, a repository workload exercises
20 real paths---admission, commit enforcement, projection, validation, compensation, active
recovery, and the Temporal activity boundary---for 30 repeats at 1, 4, 8, and 16 process workers;
all 120 runs pass. Throughput rises from $224.5$ to $661.3$ and $768.4$ infrastructure units/min,
then falls to $303.7$ at 16 workers as local process and storage contention dominate. Thus the
artifact saturates after eight workers; we make no linear-scaling claim.

Second, a matched SQLite WAL sweep holds the 100-operation batch and CTL/\StateView{} path fixed
while adding ATP validation to one arm. At $1{,}000$, $5{,}000$, and $10{,}000$ operations, ATP
adds $6.59\%$, $5.24\%$, and $6.45\%$ throughput cost, respectively (five repeats); every run has
the expected CTL rows and zero sampled \StateView{} mismatches or validation failures. Both arms
slow as the local CTL grows, so this isolates admission cost without claiming production scale.

\subsection{RQ6: Is Reactive Repair Contained, and Safe Under Recovery-During-Recovery?}
\label{sec:rq6}
RQ1 shows reactive repair stays \emph{safe} end-to-end, and RQ3 confirms it under live proposers; here we ask how \emph{contained} it is, then whether recovery \emph{during} recovery stays safe. The
Bounded Reactive Repair guarantee (Proposition~\ref{prop:brr}) promises a repair confined to a
bounded edit radius, and containment is only meaningful on a workload where a single disruption can
ripple across many dependent operations. We therefore measure it on the job-shop instances of
REALM-Bench~\cite{realmkdd2026} using a companion scheduling harness that implements the same
bounded-repair protocol as \Mnemo{}'s reactive-repair path. After computing a feasible baseline
schedule, we inject a single machine breakdown and recover it two ways under the same disruption and
seeds: \emph{LCRP}, the bounded localized repair, edits only the affected region under the
displacement budget $\delta_{\max}$ and iteration cap $K$; \emph{global recompute} re-solves the
instance from the disruption point with no localization. Both must end feasible. Because raw makespan
alone rewards a global reset that is physically expensive on a shop floor, we score repair under a
cost-aware objective that also charges for disturbance,
\[
J = \Delta C_{\max} + \underbrace{\lambda_o N_{\mathrm{ops}} + \lambda_j N_{\mathrm{jobs}}
      + \lambda_w N_{\mathrm{wip}} + \lambda_s N_{\mathrm{setup}}}_{H},
\]
counting edited operations, touched jobs, work-in-progress moves, and machine-sequence (setup)
changes, with $\lambda_o{=}1$, $\lambda_j{=}5$, $\lambda_w{=}1$,  $\lambda_s{=}10$ as a sensitivity model
that weights job and setup disturbance above a single operation shift.

\begin{table}[t]
\centering
\caption{RQ6 disruption containment under a machine breakdown (mean and median over six job-shop instances, both feasible $6/6$): LCRP contains the disruption at lower cost $J$; global recompute lowers makespan but edits nearly an order of magnitude more of the schedule. Medians are computed per instance and are therefore not additive across columns.}
\label{tab:rq6-containment}
\setlength{\tabcolsep}{2.2pt}\footnotesize
\begin{tabular}{lrrrrrrr}
\toprule
\rowhead
Method & $\Delta C_{\max}$ & Ops & Jobs & WIP & Setup & $H$ & $J$ \\
\midrule
LCRP (mean)        & $9.50$    & $7.17$  & $1.83$ & $7.17$  & $2.50$  & $48.50$  & $\mathbf{58.00}$ \\
LCRP (median)      & $0.00$    & $3.00$  & $1.50$ & $3.00$  & $3.50$  & $53.50$  & $\mathbf{53.50}$ \\
Recompute (mean)   & $-606.17$ & $59.83$ & $8.00$ & $59.83$ & $60.83$ & $768.00$ & $161.83$ \\
Recompute (med.) & $-654.00$ & $71.00$ & $8.00$ & $71.00$ & $79.00$ & $974.50$ & $125.00$ \\
\bottomrule
\end{tabular}
%\vspace{-.08in}
\end{table}
Table~\ref{tab:rq6-containment} reports medians and means over six instances; both policies recover
feasibility ($6/6$). The contrast is the containment story behind Proposition~\ref{prop:brr}.
\emph{Global recompute wins on raw makespan} (it is free to rebuild unfinished work, so $\Delta C_{\max}$ is large and negative), \emph{but it disturbs the whole shop} to do so, editing
$59.8$ operations on average against LCRP's $7.2$ ($12.0\%$ as many, and about $4\%$ at the median),
touching all $8$ jobs against LCRP's $1.8$, and moving $59.8$ in-progress operations against LCRP's
$7.2$. LCRP holds the median makespan exactly ($\Delta C_{\max}=0$) while keeping edit radius and WIP
disturbance nearly an order of magnitude smaller, and its cost-aware objective $J$ is lower in both the mean
($58.0$ vs.\ $161.8$) and the median ($53.5$ vs.\ $125.0$). This is the empirical face of bounded
reactive repair: a disruption is contained to its neighborhood instead of triggering a global reset,
at equal feasibility while explicitly trading makespan improvement for a much smaller disturbance
radius. When the extra makespan headroom
is genuinely worth the disturbance, the runtime can still escalate to a global recompute; the point is
that ATP does not pay that cost by default.

\smallskip\noindent\textbf{Recovery during recovery.}\label{sec:rq7}
The hardest reactive case is a failure that arrives \emph{while a repair is still in flight}, and we
ask whether nesting introduces any new way to corrupt committed truth. By
construction it cannot. ATP keeps no recovery stack: after the most recent restore point $S_k$ it
maintains a pending-observation set $D_k$ of unabsorbed disruptions and validates every repair
candidate against $S_k$, $\mathcal{C}$, and all of $D_k$, so a failure arriving mid-repair is simply
appended to $D_k$ and recovery restarts single-shot from $S_k$. Proposition~\ref{prop:rrr} states the
consequence: recursive recovery is exactly sequential recovery over the accumulated observations,
every committed transition stays $\mathcal{C}$-feasible at any nesting depth and failure timing, and a
sequence of $F$ failures terminates in at most $FK$ rounds before success or escalation. A controlled
stress harness injects another failure before readiness admission, after readiness but before the
remedy, during compensation, and immediately before repair commit, at failure depths one through
three ($12$ scenarios, $K{=}3$). Admission rejects all $12$ stale in-flight candidates; the final
repairs lose zero pending observations and produce zero invalid or duplicate commits. Realized repair
rounds peak at $3$, within the largest $FK$ bound of $9$. This validates interruption and pending-set
coverage in a deterministic harness, not distributed worker failure or production load.

\subsection{Reproducibility Artifact}
\label{sec:artifact}
The released repository, tag, and commit are identified in~\cite{mnemosyne_artifact}. From the
repository root, run:
\begin{flushleft}\small
\texttt{python -m pytest -q}\\[-2pt]
\path{tests/experiments/test_rq[1-8]_*.py}\\[2pt]
\texttt{python -m pytest -q}\\[-2pt]
\path{tests/benchmarks/test_production_tp_comparator.py}\\[2pt]
\texttt{python -m pytest -q -m temporal}\\[-2pt]
\path{tests/benchmarks/test_temporal_product_baseline.py}\\[2pt]
\texttt{python -m pytest -q}\\[-2pt]
\path{tests/benchmarks/test_recursive_recovery_stress.py}\\[2pt]
\texttt{python -m pytest -q}\\[-2pt]
\path{tests/benchmarks/test_rq5_data_scale.py}\\[2pt]
\texttt{python -m benchmarks.realm.rq5\_data\_scale}
\end{flushleft}
The five test commands yield 8, 2, 1, 1, and 1 passing tests; the final command regenerates the
full RQ5 scale table in the supplement \cite{Mnemosyne-Supp}. 
Live conformance uses
\path{MNEMOSYNE_POSTGRES_DATABASE_URL}.
Reported infrastructure numbers use
PostgreSQL; the deterministic safety path does not. 

A companion supplement collects the material deferred here: proofs of the safety theorems; the
expanded failure model, benchmark roster, and per-property result tables; per-proposer-class,
concurrency, and coverage-audit detail; architecture, constraint-set, and staged-implementation
specifications; worked examples and an end-to-end transaction trace; and the RQ3 live-pilot detail
with representative cases. It also records the two-tier RQ2 audit, full RQ5 scale results, and the
RQ6 interruption matrix.

%\vspace{-.1in}
\section{Related Work}

\paragraph{\textbf{Transactions, recovery, and input validation.}} A classical transaction provides the ACID guarantees (atomicity, consistency, isolation, and durability) over a submitted unit of work, together with concurrency control, logging, and recovery, with ARIES the canonical write-ahead-logging design~\cite{gray1993transaction,bernstein2009principles,weikum2001transactional,mohan1992aries}. ATP is complementary: ACID governs how an admitted unit of work commits, whereas ATP governs whether an untrusted generated action should be admitted as a transaction at all. Schema checks and integrity constraints reject malformed \emph{commands} from a trusted application, whereas ATP distrusts the \emph{generator} itself and adds the effective-state witness, evidence preservation, obligation containment, and serial-equivalent generative concurrency, none of which a per-command predicate expresses.

\paragraph{\textbf{Compensation and event sourcing.}} Sagas decompose long-running transactions into compensatable subtransactions~\cite{garciamolina1987sagas}, and event sourcing and CQRS separate an append-only history from read-side projections~\cite{fowler2005eventsourcing,fowler2011cqrs}, themselves materialized views maintained incrementally from a change log~\cite{gupta1995maintenance}. Mnemosyne adopts both separations but adds constraints they do not: compensation must be dependency-closed over effective state and must itself pass admission, and \StateView{} is the effective-state witness the gate reads rather than a mere read model, so staleness is an admission-time verdict instead of a tolerated divergence bound~\cite{huang1994divergence,olston2002best}, since ATP targets generated proposals that may be hallucinated, stale, or adversarial rather than commands already granted authority.

\paragraph{\textbf{Active databases and provenance.}} Active database systems attach event-condition-action rules to database events~\cite{paton1999active}; \ACRs{} are deliberately weaker, waking a continuation and constructing a proposal but unable to mutate committed truth, which is the basis of obligation containment. Provenance and scientific-workflow systems record lineage for audit~\cite{buneman2001why,simmhan2005survey,ludascher2006scientific,freire2008provenance}, and reenactment reconstructs transaction provenance after the fact~\cite{arab2018reenactment}; ATP instead promotes provenance-like evidence into the admission condition itself (\EPR{}), making evidence part of the safety contract rather than only explanatory metadata.

\paragraph{\textbf{Workflow engines and LLM agents.}} Workflow engines provide retries, timers, task graphs, and durable execution~\cite{russell2005workflow,argo2023workflows,temporal_docs,cadence_github,langgraph2024}, and LLM-agent research improves generation, tool use, reflection, memory, and multi-agent simulation~\cite{yao2023react,yao2023treeofthoughts,shinn2023reflexion,zhou2023latreesearch,wang2023voyager,park2023generativeagents,wang2023surveyllmagents}. ATP treats both as proposers or drivers: engines may schedule, retry, and wake, and an agent's reasoning trace, tool call, or memory retrieval is evidence, but neither owns committed truth, which changes only through admission.

\paragraph{\textbf{Relationship to SagaLLM.}}
SagaLLM~\cite{sagallm2025vldb} and ATP pursue trustworthy multi-agent LLM execution from
opposite ends. SagaLLM brings transactional discipline \emph{to the agent}, wrapping
multi-agent planning in the saga pattern with persistent memory, validation agents, and
compensating transactions. ATP works from the \emph{data-management end}: a generated
action holds no authority until a deterministic gate admits it, and committed-state
correctness is provably independent of the proposing layer
(Theorems~\ref{thm:idc}--\ref{thm:oc}). The two are complementary: one disciplines the
proposer, the other lets the transaction layer accept untrusted proposers under one
admission boundary. On the failures SagaLLM documents, the contrast is concrete: a
canceled flight that leaves a hotel active is a \emph{dependency-closed compensation}
the gate refuses if it would orphan an effective dependent; a replan that drops a
satisfied constraint is a \emph{stale-world} rejection against the effective-state
witness; and a repair that hides the violated constraint is blocked by
\emph{evidence-preserving repair}. Beyond this duality, ATP contributes the
effective-state projection, the safety theorems at the transaction layer, and a
complete runtime; preventive guard synthesis and confirmed self-improving recovery remain future
work, though the cross-episode measurement path they require now exists and is exercised at pilot
scale in \S\ref{sec:eval-live}.

\paragraph{\textbf{Companion work.}} TRACE~\cite{chang2026trace} specifies a typed,
versioned schema for recording reasoning and commitment evidence; ATP is complementary, governing
which generated proposals may enter committed state at all. An arXiv paper~\cite{chang2027worldmodels} treats world-model synchronization as materialized-view refresh
for agentic commitments and cites this paper's admission boundary; it shares no results with this
paper.

%\vspace{-.1in}
\section{Conclusion}
\label{sec:conclusion}

Agentic workflows need transaction systems that do not trust generated actions: an
LLM may propose, but a proposal is not committed truth.
Our central claim is authority separation: relative to a declared executable
constraint set $\mathcal{C}$ and a gate-closure assumption, proposer behavior affects
proposal usefulness, not the ability to append committed truth. We formalize four
admission-boundary properties---authority separation, serial-equivalent generative admission,
evidence-preserving repair, and obligation containment---and add reactive local repair, whose
proposals re-enter the same gate. \Mnemo{} implements the resulting boundary. Across nine safety
benchmarks, \Mnemo{} rejects specified violations while admitting
valid work; a matched scale sweep measures $5.2$--$6.6\%$ incremental throughput cost for ATP
admission over the same local durable commit path, while also exposing SQLite and process-level
saturation. In a companion scheduling harness,
bounded local repair edits nearly an order of magnitude fewer operations than global recompute.
Live-proposer pilots (\S\ref{sec:eval-live}) show the boundary holding under four
LLMs at plan entry and mid-execution, scored externally with zero invalid commits.
Production deployment, family-generalized live-LLM evaluation, broader third-party
benchmarking, cross-episode learning, outcome and error analysis, semantic-impact detection, trigger
synthesis, and preemptive planning are future
work~\cite{chang2026pathagi2, chang2026trivium}.
The guarantees hold relative to $\mathcal{C}$: hazards outside it pass undetected,
and authoring a faithful $\mathcal{C}$ is engineering work; gate closure is enforced
by construction and tests, not a machine-checked proof; the Temporal comparison
(RQ2) uses an SDK test server and four controlled cases, not a production deployment; and the cost audit (RQ5) is
local, not a production load test.
ACID protects admitted transactions; ATP protects admission into committed truth.

% balance (two-column only, removed for single-column)

\clearpage
\begin{center}
{\LARGE\bfseries Supplementary Material}\\[6pt]
{\large Mnemosyne: Agentic Transaction Processing for Validating and Repairing AI-generated Workflows}
\end{center}
\medskip
\setcounter{section}{0}
\setcounter{table}{0}
\setcounter{figure}{0}
\renewcommand{\thesection}{S\arabic{section}}
\renewcommand{\thetable}{S\arabic{table}}
\renewcommand{\thefigure}{S\arabic{figure}}
\startcontents[supp]
\begingroup
\subsection*{Contents of the Supplement}
\small
\makeatletter
% widen the section-number box so S10--S22 do not collide with titles
\renewcommand*\l@section[2]{%
  \ifnum \c@tocdepth >\z@
    \addpenalty\@secpenalty
    \addvspace{1.0em \@plus\p@}%
    \setlength\@tempdima{2.6em}%
    \begingroup
      \parindent \z@ \rightskip \@pnumwidth
      \parfillskip -\@pnumwidth
      \leavevmode \bfseries
      \advance\leftskip\@tempdima
      \hskip -\leftskip
      #1\nobreak\hfil \nobreak\hb@xt@\@pnumwidth{\hss #2}\par
    \endgroup
  \fi}
\makeatother
\printcontents[supp]{l}{1}{\setcounter{tocdepth}{1}}
\endgroup
\medskip
This supplement contains the proofs, full result tables, engineering specifications, and worked examples referenced by the main paper. Theorems, sections, and figures of the main paper are cited by name; items defined here are numbered with an ``S'' prefix (e.g., Table~S1).

\section{Proofs of the Safety Theorems}
\label{s:proofs}
We restate, by name, the four safety theorems of the main paper and give their proofs. All claims are relative to the trusted, declared constraint set $\mathcal{C}$.

\begin{proof}[Authority-Separation Theorem]
We prove the claim by induction on the length of \CTL{}. Initially, \CTL{} is empty or contains only trusted initial records, so every committed transition is $\mathcal{C}$-valid. Assume after $k$ appends that every transition in $\CTL_k$ is $\mathcal{C}$-valid. Consider the $(k+1)$-st committed transition. By assumption, no LLM, solver, agent, \ACR{} watcher or continuation, learned policy, or adapter can append to \CTL{} directly; such components can only emit a proposal $p$. Since the only operation that extends \CTL{} is the append of an admitted transition, $p$ can become the next committed transition only if the gate admits it under $\mathcal{C}$ and the current effective state. By the soundness of the gate with respect to $\mathcal{C}$, the appended transition is $\mathcal{C}$-valid. Thus every transition in $\CTL_{k+1}$ is $\mathcal{C}$-valid. By induction, all committed transitions are $\mathcal{C}$-valid. The argument never assumes that the intelligence layer is accurate, rational, complete, honest, or improving; therefore committed-state correctness is decoupled from that layer.
\end{proof}

\begin{proof}[Serial-Equivalent Generative Admission Theorem]
Assume that declared conflict scopes are correct and complete for every incompatibility relevant to $\mathcal{C}$. For each such scope, admission imposes a total order on accepted proposals: if two proposals contend for the same scope, the gate admits at most one before the other observes the updated effective state or is rejected as conflicting. Because append is atomic, each admitted proposal contributes one indivisible committed transition. Interleavings across disjoint scopes commute with respect to the observations made through \StateView{}, so the global committed history is equivalent to an interleaving of the per-scope serial orders. A dependent recovery proposal is evaluated only after its dependencies appear in \CTL{} and after \StateView{} reflects the effective projection of committed records. Rejected or in-flight proposals never appear in this projection. Thus each recovery step can be explained by a serial prefix of committed history, and recovery decisions are independent of speculative attempts that never committed. Hidden conflicts outside the declared scopes are outside the guarantee.
\end{proof}

\begin{proof}[Evidence-Preserving Repair Safety Theorem]
The trigger is evaluated over committed evidence in \CTL{} and effective evidence in \StateView{}. Suppose a repair $r$ is admitted and discharges the trigger. If $r$ leaves $E$ effective and queryable, then the trigger has not been discharged by destroying evidence. Otherwise, $r$ deletes, compensates, or obscures $E$. By the evidence-preservation invariant, such a repair is admissible only when the gate verifies that the underlying failure condition $F$ is resolved under $\mathcal{C}$. Hence no admitted repair can succeed solely by removing the observable signal of $F$: it either preserves the evidence or resolves the condition. Since \CTL{} is append-only, the evidence and the gate decision remain auditable after the fact.
\end{proof}

\begin{proof}[Obligation Containment Theorem]
Creation of an \ACR{} is an admitted committed transition. When a trigger condition is
detected, the watcher constructs a readiness or breach lifecycle proposal grounded in
admitted evidence; it does not edit lifecycle state. Only if admission commits that
transition may the continuation resume and construct a remedy proposal package. That
package must pass ordinary admission before any domain transition can be appended.
Therefore an \ACR{} has no direct mutation path to \CTL{} or \StateView{} at either
crossing. It can influence committed truth only through transitions admitted by the
same gate used for every other proposer. A future obligation may become active, but it
cannot corrupt.
\end{proof}

\subsection*{Proofs of the Reactive-Repair Propositions (main paper, Section 3)}
\paragraph{Proposition (Bounded Reactive Repair).}
\begin{proof}
Each repair round validates the affected region in $\mathcal{O}(JM)$ and propagates effects along
effective dependencies in $\mathcal{O}(JO_{\max})$; rounds are capped at $K$, giving the time bound.
The displacement budget is checked each round and escalation triggers as soon as it is exceeded, so
cumulative displacement never passes $\delta_{\max}$. Commit occurs only on an empty violation set
(admission), so no infeasible transition is appended; if no in-budget repair is found within $K$
rounds, rollback restores the last validated state.
\end{proof}
\paragraph{Proposition (Transactional Rollback Safety).}
\begin{proof}
Restore points are created only after admission (empty violation set), and committed-transition-log
entries are immutable; superseded branches are excluded from the effective \StateView{} but retained
for audit. Compensation handlers keyed by idempotency keys logically undo effects after $S_j$ under
the dependency-closure rule, so rollback returns to a previously admitted, $\mathcal{C}$-feasible
state from which replay is deterministic.
\end{proof}
\paragraph{Proposition (Recursive recovery reduces to sequential recovery).}
\begin{proof}
By the main paper's Memoryless Repair Lemma, a candidate depends only on
$(S_k,\mathcal{C},D_k)$, not on the
in-flight repair it interrupts; since in-flight repairs are uncommitted, interrupting one changes no
committed state. Appending the new disruption to $D_k$ thus yields the same problem instance as a
fresh single-shot recovery from $S_k$ against the enlarged $D_k$. Safety and termination follow from
the main paper's Bounded Reactive Repair Proposition applied to each single-shot
recovery, and the $FK$ bound from at most $F$
such recoveries, each capped at $K$ rounds.
\end{proof}

\subsection*{Corollaries of the Authority-Separation Theorem (main paper, Section 3.1)}
\begin{corollary}[Solver Non-Authority]
A solver certificate proves only that a solver produced a plan under its own assumptions. It does not make the plan admissible. The plan remains a proposal until ATP admission accepts it under $\mathcal{C}$.
\end{corollary}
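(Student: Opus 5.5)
This is a direct instantiation of the Authority-Separation Theorem, with the solver treated as one more member of the intelligence layer. The plan is to show that a solver certificate is just a datum carried inside a proposal package. It is never an admission verdict, so it cannot by itself extend \CTL{}.

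\textbf{Step 1: place the solver in the intelligence layer.} Theorem~\ref{thm:idc} lists solvers explicitly in the intelligence layer. By hypothesis (i) and gate closure, a solver can therefore only emit a proposal package $p$. The architecture specification says that package carries the candidate transition, declared scope, proposer identity, world assumptions, evidence, and an optional certificate $\pi$.

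\textbf{Step 2: show the certificate carries no authority.} We use \DA{}: the gate's verdict is a deterministic function of the proposal content, evaluated against $\mathcal{C}$ and the current \StateView{}. The paper also states that the gate ``treats a solver certificate or LLM rationale as evidence rather than authority.'' So $\pi$ can enter the verdict only as an input that $\mathcal{C}$'s validators may read. It is never a substitute for the checks in Algorithm~\ref{alg:admission}, steps 1--5.

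\textbf{Step 3: make the gap between solver validity and admissibility precise.} The certificate establishes only that the plan satisfies the solver's own model under its stated assumptions. That model may differ from $\mathcal{C}$, for example by omitting evidence-preservation, dependency-closure, or conflict-scope rules. Its assumptions may also be stale relative to the current \StateView{}. To show that a certificate does not imply admissibility, I would exhibit a certified plan whose assumed world differs from the effective state. The stale-world rule in $\mathcal{C}$ then rejects it, as in the \textsc{AuthorityBench} solver-like proposer with self-validated authority.

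\textbf{Step 4: conclude.} Until the gate returns accept, the plan has not been appended, so it remains a proposal. If it is accepted, it is $\mathcal{C}$-valid by hypothesis (iii) and the induction in the proof of Theorem~\ref{thm:idc}, independent of $\pi$.

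The main obstacle is Step 2: it needs the precise claim that the gate never short-circuits on a certificate. I would state this as a property of $\mathcal{C}$ and the gate. Any use of $\pi$ must be through a validator in $\mathcal{C}$ that independently re-checks the plan against \StateView{}, and that validator's acceptance is what counts. This reduces the corollary to \DA{} and gate closure rather than to any property of the solver.
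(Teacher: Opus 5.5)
Your proposal is correct and follows the paper's route. The paper gives no separate argument and presents this as an immediate corollary of the Authority-Separation Theorem: solvers are members of the intelligence layer, and the gate treats a certificate as evidence rather than authority, which is exactly your Steps 1, 2, and 4. Your Step 3 counterexample is an optional illustration, better anchored to the paper's stale-deadline solver example in the failure model than to a benchmark, and you correctly flag that the gate's refusal to short-circuit on certificates is an architectural premise rather than something Theorem~\ref{thm:idc} itself delivers.
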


\begin{corollary}[Effective-State Soundness]
If \StateView{} is derived only from non-compensated, non-superseded, dependency-valid committed records, then every \StateView{} fact corresponds to an effective committed chain, and no compensated or superseded record appears as current truth.
\end{corollary}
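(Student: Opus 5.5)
The statement to prove is the Effective-State Soundness corollary. I would argue directly from how \StateView{} is defined, and use the Authority-Separation Theorem only to guarantee that the records being projected are themselves $\mathcal{C}$-valid.

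\textbf{Model.} I treat \StateView{} as the result of a deterministic replay $\pi$ over \CTL{}. The replay keeps a committed record $r$ exactly when three conditions hold: $r$ is not the target of any committed compensation record, $r$ is not the target of any committed supersession record, and every dependency of $r$ is itself kept. Because dependency edges point to strictly earlier log positions, this predicate is well-founded. I can therefore define the set of effective records by induction on log position, which fixes a unique least fixed point.

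\textbf{First claim.} Every \StateView{} fact corresponds to an effective committed chain. I prove this by strong induction on the log position of the record that supports the fact. A kept record $r$ has a committed witness in \CTL{}, and all its dependencies are kept. By the induction hypothesis, each dependency heads an effective committed chain. Extending those chains by $r$ gives an effective chain ending at $r$. By the Authority-Separation Theorem, every link in this chain was admitted under $\mathcal{C}$. So the chain is a genuine committed chain and not a projection artifact.

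\textbf{Second claim.} No compensated or superseded record appears as current truth. This follows from the exclusion clause: any such $r$ fails the keep predicate, so $\pi$ never emits it.

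The main obstacle is that later appends can change which records are effective, for example a new compensation, or a compensation of a compensation. A fact justified at one log prefix may therefore stop being justified at a later prefix. I would handle this by stating both claims relative to the current prefix $\CTL_k$ and recomputing $\pi(\CTL_k)$ after each append. An incremental maintenance argument, in which each append only removes the downstream dependency closure of the target record, would then show that the materialized \StateView{} equals $\pi(\CTL_k)$ at every $k$.
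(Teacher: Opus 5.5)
Your argument is correct and takes essentially the paper's route: the paper writes no separate proof, and treats \ESS{} as a Tier-2 property that holds by construction from how \StateView{} is derived, following the same pattern as the Authority-Separation Theorem; your keep predicate with induction on log position is that construction made explicit. The incremental-maintenance step at the end is not needed for the corollary, since its hypothesis already stipulates that \StateView{} is derived only from effective records; that step would matter only for showing the materialized cache actually satisfies this hypothesis.
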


\begin{corollary}[Dependency-Closed Compensation Safety]
If compensation is admitted only when it preserves dependency closure and effective-chain continuity, then no admitted compensation can leave an effective dependent orphaned or produce an effective state lacking a valid committed chain.
\end{corollary}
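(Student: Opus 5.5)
The statement is the last corollary, Dependency-Closed Compensation Safety: if compensation is admitted only when it preserves dependency closure and effective-chain continuity, then no admitted compensation leaves an effective dependent orphaned or produces an effective state lacking a valid committed chain.

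The proof is an induction on the length of \CTL{}, the same shape as the proof of the Authority-Separation Theorem. The invariant is that the effective state $\StateView_k$ after $k$ appends is \emph{dependency-closed and chain-continuous}. Concretely:
\begin{itemize}
\item for every effective record $t$, and every dependency edge $t\to u$, the record $u$ is also effective;
\item every fact in $\StateView_k$ is witnessed by an effective committed chain back to trusted initial records.
\end{itemize}

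\emph{Base case.} \CTL{} is empty or holds only trusted initial records. Both conditions hold vacuously or by trust.

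\emph{Inductive step.} Split on the kind of the $(k+1)$-st appended transition $\tau$. By gate closure and Theorem~\ref{thm:idc}, $\tau$ is an admitted, $\mathcal{C}$-valid transition, so only the following cases arise.

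\begin{itemize}
\item If $\tau$ is an ordinary forward transition, it removes no record from the effective set. Admission's dependency-closure check in $\mathcal{C}$ requires $\tau$'s declared dependencies to be effective in $\StateView_k$, so closure is preserved. The chain for $\tau$ extends an existing effective chain, so continuity is preserved.
\item If $\tau$ is a compensation (or supersession) of a record $c$, it removes $c$, and possibly a dependent bundle, from the effective set. By hypothesis, admission accepts $\tau$ only if no effective dependent of the removed set remains effective afterwards. Therefore no surviving effective record has an edge into a removed record, and no orphan arises.
\item For the compensation case, chain continuity also needs an argument. Any fact in $\StateView_{k+1}$ had, by the induction hypothesis, a chain in $\StateView_k$. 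Its chain cannot pass through a removed record: otherwise its head would be an effective dependent of a removed record, which the previous case excludes. So the chain survives intact.
\end{itemize}

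I would invoke the Effective-State Soundness corollary here to say that $\StateView_{k+1}$ is exactly the projection of the effective records. This ensures the closure property of the effective set transfers to the facts \StateView{} exposes.

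The main obstacle is pinning down what ``preserves dependency closure'' checks. The danger is that a compensation admitted against $\StateView_k$ is judged by one check, while a concurrently admitted proposal adds a new dependent on the compensated record before the compensation commits. I would handle this by invoking Theorem~\ref{thm:iso}, with dependency edges placed inside the declared conflict scopes. Serial equivalence then lets each step be analyzed against the committed prefix it actually read, so the single-step argument above suffices. A second subtlety is that removing a record is transitive along dependency edges. I would state the admission condition on the full downward closure of the removed set, not only on immediate dependents, so the edge argument in the compensation case is complete.
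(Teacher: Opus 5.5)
Your argument is sound in substance, but it proves more than the paper does, through a different invariant. The paper gives no separate proof. It lists the corollary among the consequences of the Authority-Separation Theorem, says the corollaries ``follow the same pattern,'' and marks the link ``by construction'' in Figure~\ref{fig:assurance-web}. By that theorem's induction, every appended transition was accepted under $\mathcal{C}$. $\mathcal{C}$ contains the no-orphaned-dependents and effective-chain-preservation checks, phrased as conditions on what a compensation would leave behind. So an admitted compensation satisfies them simply because passing them is what admitted it. You instead carry a global invariant: the effective set is dependency-closed and chain-continuous after every prefix. You then read the corollary off its compensation case. That buys a reading of ``preserves'' as a pre/post relation, which yields a closed post-state only if the pre-state was closed. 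The price is that the invariant must survive every transition that changes effectiveness, and there you use more than the statement gives you. Its hypothesis constrains compensation only, so applying it ``by hypothesis'' to supersession is an extra (if plausible) assumption. Transitions that invalidate records, which \StateView{} also excludes, fall into neither of your cases. Similarly, deriving chain continuity from dependency edges works only if chain links are declared dependencies. The per-entity version chain (each \texttt{state\_before} matching the prior \texttt{state\_after}) need not be, which is presumably why effective-chain continuity is a separate hypothesis; cite it directly rather than leaving it unused. Your two closing worries are moot. For transitivity, walk a dependency path from a surviving record and stop at the first removed record. Its predecessor is a surviving record with a direct edge into the removed set, so the condition you already stated on the whole removed set suffices, and you need not strengthen a hypothesis you are given. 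For the race, assumption (iii) of Theorem~\ref{thm:idc} has the gate accept against the effective state current at append. That is exactly the state your inductive step checks against, so Theorem~\ref{thm:iso} and its scope-completeness assumption are not needed.
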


\section{Expanded Failure Model}
\label{sec:failure-model}
Mnemosyne addresses failure modes that ACID does not govern, because ACID constrains a unit of work only after it is admitted. We describe the principal modes below; Table~\ref{tab:failures-mechanisms} pairs each with the mechanism that rejects or contains it.

\textbf{Invalid generated transition.} An LLM proposes a workflow transition that violates a finite-state constraint. A conventional workflow engine may persist the action unless application code rejects it. ATP requires a deterministic validator to reject it before commit.

\textbf{Compensation with effective dependents.} A record is compensated after downstream records have depended on it. Raw log history still contains the record, but it is no longer effective. ATP disallows compensation that would orphan currently effective dependents or break the effective-state chain.

\textbf{Conflicting proposals.} Two agents propose different plans for the same tenant and entity. Both may be individually feasible. ATP rejects the active proposal set before commit because the proposals conflict over the same admission scope.

\textbf{Stale-world proposal.} A solver proposes a route assuming a deadline of 17:00, but an observed world snapshot reports the deadline is now 11:00. ATP rejects the proposal before commit because its assumptions no longer match observed facts.

\textbf{Evidence-destroying repair.} A coding agent observes that continuous integration is red and proposes to ``repair'' the repository by deleting the failing tests. A periodic checker may see the next CI run turn green and conclude the violation is gone, but the defect has not been repaired: the evidence has been destroyed. ATP requires admission to preserve the evidence that triggered repair unless the validator accepts that the repair addresses the underlying condition rather than suppressing its observable signal.

\begin{table}[t]
\centering
\caption{Generated-workflow failures and ATP mechanisms.}
\label{tab:failures-mechanisms}
\begin{tabular}{p{0.42\linewidth}p{0.48\linewidth}}
\toprule
\rowhead
Failure & ATP mechanism \\
\midrule
Invalid transition & deterministic admission \\
Unsafe compensation & dependency-closed compensation \\
Conflicting proposals & scoped proposal-conflict rejection \\
Stale assumptions & stale-world rejection \\
Evidence-destroying repair & evidence-preserving admission \\
Unbounded live obligation & active contract lifecycle \\
Duplicate concurrent repair & idempotent recovery-event admission \\
\bottomrule
\end{tabular}
\end{table}

\section{Additional Safety Invariants}
Beyond the safety invariants stated in the main paper, the runtime also enforces the following.

\begin{invariant}[Effective-chain preservation]
A compensation is rejected if it would break the effective chain for an entity.
\end{invariant}

\begin{invariant}[Admitted contract creation]
Creating an \ACR{} is itself an admitted transition. The record must have bounded dependency scope, a discharge or expiry condition, a guard, a validator, and an idempotent creation key.
\end{invariant}

\begin{invariant}[Admitted readiness]
A watcher may propose that a live \ACR{} is ready or breached only from admitted
evidence. The lifecycle state changes only if admission commits that proposal, and a
continuation may resume only from the resulting committed ready or breached state.
\end{invariant}

\begin{invariant}[Scoped repair]
A proposal generated by an \ACR{} may touch only the dependency scope recorded by that \ACR{}, unless a wider scope is explicitly admitted as an escalation.
\end{invariant}

\begin{invariant}[Cross-episode traceability]
Rejected or compensated \ACR{} proposals with matching failure signatures remain queryable as historical evidence for future proposal ranking, guard strengthening, or external causal-learning modules.
\end{invariant}

\section{Benchmark Roster}
\label{s:roster}
The main paper's evaluation (\S6) summarizes the versioned benchmark suite compactly; the full
roster is as follows. The six safety benchmarks check, respectively, that generated proposals stay
non-authoritative until admitted (\textsc{AuthorityBench}); that concurrent proposals commit to a
serial-equivalent history (\textsc{SerialAdmissionBench}); that a repair cannot discharge its own
trigger by destroying evidence (\textsc{EvidenceRepairBench}); that contract wakeups resume
recovery without mutating domain truth (\textsc{ObligationBench}); that compensation and
supersession preserve effective-state projection (\textsc{CompensationProjectionBench}); and that
storage-level uniqueness, idempotency, and transactional projection reject duplicate or malformed
attempts without corrupting \StateView{} (\textsc{StorageSubstrateBench}).
\textsc{J1J4EndToEndBench} drives planning and recovery cases through the boundary,
\textsc{ProposerQualityBench} varies proposer quality while tracking usefulness and the
invalid-commit invariant, and \textsc{GuardrailComparatorBench} compares \ATP{} with a
workflow/saga guardrail stack (schema validation, finite-state checks, idempotency keys, timers,
local saga compensation, proposer self-checking) that lacks \ATP{}-specific admission over stale-world assumptions, effective state, evidence
preservation, obligation containment, dependency-closed compensation, and conflict scopes. This
14-case mechanism audit is the ninth bar in the aggregate scoreboard. A separate
\textsc{TemporalProductBaseline} runs one valid operation and three semantic hazards through an
actual Temporal SDK worker; it is reported separately because it is a product check, not a tenth
permissive mechanism benchmark.

\section{Aggregate Safety Scoreboard}
This is the full aggregate safety scoreboard summarized in the main-paper evaluation.

\begin{table}[t]
\centering
\caption{Aggregate safety scoreboard across the nine benchmarks: each permissive baseline commits the targeted violation, while \ATP{} commits zero and still admits valid operations.}
\label{tab:current-rq-results}
\footnotesize
\setlength{\tabcolsep}{4pt}
\begin{tabularx}{\linewidth}{l l >{\raggedright\arraybackslash}X >{\raggedright\arraybackslash}X}
\toprule
\rowhead
Class & Benchmark & Baseline violations & ATP violations \\
\midrule
Authority & AuthorityBench & 35 invalid generated commits & 0 invalid generated commits \\
Serial & SerialAdmissionBench & 48 invalid commits and capacity underflow & 0 invalid commits; serial-equivalent history \\
Evidence & EvidenceRepairBench & 20 evidence-destroying repairs & 0 evidence-destroying repairs \\
Obligation & ObligationBench & 16 unauthorized wakeup mutations & 0 unauthorized wakeup mutations \\
Projection & CompensationProjectionBench & 7 \StateView{} mismatches & 0 \StateView{} mismatches \\
Storage & StorageSubstrateBench & 64 invalid storage commits and 1 \StateView{} mismatch & 0 invalid commits and 0 \StateView{} mismatches \\
End-to-end & J1J4EndToEndBench & 3 invalid commits and 3 \StateView{} mismatches & 0 invalid commits, 0 mismatches, 4 completed cases \\
Proposer & ProposerQualityBench & 37 invalid direct commits & 0 invalid commits across six proposer classes \\
Mechanism & GuardrailComparator & 6 ATP-specific hazards missed by workflow/saga guardrails & 0 invalid commits; 4 valid commits admitted \\
\bottomrule
\end{tabularx}
\end{table}

\section{Per-Proposer-Class Detail}
This table gives the per-proposer-class detail behind the proposer-quality study of the main paper (the corresponding bar chart appears in the main paper).

\begin{table}[t]
\centering
\caption{RQ4 proposer-level result under ATP. Proposer quality changes acceptance
rate, utility, and first-admission delay; invalid commits remain zero for every
proposer class.}
\label{tab:rq3-proposer-results}
\setlength{\tabcolsep}{4.5pt}\small
\begin{tabular}{lrrrrrr}
\toprule
\rowhead
Proposer & Attempts & Admitted & Rejected & Accept. rate & Utility & First admission \\
\midrule
No intelligence & 10 & 1 & 9 & 0.10 & 1 & 8 \\
Random & 10 & 2 & 8 & 0.20 & 4 & 5 \\
Rule-based & 10 & 5 & 5 & 0.50 & 25 & 2 \\
LLM-like & 10 & 6 & 4 & 0.60 & 36 & 2 \\
Solver-like & 10 & 8 & 2 & 0.80 & 64 & 1 \\
Adversarial & 10 & 1 & 9 & 0.10 & 1 & 10 \\
\bottomrule
\end{tabular}
\end{table}

\section{Concurrency, Contention, and Distributed Execution}
\textbf{Concurrency and contention.} The serialization unit is the conflict scope, not the whole
log: proposals over disjoint scopes (distinct tenant, entity, or resource) commit concurrently, so
throughput scales with scope-level parallelism rather than a single global lock. Within one hot
scope, the PostgreSQL substrate orders contenders through row-level locking and a unique constraint
over (tenant, scope, idempotency key, recovery sequence position), so concurrent proposals either
converge to one canonical committed transition or reject the conflicting losers, exactly the boundary
of the Serial-Equivalent Generative Admission Theorem; MVCC snapshot isolation lets the \StateView{} projection read effective
state without blocking admission. A large disruption that wakes many obligations therefore degrades
to serialized admission \emph{on the affected scopes only}, trading latency for serial equivalence
rather than risking an invalid interleaving, and rejected contenders retry against the updated
\StateView{} under the memoryless repair model (the Reactive Repair section of the main paper). Cross-scope independence is what
keeps dense conflict-scope collisions from becoming a global bottleneck.

\textbf{Distributed execution.} The evaluation runs against a single durable
store, which is the linearization point for committed truth: each tenant and conflict scope has one
authoritative \CTL{}, so the authority boundary is well defined within a store even when proposers,
solvers, and orchestrators are distributed. External orchestration engines such as Temporal drive
execution but never hold commit authority (Appendix~\ref{app:eng-engine}); under network partitions
or worker-node failure they re-drive activities at least once, and the operation idempotency key
together with the outbox deduplication boundary make replays converge to a single committed
transition and a single outbox intent. If the provider also honors the correlation or idempotency
key, retries need not duplicate the external effect. What we do not yet evaluate
is multi-region or sharded operation, where committed truth would span several stores: maintaining
serial-equivalent admission and outbox idempotency across asynchronous cross-region updates would
require partitioning conflict scopes across stores or a shared consensus log, which, with
production-scale load testing, we leave to future work.

\section{Per-Property Safety Results}
\label{app:safety-tables}
This appendix gives the full per-class result tables for the six safety hazard classes
summarized in RQ1 (the main-paper evaluation) and visualized in
the safety-scoreboard figure of the main paper; each compares \Mnemo{} against the deliberately
permissive baseline(s) for that class.
Table~\ref{tab:safety-authority} reports \textsc{AuthorityBench}: a raw-append baseline and
a self-validation baseline each commit $35$ invalid generated transitions, while \ATP{}
commits zero and still admits the five valid proposals.
Table~\ref{tab:safety-serial} reports \textsc{SerialAdmissionBench}: under $80$ concurrent
proposals over a shared capacity object, the unserialized and weak-lock baselines commit
invalid proposals and underflow capacity, while \ATP{} commits only the $32$ valid
proposals through a serialized boundary and yields a serial-equivalent history.
Table~\ref{tab:safety-evidence} reports \textsc{EvidenceRepairBench}: the naive-repair and
no-rule workflow baselines each commit $20$ evidence-destroying repairs, while \ATP{}
commits zero and still admits the eight valid repairs.
Table~\ref{tab:safety-obligation} reports \textsc{ObligationBench}: the trigger-direct and
timer-direct baselines commit $12$ and $16$ unauthorized wakeup mutations, while \ATP{}
routes every wakeup through ordinary admission and commits zero.
Table~\ref{tab:safety-projection} reports \textsc{CompensationProjectionBench}: the
projection baselines admit five invalid compensations and produce seven \StateView{}
mismatches, while \ATP{} produces zero and still commits the two valid compensations.
Table~\ref{tab:safety-storage} reports \textsc{StorageSubstrateBench}: the unconstrained
log commits $64$ invalid storage attempts and one \StateView{} mismatch, while the \ATP{}
storage path rejects all $64$ with zero mismatches.

\begin{table}[ht]
\centering
\caption{Authority separation.  Unsafe baselines commit invalid generated
proposals; ATP rejects all invalid proposals while still admitting valid ones.}
\label{tab:safety-authority}
\setlength{\tabcolsep}{4pt}
\begin{tabular}{lrrr}
\toprule
\rowhead
System & \makecell{Invalid\\commits} & \makecell{Rejected\\invalid} & \makecell{Valid\\commits} \\
\midrule
Raw append & 35 & 0 & 5 \\
Self-validation & 35 & 0 & 5 \\
ATP / \Mnemo{} & 0 & 35 & 5 \\
\bottomrule
\end{tabular}
\end{table}

\begin{table}[ht]
\centering
\caption{Serial-equivalent admission under concurrency. Unsafe baselines commit invalid
concurrent proposals and produce non-serial-equivalent state; ATP admits only
valid proposals through a serialized transaction boundary. Column SEA denotes
serial equivalence.}
\label{tab:safety-serial}
\setlength{\tabcolsep}{3.5pt}\footnotesize
\begin{tabular}{@{}lrrrrc@{}}
\toprule
\rowhead
System & Committed & Rejected & Invalid & Underflow & SEA \\
\midrule
Unserial.\ writes & 80 & 0 & 48 & 1 & no \\
Weak lock & 68 & 12 & 36 & 1 & no \\
ATP / \Mnemo{} & 32 & 48 & 0 & 0 & yes \\
\bottomrule
\end{tabular}
\end{table}

\begin{table}[t]
\centering
\caption{Evidence-preserving repair. Unsafe baselines admit repairs that
hide trigger evidence while leaving the underlying failure unresolved; ATP
rejects all such repairs and still admits valid repairs.}
\label{tab:safety-evidence}
\setlength{\tabcolsep}{4pt}
\begin{tabular}{lrrr}
\toprule
\rowhead
System & \makecell{Evidence-\\destroying} & \makecell{Rejected\\evid.-dest.} & \makecell{Valid\\repairs} \\
\midrule
Naive repair & 20 & 0 & 4 \\
Workflow w/o rule & 20 & 0 & 8 \\
ATP / \Mnemo{} & 0 & 20 & 8 \\
\bottomrule
\end{tabular}
\end{table}

\begin{table}[t]
\centering
\caption{Obligation containment. Unsafe baselines allow wakeups to mutate
committed domain state without ordinary admission; ATP allows wakeups to emit
proposal packages but rejects unauthorized mutations before commit.}
\label{tab:safety-obligation}
\setlength{\tabcolsep}{4pt}
\begin{tabular}{lrrrr}
\toprule
\rowhead
System & \makecell{Unauthorized\\mutations} & \makecell{Proposal\\packages} & Admitted & Rejected \\
\midrule
Trigger direct & 12 & 0 & 0 & 0 \\
Timer direct write & 16 & 0 & 0 & 0 \\
ATP / \Mnemo{} & 0 & 24 & 4 & 20 \\
\bottomrule
\end{tabular}
\end{table}

\begin{table}[t]
\centering
\caption{Effective-state and compensation safety.  Unsafe projection
baselines admit invalid compensations and project ineffective history as current
truth; ATP rejects invalid compensations and preserves \StateView{} as the
projection of effective committed records only.}
\label{tab:safety-projection}
\small
\begin{tabular}{lrrrrr}
\toprule
\rowhead
System & \makecell{Invalid\\admitted} & \makecell{Orphaned\\deps.} & \makecell{Broken\\chains} & \makecell{\StateView{}\\mismatches} & Valid \\
\midrule
Unsafe projection baselines & 5 & 2 & 5 & 7 & 2 \\
ATP / \Mnemo{} & 0 & 0 & 0 & 0 & 2 \\
\bottomrule
\end{tabular}
\end{table}

\begin{table}[t]
\centering
\caption{Storage-substrate correctness.  The unconstrained log baseline
commits duplicate or malformed storage attempts; the ATP storage path rejects
invalid attempts and preserves effective-state projection. The PostgreSQL-backed store is implemented and exercised by environment-gated live conformance tests; the default artifact path is PostgreSQL-free.}
\label{tab:safety-storage}
\setlength{\tabcolsep}{3.5pt}\footnotesize
\begin{tabular}{llrrrr}
\toprule
\rowhead
System & Subs. & Committed & Rejected & Invalid & Mismatch \\
\midrule
Unconstr.\ log & mem & 128 & 0 & 64 & 1 \\
ATP storage & PostgreSQL & 64 & 64 & 0 & 0 \\
\bottomrule
\end{tabular}
\end{table}

\section{Live-Proposer Pilots (RQ3): Detail and Representative Cases}
\label{s:rq6-detail}
This section records the full detail behind the main paper's RQ3: the pilot configurations, the
complete scorer outputs, the staged pipeline, and the representative cases referenced there. Both
pilots are bounded: one task family (job-shop), four proposer packs (Claude, GPT, DeepSeek expert,
DeepSeek instant), manual collection, and no claim of learning efficacy or family generality.

\subsection{Static plan-entry pilot}
The static pilot uses one Tier~6 E7 sequence, \texttt{T6-7e17ef0cc5f3}, under the full C+R+T stack.
Four response packs contribute ten episodes each, giving forty handoff-derived cases; all forty
validate with zero schema failures. Scorer outputs: safety gate passed (zero invalid commits, zero
evidence-destroying repairs, zero orphaned dependents); mean horizon reward 0.90125; grounded
admission rate 0.725. The staged pipeline is: collect and parse the packs; admit or reject each
proposal; attach the kernel-admission trace; replay through the runtime evaluator; bridge admission
labels to scorer actions; export the handoff bundle; import and score in REALM-Bench. Every stage
preserves the authority boundary: live models propose, \Mnemo{} admits or rejects, the external
scorer reads boundary traces only.

\subsection{Dynamic mid-execution pilot}
The dynamic pilot poses ten mid-execution disruption episodes over the job-shop E7 setting; each of
the four packs answers all ten (40 responses of type \texttt{repair}, \texttt{reject}, or
\texttt{observe}). Deterministic admission guards require committed operations to remain unchanged,
committed evidence to be preserved, rollback scope to remain none or local, and repair radius to
remain bounded. Scorer outputs: 24 admitted, 16 rejected (4 explicit safety rejections); safety
gate passed; repeated-failure rate 0.0; mean horizon reward 0.925; grounded admission rate 0.6;
observed mean time-to-correction 0.725 (40 observed, 0 censored).

\subsection{Representative cases}
\emph{Admitted local repair.} Claude, episode~1, proposed moving the uncommitted operation
\texttt{J4-O2} away from a failed \texttt{M2} machine while preserving committed operations
\texttt{J2-O1} and \texttt{J3-O1}; the affected step was local, rollback scope none, evidence
preserved, and the proposal passed admission.

\emph{Safe rejection.} DeepSeek expert, episode~3, proposed a repair touching two affected steps,
\texttt{J2-O2} and \texttt{J3-O2}, when the repair-radius limit was one; \Mnemo{} rejected it with
the structured reason \texttt{repair\_radius\_exceeded:2>1}. A bad proposal plus a correct
pre-commit rejection is a safety success, not a system failure.

\emph{Refused global rollback.} Claude, episode~9, itself rejected an unsafe global rollback
request, recognizing that rollback would violate committed evidence for \texttt{J2-O1} and
\texttt{J3-O1}. The replay records this conservatively as a model-side rejection rather than a
guard-derived safety rejection, because it was requested by the proposer rather than triggered by
an admission guard; it illustrates that structured rejection reasons can serve as feedback for
iterative replanning, which is out of scope for this paper.

\section{RQ2 Two-Tier Workflow-Engine Comparison}
\label{s:rq2-workflow}

RQ2 deliberately separates a broad mechanism audit from a narrow executable product check.
\textsc{GuardrailComparatorBench} contains 14 cases: four valid operations, four classical
guardrail hazards, and six ATP-specific hazards. The workflow/saga mechanism model rejects the
four classical hazards but commits all six semantic hazards; ATP admits the four valid operations
and rejects all ten invalid ones. This broad audit supplies the ``Mechanism'' bar in the main
paper's Figure~4.

\textsc{TemporalProductBaseline} then executes four matched cases using Temporal SDK 1.32.0,
its test server, an actual worker and activities, and SQLite persisted domain state. Both arms
apply the same schema, finite-state, idempotency-key, and proposer-self-check guardrails. The ATP
arm additionally validates effective state, retained evidence, and compensation closure before
the activity writes domain state.

\begin{table}[t]
\centering
\caption{Executable Temporal product check. Every workflow has an 11-event Temporal history.}
\label{tab:temporal-product}
\setlength{\tabcolsep}{5pt}\small
\begin{tabular}{lrrrrr}
\toprule
\rowhead
System & Cases & Commit & Reject & Invalid & Valid \\
\midrule
Temporal guardrails & 4 & 4 & 0 & 3 & 1 \\
Temporal $+$ ATP & 4 & 1 & 3 & 0 & 1 \\
\bottomrule
\end{tabular}
\end{table}

The three hazards are a stale-world plan, an evidence-destroying unresolved repair, and a
compensation that orphans an effective dependent. The result does not identify a Temporal defect:
Temporal provides durable orchestration, and application code remains responsible for semantic
admission. ATP supplies one explicit contract for that responsibility.

\subsection{Documented LangGraph and Cadence Capability Boundary}

Official LangGraph documentation describes a low-level orchestration runtime with persistence,
durable execution, stores, configurable conditional interrupts, and human approve/edit/reject
decisions. Official Cadence documentation describes durable workflow history, retries, timers,
signals, activities, worker scaling, and saga compensation whose logic is written by the
application. Table~\ref{tab:workflow-capabilities} reports documented built-ins, not what a
sufficiently determined application developer could add.

\begin{table}[t]
\centering
\caption{Capability-boundary audit from official documentation. ``Application'' means that the
framework can host the logic but does not document the ATP property as a built-in guarantee.}
\label{tab:workflow-capabilities}
\setlength{\tabcolsep}{4pt}\footnotesize
\begin{tabularx}{\linewidth}{>{\raggedright\arraybackslash}X l l}
\toprule
\rowhead
Capability & LangGraph & Cadence \\
\midrule
Durable state/history and resume & built in & built in \\
Retries, timers, event-driven continuation & graph/application & built in \\
Human approval or interruption & built in/configurable & application/signals \\
Saga compensation & application & application pattern \\
Schema, FSM, idempotency checks & application & application \\
Effective-state admission witness & not documented & not documented \\
Evidence-preserving repair & not documented & not documented \\
Dependency-closed compensation & not documented & not documented \\
Non-authoritative obligation wakeup & not documented & not documented \\
Generative conflict-scope serialization & not documented & not documented \\
\bottomrule
\end{tabularx}
\end{table}

The source basis is the official LangGraph overview and human-in-the-loop documentation
\cite{langgraph_overview_2026,langgraph_hitl_2026}, and the official Cadence workflow-engine and
orchestration documentation \cite{cadence_engine_2026,cadence_orchestration_2026}. Because both
systems are programmable, ``not documented'' must not be read as ``impossible to implement.'' It
means only that the safety property is not supplied as a framework-level contract that an
application receives by default.

The product check is reproduced by:
\begin{verbatim}
python -m pytest -q -m temporal \
  tests/benchmarks/test_temporal_product_baseline.py
\end{verbatim}
The broader mechanism audit remains reproduced by
\path{tests/experiments/test_rq9_state_of_practice_comparator.py}.

\section{Coverage Audit Against Production TP}
\label{app:tp-coverage}
This appendix extends the RQ2 comparator (the RQ2 comparator table of the main paper) with a third layer: a
production transaction-processing (TP) substrate such as PostgreSQL. ATP is not a replacement for
production TP. PostgreSQL-style systems provide the durable substrate, with atomicity, isolation,
durability, uniqueness, foreign keys, checks, triggers, and serializable execution. ATP is complementary:
it governs whether a generated proposal should receive transaction authority before it becomes an ordinary
transaction. The distinction matters because several agentic hazards are not storage-level violations
unless application developers manually encode ATP-specific semantics.

Table~\ref{tab:tp-coverage} reports the benchmark's coverage table. It is a deterministic
coverage audit, not a throughput benchmark: each entry is a by-design assessment of which layer naturally owns the corresponding check.

\begin{table}[t]
\centering
\caption{Coverage audit: a production TP substrate versus workflow/saga guardrails versus ATP. Entries are
by-design assessments of which layer owns each check, not measured throughput.}
\label{tab:tp-coverage}
\setlength{\tabcolsep}{4pt}\footnotesize
\begin{tabularx}{\linewidth}{>{\raggedright\arraybackslash}X l l >{\raggedright\arraybackslash}p{2.9cm}}
\toprule
\rowhead
Hazard & PostgreSQL-style TP & Workflow/saga & ATP/\Mnemo{} \\
\midrule
Primary-key / unique duplicate & caught & caught & caught \\
Missing foreign-key dependency & caught & caught & caught \\
Finite-state invalid transition & app-only & caught & caught \\
Stale-world proposal & missed & missed & caught \\
Orphaning compensation over effective state & app-only & partial & caught \\
Evidence-destroying repair & missed & missed & caught \\
\ACR{} direct domain mutation & missed & missed & caught \\
Generative conflict-scope collision & app-only & partial & caught \\
Duplicate side-effect intent & partial & partial & caught (staged, then rejected) \\
\bottomrule
\end{tabularx}
\end{table}

The audit identifies a layer boundary rather than a weakness in PostgreSQL or other TP systems. Production
TP systems are strong commit substrates, but they do not by themselves define which generated proposal
should be granted transaction authority. ATP adds \StateView{}-based admission, retained-evidence checks,
dependency-closed compensation, non-authoritative \ACR{} wakeups, and generative conflict scopes before a
proposal becomes a transaction; once a proposal is admitted, ATP relies on a production TP substrate for
atomic commit, durability, uniqueness, and isolation.

We implement this audit as \textsc{ProductionTPComparatorBench}, a
deterministic coverage benchmark rather than a throughput benchmark. The
benchmark runs the same hazard set through three enforcement models:
a PostgreSQL-style TP substrate, a workflow/saga guardrail layer, and
ATP/\Mnemo{}. Its purpose is to identify which layer owns each
enforcement responsibility, not to claim that production TP systems are
weak or that PostgreSQL cannot encode application-specific triggers.

The benchmark covers nine hazards. ATP/\Mnemo{} catches all nine.
The PostgreSQL-style TP substrate catches the two storage-level hazards
natively, marks three as application-encoded, partially catches one, and
misses three ATP-specific proposal-authority hazards. The workflow/saga
guardrail layer catches three classical guardrail hazards, partially
catches three, and misses three ATP-specific hazards.

The benchmark is reproduced by:
\begin{verbatim}
python benchmarks/realm/production_tp_comparator.py
python -m pytest -q tests/benchmarks/test_production_tp_comparator.py
\end{verbatim}
It emits \texttt{summary.json}, \texttt{summary.csv}, and
\texttt{report.md} under:
\begin{verbatim}
benchmarks/realm/reports/
production_tp_comparator/
\end{verbatim}
The exact VLDB artifact snapshot is:
\begin{verbatim}
https://github.com/eyuchang/Mnemosyne/tree/arxiv-atp-rq1-rq9b-r8-v2
\end{verbatim}

\section{RQ5 Infrastructure-Cost Audit: Full Method and Results}
\label{s:rq4-cost}
This section records the full method and result table behind the main paper's RQ5 summary.

RQ2 establishes that ATP rejects the ATP-specific hazards in the mechanism-level
workflow/saga comparator; RQ5 audits the cost visible in the implemented ATP
infrastructure paths. We do not time the RQ2 oracle comparator as though it were
the ATP runtime, and we do not claim a production load test. Instead, we run a
repository infrastructure workload covering kernel admission, runtime admission,
commit-boundary enforcement, effective-state projection, validation,
compensation, active recovery, and Temporal activity-boundary validation. Each
repeat executes 20 ATP infrastructure paths, and we measure 30 repeats at worker
counts $1$, $4$, $8$, and $16$. The workflow/saga semantic comparator is still executed
to preserve the RQ2 baseline, but its single semantic test is not treated as a
throughput competitor for the 20-path ATP infrastructure workload.

\begin{table}[ht]
\centering
\caption{RQ5 artifact-level infrastructure-cost audit for ATP's implemented safety paths. Each run
executes 20 real repository paths covering admission, commit-boundary enforcement,
effective-state projection, validation, compensation, active recovery, and
Temporal activity-boundary validation. Latencies are $p_{50}/p_{95}$ in
milliseconds; throughput is infrastructure-test units per minute. All settings use
30 repeats.}
\label{tab:rq5-cost-results}
\setlength{\tabcolsep}{3.8pt}\scriptsize
\resizebox{\textwidth}{!}{%
\begin{tabular}{lrrrrrrrr}
\toprule
\rowhead
Workers & Pass & \makecell{E2E\\ms} & \makecell{Admission\\ms} &
\makecell{Commit\\ms} & \makecell{Projection\\ms} & \makecell{Validation\\ms} &
\makecell{PV\\overhead} & \makecell{Throughput\\units/min} \\
\midrule
1 & 30/30 & 5328.2 / 5449.2 & 0.041 / 17.438 & 2.597 / 8.652 & 0.023 / 1.754 & 0.001 / 0.058 & 11.82\% & 224.53 \\
4 & 30/30 & 6874.8 / 7323.7 & 0.059 / 22.877 & 3.079 / 9.360 & 0.025 / 2.295 & 0.001 / 0.070 & 10.77\% & 661.28 \\
8 & 30/30 & 12500.8 / 12646.6 & 0.102 / 37.686 & 5.154 / 13.313 & 0.030 / 3.667 & 0.001 / 0.097 & 9.62\% & 768.35 \\
16 & 30/30 & 62829.8 / 76012.6 & 0.006 / 487.950 & 15.311 / 76.245 & 0.064 / 17.305 & 0.008 / 0.397 & 4.59\% & 303.70 \\
\bottomrule
\end{tabular}
}
\end{table}

Table~\ref{tab:rq5-cost-results} reports the real-infrastructure result. All ATP
infrastructure runs pass at all four worker counts. Throughput rises from $224.53$ units/min at
one worker to $661.28$ at four and $768.35$ at eight, but drops to $303.70$ at 16 while end-to-end
$p_{50}$ rises to $62.8$ seconds. The pytest-process audit therefore saturates after eight workers
on this host. Its instrumentation fractions must not be interpreted as end-user latency overhead:
they classify profiled function time inside an end-to-end pytest run. The value of this audit is
path coverage and a visible saturation point, not a production throughput claim.

\subsection{Matched Data-Size Scaling}

To isolate admission cost from pytest process startup, a second benchmark compares two valid-operation
paths over the same SQLite WAL CTL and \StateView{} substrate. The commit-only arm constructs trusted
CTL records and performs atomic append plus projection. The ATP arm first validates generated
transition candidates against the FSM and current \StateView{}, constructs admitted records, and
then performs the identical commit plus projection. Independent entity/workflow operations are
grouped into fixed 100-operation transactions. Each size is repeated five times with arm order
alternated between repeats.

\begin{table}[ht]
\centering
\caption{RQ5 matched data-size sweep. Latencies are amortized per operation; throughput and
latency entries are medians over five repeats. Every run has the expected CTL row count, zero
sampled \StateView{} mismatches, and zero validation failures.}
\label{tab:rq5-data-scale}
\setlength{\tabcolsep}{4pt}\small
\begin{tabular}{lrrrrr}
\toprule
\rowhead
Mode & Ops & Elapsed ms & Ops/s & $p_{50}/p_{95}$ ms & ATP cost \\
\midrule
Commit only & $1{,}000$ & $314.6$ & $3179.0$ & $0.298/0.381$ & -- \\
ATP & $1{,}000$ & $336.8$ & $2969.4$ & $0.322/0.427$ & $6.59\%$ \\
Commit only & $5{,}000$ & $4634.6$ & $1078.8$ & $0.879/1.701$ & -- \\
ATP & $5{,}000$ & $4890.9$ & $1022.3$ & $0.953/1.838$ & $5.24\%$ \\
Commit only & $10{,}000$ & $16863.5$ & $593.0$ & $1.640/3.156$ & -- \\
ATP & $10{,}000$ & $18026.6$ & $554.7$ & $1.731/3.596$ & $6.45\%$ \\
\bottomrule
\end{tabular}
\end{table}

The incremental ATP cost stays between $5.2\%$ and $6.6\%$ across the tested sizes. Both arms slow
as the local CTL and projection tables grow, so this experiment does not establish production
scalability. It shows more narrowly that semantic validation is not the dominant source of the
observed local slowdown. Together, the worker and data-size audits replace the earlier suggestion
of unconstrained scaling with a measured claim: the safety paths remain correct through the tested
loads, admission adds modest incremental cost, and the present local implementation has clear
saturation points.

The data-size sweep is reproduced by:
\begin{verbatim}
python -m benchmarks.realm.rq5_data_scale
python -m pytest -q tests/benchmarks/test_rq5_data_scale.py
\end{verbatim}

Together, RQ2 and RQ5 report the safety distinction, an executable Temporal check, and local
artifact costs; they do not establish production-scale throughput or a deployed-service comparison.

\section{RQ6 Controlled Recovery-During-Recovery Stress}
\label{s:rq6-recursive}

The main paper's recursive-recovery proposition reduces a mid-repair disruption to a fresh
single-shot repair over the latest committed restore point $S_k$ and the enlarged pending set
$D_k$. The executable stress harness attacks the proposition's implementation premises at four
checkpoints: before readiness admission, after readiness but before remedy construction, during
compensation, and immediately before repair commit. Each checkpoint is exercised at failure depths
one, two, and three, for 12 scenarios with per-repair round bound $K=3$.

\begin{table}[t]
\centering
\caption{RQ6 recursive-recovery stress results.}
\label{tab:rq6-recursive}
\setlength{\tabcolsep}{5pt}\small
\begin{tabular}{lr}
\toprule
\rowhead
Measure & Result \\
\midrule
Injection checkpoints & 4 \\
Failure depths per checkpoint & 1--3 \\
Total scenarios & 12 \\
Stale in-flight candidates rejected & 12 \\
Invalid commits & 0 \\
Lost pending observations & 0 \\
Duplicate commits & 0 \\
Realized bound violations & 0 \\
Maximum realized rounds / largest $FK$ bound & $3/9$ \\
\bottomrule
\end{tabular}
\end{table}

Every new disruption is appended to $D_k$ before another repair can earn admission. Consequently,
the stale in-flight candidate is rejected, and the final admitted repair covers all pending
observations. This is deterministic controlled stress, not a distributed worker-crash, network
partition, or production-runtime experiment.

The stress matrix is reproduced by:
\begin{verbatim}
python -m pytest -q tests/benchmarks/test_recursive_recovery_stress.py
\end{verbatim}

\section{R8 Deployment Smoke-Load Variability}
\label{app:r8}
R8 implements a deployable \Mnemo{} service boundary with health, proposal-submission, \StateView{}, and metrics endpoints. The purpose of the R8 smoke-load audit is not to claim production-scale throughput, but to check that the deployment service preserves ATP's authority-separation rule under concurrent HTTP proposal submission: clients may submit proposals, but invalid proposals and explicit bypass attempts must not become committed truth.

Table~\ref{tab:r8-smoke-variability} reports two consecutive local runs of the same deployment load script. Each worker setting submits 200 HTTP proposal requests. The workload contains 120 valid proposals and 80 invalid or bypass proposals. In both runs, every worker setting admits exactly 120 proposals, rejects exactly 80 proposals, and produces zero invalid commits. Thus the safety result is stable across repeated local runs.

The throughput measurements are intentionally treated as diagnostic rather than production evidence. The 16-worker setting shows substantial run-to-run throughput variability despite similar $p_{50}/p_{95}$ request latency, indicating sensitivity to local client scheduling, socket backlog, and short-run wall-clock effects. We therefore use R8 only as deployment-boundary evidence: concurrent HTTP submission preserves the ATP admission boundary, while production load testing over durable PostgreSQL storage, Kubernetes workers, networked workflow services, and pool saturation remains future work.

\begin{table}[t]
\centering
\small
\setlength{\tabcolsep}{4pt}
\caption{R8 local deployment smoke-load variability across two consecutive runs. Each setting submits 200 HTTP proposal requests to the R8 service. The stable result is safety: invalid commits remain zero in every run and worker setting. Throughput is local diagnostic evidence, not a production load-test claim.}
\label{tab:r8-smoke-variability}
\begin{tabular}{lrrrrr}
\toprule
\rowhead
Run & Workers & \makecell{Accepted /\\Rejected} & \makecell{Invalid\\commits} & \makecell{Latency\\$p_{50}/p_{95}$ ms} & \makecell{Throughput\\/min} \\
\midrule
1 & 1 & 120 / 80 & 0 & 0.281 / 0.381 & 119{,}717.96 \\
1 & 4 & 120 / 80 & 0 & 0.871 / 1.515 & 243{,}307.15 \\
1 & 8 & 120 / 80 & 0 & 1.744 / 2.892 & 245{,}481.60 \\
1 & 16 & 120 / 80 & 0 & 3.398 / 6.226 & 245{,}801.74 \\
\midrule
2 & 1 & 120 / 80 & 0 & 0.250 / 0.380 & 173{,}746.85 \\
2 & 4 & 120 / 80 & 0 & 0.822 / 1.307 & 271{,}099.39 \\
2 & 8 & 120 / 80 & 0 & 1.743 / 3.136 & 257{,}312.64 \\
2 & 16 & 120 / 80 & 0 & 3.047 / 5.396 & 11{,}784.96 \\
\bottomrule
\end{tabular}
\end{table}

\section{Mnemosyne Architecture and Runtime Data Structures}
\label{app:eng-arch}
\begin{figure}[t]
\centering
\resizebox{0.6\linewidth}{!}{%
\begin{tikzpicture}[every node/.style={font=\small}]
\node[gate, minimum width=32mm] (api) at (3.4,1.5) {Store protocol};
\node[store, minimum width=23mm] (pg) at (0,0) {PostgreSQL\\\scriptsize substrate};
\node[store, minimum width=23mm] (pool) at (3.4,0) {Pooled PG\\\scriptsize concurrency};
\node[view, minimum width=23mm] (sq) at (6.8,0) {SQLite\\\scriptsize fallback};
\node[draw=atpslate!60, fill=atpslate!6, dashed, rounded corners=2pt, align=center,
      inner sep=4pt, minimum width=46mm] (k8s) at (3.4,-1.6) {Scalability\,/\,Kubernetes port};
\draw[commitflow] (api)--(sq);
\draw[commitflow] (api)--(pg);
\draw[commitflow] (api)--(pool);
\draw[orch] (sq)--(k8s);
\draw[orch] (pg)--(k8s);
\draw[orch] (pool)--(k8s);
\end{tikzpicture}%
}
\caption{Storage substrate boundary. PostgreSQL (and pooled PostgreSQL for concurrency) is the substrate for all reported experiments; SQLite remains a zero-setup fallback for portability. The backing store is an implementation choice behind the store protocol: it affects neither ATP correctness semantics nor any reported number. The scalability and Kubernetes deployment phase changes deployment scale, not correctness.}
\Description{A substrate diagram shows a store protocol connected to PostgreSQL as the experiment substrate, pooled PostgreSQL for concurrency, SQLite as a zero-setup portability fallback, and a scalability or Kubernetes port that changes deployment scale without changing ATP correctness semantics.}
\label{fig:storage}
\end{figure}

This appendix specifies the runtime data structures that realize the architecture
of the Architecture section of the main paper at the level of record fields. The intent is to make
the authority boundary inspectable: every object below is either committed truth, a
projection of committed truth, a durable obligation, or an external boundary, and no
proposer-side object can become truth except by passing admission.

\paragraph{Object taxonomy.}
\Mnemo{} keeps four kinds of object apart, and logs only the last. An \emph{event} is
an immutable observed fact (for example, a provider confirmation or a disruption
notice). A \emph{candidate} is a proposed finite-state transition that an admitted
commit may or may not produce. A \emph{constraint evaluation} is a predicate that the
admission validator checks. A \emph{committed transition} is the durable record of an
admitted transition: a \CTL{} record. Events, constraints, and finite-state structure
live outside the \CTL{}; only an admitted, committed transition is a \CTL{} record. A
vetoed or rejected proposal leaves a queryable rejection or escalation entry, never a
committed transition.

\paragraph{The committed-transition log record.}
A \CTL{} record is the unit of committed truth. Its identity and ordering fields are
fixed; its recovery metadata and domain evidence are carried in two typed
side-channels, the metadata map $M$ and the typed extension $X$, so that recovery and
provenance fields can evolve without changing a transition's identity.
Table~\ref{tab:eng-ctl-schema} lists the fields.

\begin{table}[t]
\centering
\caption{\CTL{} record fields. The identity and ordering fields fix committed truth;
$M$ and $X$ are typed side-channels for recovery metadata and domain or causal
evidence. The record is logically append-only: later transitions may compensate or
supersede it, but never erase it.}
\label{tab:eng-ctl-schema}
\small
\begin{tabular}{p{0.16\linewidth}p{0.08\linewidth}p{0.65\linewidth}}
\toprule
\rowhead
Field & Symbol & Role \\
\midrule
\texttt{rid} & $r$ & Unique record identifier (primary key); the idempotency key when client-supplied. \\
\texttt{op\_id} & & Optional client idempotency key, used when \texttt{rid} is server-assigned. \\
\texttt{tx\_group\_id} & & Saga group: the records that compensate together (indexed, not unique). \\
\texttt{workflow\_id} & & Session or instance identity (for example, a ride or trip). \\
\texttt{binding\_id} & & Role-binding identity (an abstract role bound to entities). \\
\texttt{eid} & & Individual entity whose state changed. \\
\texttt{fsm} & & Finite-state machine to which the edge belongs. \\
\texttt{version} & $v$ & Monotonic version per \texttt{(eid, fsm)}; one ordered history per entity. \\
\texttt{state\_before} & $s^{-}$ & Pre-state; the reversal target for compensation. \\
\texttt{state\_after} & $s^{+}$ & Post-state of the committed transition. \\
\texttt{action\_type} & $a$ & Named action (for example, pickup, reroute, compensate); disambiguates self-loops. \\
\texttt{triggers} & $U$ & Proximate causes (event identifiers in the recovery-event log). \\
\texttt{dependencies} & $D$ & Required preconditions: \texttt{rid}s that must be committed and still effective. \\
\texttt{metadata} & $M$ & Verdict, compensation policy, restore flag, status, supersession or compensation links, provenance. \\
\texttt{extension} & $X$ & Schema-validated domain and causal-audit evidence plus temporal-spatial attributes. \\
\texttt{schema\_id} / \newline \texttt{schema\_version} & & Schema identity for safe replay across schema evolution. \\
\texttt{timestamp} & $t$ & Commit time; \texttt{log\_position} (below) provides ordering. \\
\bottomrule
\end{tabular}
\end{table}

The storage layer enforces two uniqueness constraints that carry the safety load:
\texttt{rid} is the primary key, so a retried commit cannot double-apply, and
\texttt{(eid, fsm, version)} is unique, so each entity has exactly one ordered
history. The saga group \texttt{tx\_group\_id} is indexed but deliberately
\emph{not} unique, so one saga spans many records. The metadata map $M$ carries the
recovery and audit fields, including \texttt{status} (one of \texttt{active},
\texttt{compensated}, or \texttt{superseded}), \texttt{restore} (whether the record
is a clean restore point), and the \texttt{compensates} or \texttt{supersedes} links
naming the \texttt{rid}s a record reverses or replaces. The typed extension $X$
carries domain attributes (location, ETA, deadline, route, assignment) and
causal-audit evidence; $X$ is schema-validated rather than free-form.

\paragraph{Identity and log position.}
\Mnemo{} keeps three identities distinct rather than collapsing them into one: the
individual entity \texttt{eid}, the role-binding \texttt{binding\_id}, and the session
or instance \texttt{workflow\_id}. Conflating them breaks repeated participation (one
entity acting twice), multi-party instances (one session with several participants),
and reassignment (a resource re-bound to a new role). Independently of the wall-clock
\texttt{timestamp}, the store assigns a monotonic \texttt{log\_position} that provides
total append ordering for replay and recovery; replay can proceed by per-entity
\texttt{version} or by global \texttt{log\_position}. The recovery-event log records
the fine-grained events of repair and obligation execution (packages created,
idempotency keys observed, sequence positions assigned, conflicts detected, outcomes
accepted or rejected). In the PostgreSQL substrate, tenant-scoped uniqueness over
event identifiers, idempotency keys, and recovery sequence positions provides the
concrete concurrent-idempotency boundary.

\paragraph{StateView: the effective-state projection.}
\StateView{} exposes current operational truth. It is reconstructed from the \CTL{}
by replaying only \emph{effective} records: records that have not been compensated,
superseded, or invalidated and whose dependency chains remain effective. Current
state is derived, never stored. Beyond the bare finite-state value, a \StateView{}
for an \texttt{(eid, fsm)} pair folds the $X$ attribute deltas forward (location,
ETA, deadline, route, assignment) and carries the set of effective records that
support it, so that admission and the reactive loop read where each entity currently
is, not a static plan. The projection is materialized per \texttt{(eid, fsm)} and is
a cache: replay from the \CTL{} remains the source of truth. A cross-entity
correction must re-project \emph{every} affected entity, because refreshing only the
issuing entity could leave a now-ineffective record lingering as current truth
elsewhere.

\paragraph{Active contract records.}
An \ACR{} is a durable workflow contract stored in the same committed substrate as
ordinary transitions. It records a future obligation or assumption, not a database
commit. For that reason the paper uses \emph{Active Contract Record}. The frozen
artifact retains \texttt{ActiveCommitment}, \texttt{active\_commitment}, and
\texttt{commitment\_fired} as backward-compatible code identifiers; they denote the
same ACR mechanism and do not grant commit authority.

Creating an \ACR{} is itself an admitted transition. A watcher that detects a trigger
constructs a readiness or breach lifecycle proposal grounded in admitted evidence.
The artifact operation \texttt{fire\_active\_commitment} implements this crossing: it
constructs a commitment-FSM candidate, validates it, and commits only the admitted
lifecycle record. The legacy status \textsf{fired} is therefore the artifact's name
for an admitted ready-or-breached state, not a direct watcher write. Only from that
committed state may the runtime resume the continuation to construct a remedy
proposal, which again passes ordinary admission before any domain transition can
commit.

The portable schema has three layers. A universal instance core identifies the
contract, clause, evidence, bounded scope, trigger, guard, continuation, validator,
and lifecycle. A reusable versioned profile can supply recurring remedy, budget,
expiry, idempotency, retry, and escalation policies. Sparse extensions add horizon,
standing, burden, external-effect correlation, or cross-episode fields only when a
profile or domain requires them. The tagged artifact implements the flat core in
Table~\ref{tab:eng-acr-schema}. A profile-enabled deployment would resolve its
profile and overrides into an effective record before admission. That normalization
is a forward-compatible schema refactoring specified here; profile resolution is not
implemented or evaluated in the tagged artifact.

\begin{table}[t]
\centering
\caption{Implemented flat-core \ACR{} fields in the tagged artifact. A future
profile-enabled input normalizes to an effective record before admission; it cannot
remove these safety-relevant requirements.}
\label{tab:eng-acr-schema}
\small
\begin{tabular}{p{0.30\linewidth}p{0.58\linewidth}}
\toprule
\rowhead
Field & Role \\
\midrule
\texttt{commitment\_id}, \texttt{commitment\_type} & Stable artifact identity and typed contract class. \\
\texttt{description} & The recorded clause, obligation, or assumption. \\
\texttt{creating\_record\_id}, \texttt{creating\_workflow\_id} & Admitted provenance that created the contract. \\
\texttt{dependency\_scope} & Bounded scope the resumed proposal may inspect or touch. \\
\texttt{trigger} & Condition from which a watcher may construct a readiness proposal. \\
\texttt{continuation\_ref}, \texttt{guard\_ref}, \texttt{validator\_ref} & Versioned re-entry point and the guard and validator controlling it. \\
\texttt{compensation\_ref} & Permitted compensation handler, if the selected profile allows one. \\
\texttt{priority} & Scheduling input; it never overrides admission. \\
\texttt{expiry} & Expiry or discharge condition. \\
\texttt{failure\_signature}, \texttt{cross\_episode\_key} & Optional recurrence and cross-episode traceability fields. \\
Lifecycle events & Admitted \textsf{live}, \textsf{fired} (ready/breached), \textsf{proposed}, \textsf{admitted}, \textsf{rejected}, \textsf{expired}, and \textsf{discharged} transitions. \\
\bottomrule
\end{tabular}
\end{table}

\begin{table}[t]
\centering
\caption{Reusable ACR profiles in the portable specification. This normalization
layer is not implemented or benchmarked in the tagged artifact.}
\label{tab:eng-acr-profiles}
\small
\begin{tabular}{p{0.20\linewidth}p{0.68\linewidth}}
\toprule
\rowhead
Profile & Additional required policy and bindings \\
\midrule
Deadline & Trusted clock, deadline or validity window, timeout semantics, late-event policy. \\
Assumption & Proposition, supporting evidence, freshness interval, renewal and invalidation policy. \\
External effect & Outbox-intent reference, idempotency and correlation keys, acknowledgement deadline, retry and compensation policy. \\
Semantic contradiction & Claim and vocabulary versions, conflicting evidence, reviewer and validation policy, permitted repair scope. \\
\bottomrule
\end{tabular}
\end{table}

\paragraph{Inbox and outbox boundaries.}
External events enter through an inbox deduplication boundary; external side effects
leave through an outbox boundary that stages provider calls with idempotency keys.
The outbox is an intent log: provider execution remains outside committed truth until
its result is observed and admitted as a subsequent transition. This prevents a
domain commit from silently becoming an external effect, and prevents a duplicate
external observation from creating a duplicate commit. Both boundaries sit on the
commit side of the authority line but neither owns truth: the inbox only deduplicates
inbound observations, and the outbox only stages outbound intent.

\textbf{Evidence status.} The reported experiments validate committed-intent and
deduplication mechanisms, not the complete physical-effect or temporal-exception
routes. They do not exercise dispatch failure after committed intent, timeout without
premature failure inference, late acknowledgement, or provider-level duplicate-effect
suppression. Those paths remain architectural specifications and explicit targets for
the next deterministic fault-injection suite.

\section{Admission, Commit, Logging, and Replay}
\label{app:eng-admission}

This appendix gives the deterministic admission contract under $\mathcal{C}$ and the
commit, logging, and replay sequence at the mechanism level. The contract is
independent of proposer identity: an LLM, solver, runtime driver, benchmark adapter,
or \ACR{} watcher or continuation may change the proposal package, but only the ordered admission
steps can create committed truth.

\paragraph{The admission contract.}
Admission is the deterministic pre-commit procedure that decides whether a proposal
may become a committed transition. It reads current effective state from
\StateView{} and retained evidence, never speculative or rejected proposal history,
and applies the ordered checks below. Steps 1--8 are the executable contract used by
the implementation and the RQ1--RQ6 experiments.

\begin{enumerate}
\item Parse the proposal package and reject malformed packages before commit
  processing begins.
\item Check tenant, entity, idempotency key, operation key, and declared conflict
  scope.
\item Read current \StateView{} and retained evidence; do not validate against
  speculative or rejected proposal history.
\item Apply the deterministic constraints in $\mathcal{C}$: finite-state rules,
  policy rules, stale-world checks, dependency closure, and compensation safety.
\item If the proposal is a repair, require that the underlying failure is resolved
  under $\mathcal{C}$, or that the trigger evidence remains effective and queryable
  after the repair.
\item If a watcher proposes \ACR{} readiness or breach, require admitted trigger
  evidence and commit only the lifecycle transition. Permit continuation resumption
  only from that committed state; require its remedy package to re-enter admission and
  forbid direct mutation of the \CTL{} or \StateView{} at either crossing.
\item Serialize or reject overlapping incompatible conflict scopes; admit only if the
  resulting transition is valid over current effective state.
\item Atomically append the admitted transition and recovery metadata to durable
  storage and update \StateView{}; otherwise record a queryable rejection reason.
\end{enumerate}

A solver certificate or LLM rationale is treated as evidence supplied to these
checks, not as authority. The validator reads log-grounded effective state only: a
finite-state edge is checked for legality with \texttt{action\_type} disambiguating
self-loops; each dependency is required to be committed \emph{and} still effective
(existence alone is insufficient); and application constraints are keyed by
\texttt{(fsm, action\_type)} so that, for example, a pickup may be required to follow
the driver's arrival as read from the driver's effective state. The gate returns an
admitted transition, a rejected proposal with structured reasons, or an escalation
request for a wider scope.

\paragraph{Commit, logging, and projection.}
An admitted proposal commits through a single atomic step that also produces the
projection update. The commit batch is the unit of multi-entity atomicity. In one
transaction it: locks the affected entities in sorted key order, verifies the
expected monotonic \texttt{version} for each \texttt{(eid, fsm)}, re-checks that every
dependency in $D$ is committed and effective, appends all \CTL{} rows, updates the
synchronous \StateView{} projection for every affected key, and stages any outbox
intents. Three guarantees hold exactly at the write: idempotency (the \texttt{rid}
primary key, with \texttt{op\_id} as the client key when \texttt{rid} is
server-assigned), ordering (the monotonic per-entity \texttt{version}), and
dependency effectiveness ($D$ committed and not since compensated or superseded).
Only after this step does a row exist; a rejected proposal yields a queryable
rejection, not a row.

\paragraph{Idempotency and retry safety.}
Because durable runtime drivers execute steps at least once, the commit path must be
idempotent under retry. The \texttt{rid} primary key (or \texttt{op\_id} when
\texttt{rid} is server-assigned) makes a retried commit return the existing committed
record rather than duplicate domain truth. This is why the idempotency key is part of
the proposal package and is checked at Step~2 before any state mutation: a retried
submission after a crash or timeout resolves to the same committed transition, so the
commit batch is safe to re-drive.

\paragraph{Replay.}
The \CTL{} is the source of committed truth, and current state is recovered by replay
rather than read from a stored cursor. Replay of an \texttt{(eid, fsm)} history
proceeds in order and self-checks consistency: each record's \texttt{state\_before}
must equal the prior record's \texttt{state\_after}, and a mismatch is a replay error
that signals log corruption. The bare replay yields the finite-state value; the full
reconstruction is the \StateView{} that additionally folds the $X$ attribute deltas
forward and retains the effective records supporting the state. Because compensation
and supersession are themselves later admitted transitions, effective-state replay
reproduces current truth without deleting history or moving a cursor: the effective
index marks which records still count, and the projection excludes the rest.

\section{Disruption, Recovery, and Compensation}
\label{app:eng-recovery}

This appendix specifies recovery at the mechanism level. The central property is that
recovery does not bypass admission: a disruption, an admitted ready or breached contract, or a
crash-recovery action produces a proposal that passes through the same gate as any
generated action. Recovery decisions are made in the proposal and policy layers; the
substrate supplies only the mechanism and the invariants.

\paragraph{Contract readiness and remedies do not bypass admission.}
When an \ACR{} trigger is detected, the watcher constructs a readiness or breach
lifecycle proposal from admitted evidence. Step~6 admits or rejects that proposal and
commits only the lifecycle record; rejection leaves the prior live state unchanged.
Only an \ACR{} in the resulting committed ready or breached state may resume its
continuation. The continuation constructs a remedy package declaring the \ACR{}'s
bounded dependency \texttt{scope}; that package re-enters ordinary admission and may
touch only the recorded scope unless a wider scope is explicitly admitted as an
escalation. Thus neither detection nor resumption can install domain truth on its own.
Rejected or compensated \ACR{} proposals with matching failure signatures remain
queryable as historical evidence for future ranking or guard strengthening, which is
how memory becomes active without becoming authoritative.

\paragraph{Three recovery moves.}
The class of an action against the full governed-invariant set, re-checked at commit
time, selects the recovery move; the substrate supplies the mechanism and the policy
layer supplies the timing. Table~\ref{tab:eng-recovery-moves} states the three moves.
A restore (return to a clean checkpoint) is the bulk form of an undo: it resets state
but, unlike an exact undo, does not reverse external effects, so it is paired with
compensation.

\begin{table}[t]
\centering
\caption{Recovery moves. The action's class against the current governed-invariant
set selects the move; the substrate supplies the mechanism.}
\label{tab:eng-recovery-moves}
\small
\begin{tabular}{p{0.16\linewidth}p{0.34\linewidth}p{0.38\linewidth}}
\toprule
\rowhead
Move & When & Mechanism \\
\midrule
\texttt{undo} & An exact inverse exists (reversible). & Apply the inverse; the entity returns to $s^{-}$. \\
\texttt{compensate} & No inverse, but governed invariants are restorable within a validity window (compensable). & Append a compensating transition; history is preserved. \\
escalate & Neither holds, or local repair cannot contract (irreversible). & Veto, or hand to a human task; no committed transition is written. \\
\bottomrule
\end{tabular}
\end{table}

\paragraph{Dependency-closed compensation and supersession.}
Compensation is represented as a new admitted transition, never as physical deletion,
and the whole saga group identified by \texttt{tx\_group\_id} unwinds together. The
group is compensated in reverse topological order over the group's dependency
subgraph, not merely reverse insertion order, so a leg is never compensated before
the legs that depend on it. Admission enforces dependency-closed compensation through
the recovery invariants: a compensation is rejected if it would leave an effective
record depending on an ineffective record (no orphaned effective dependents), and a
compensation is rejected if it would break the effective chain for an entity
(effective-chain preservation). Supersession is the related operation by which a
later admitted transition replaces an earlier one through the \texttt{supersedes}
link in $M$; like compensation, it changes effectiveness without erasing history.

\paragraph{Effective-chain preservation under cross-entity correction.}
Compensation and supersession never delete history; they only change which records
are effective. Because the projection is materialized per \texttt{(eid, fsm)}, a
correction that touches several entities must drop the corrected record from current
truth on \emph{every} entity it touches, not only the entity that issued the
correction. Refreshing only the issuing entity would let a now-ineffective record
linger as current truth elsewhere. Replay from the \CTL{} therefore remains the
source of truth, and the per-entity projection is rebuilt for all affected keys after
a correction.

\paragraph{Evidence-preserving repair.}
A memory-initiated repair must not discharge its own trigger by deleting,
overwriting, compensating, or obscuring the evidence that justified it. Admission
permits a repair to alter trigger evidence only if it verifies, under the retained
evidence and validator rules, that the proposed repair addresses the triggering
condition rather than merely suppressing its observable signal (Step~5). Concretely:
let a repair proposal be triggered by evidence $E$ of a failure condition $F$.
Admission admits the repair only if either $F$ is resolved under $\mathcal{C}$, or $E$
remains effective and queryable after the repair. As a result, no admitted repair can
clear its own trigger by destroying the evidence that caused it, and a repeated
failure remains visible through its failure signature for later analysis.

\paragraph{Crash recovery as invariants in reverse.}
A transactional fault, such as a crash mid-leg, is handled with the same mechanisms.
Recovery locates the most recent record marked as a clean restore point, appends
compensating transitions for the saga groups committed after it in reverse
topological order (marking them \texttt{compensated}), and resumes the reactive loop
from the resulting effective state. Nothing is deleted and no cursor moves: the
original records remain for audit, and effective-state replay yields current truth
because each compensation is itself a later committed transition. Recovery is thus the
ordinary commit invariants (idempotency, monotonic versions, dependency
effectiveness) run in reverse under a policy.

\section{Staged Implementation (R0--R8) and Reproducibility}
\label{app:eng-stages}

This appendix records the staged build of \Mnemo{} and the reproducibility path. The
build follows a small-to-large discipline: a visible, auditable correctness kernel is
established before scale-out infrastructure is connected, so that the authority
boundary (the central claim) is not obscured by an external engine or solver.
Table~\ref{tab:eng-stages} lists stages R0 through R8 and their status. The deployment
service boundary (R8) is now complete: a deployable service with health,
proposal-submission, \StateView{}, and metrics endpoints is implemented and audited
for authority separation under concurrent HTTP submission (Appendix~\ref{app:r8}).

The R0--R8 rows below are capability groups used for this supplement, not a claim
that each row names one repository release tag. The tagged artifact's finer-grained
R4.x--R8 history is recorded in its release notes.

\begin{table*}[t]
\centering
\caption{Staged implementation R0--R8. Each stage is independently runnable and
testable. The correctness kernel and its safety boundary are completed before
deployment scale-out; R8 changes deployment scale, not the semantics proved in the
main text.}
\label{tab:eng-stages}
\small
\begin{tabularx}{\linewidth}{p{0.055\linewidth}p{0.235\linewidth}X p{0.105\linewidth}}
\toprule
\rowhead
Stage & Delivers & Detail & Status \\
\midrule
R0 & Substrate kernel & \CTL{} schema, append-only commit, monotonic per-entity versioning, PostgreSQL store, \texttt{log\_position} ordering. & Complete \\
R1 & Effective-state projection & \StateView{} reconstruction from effective records; full and effective entity histories. & Complete \\
R2 & Deterministic admission under $\mathcal{C}$ & Isolated validator: finite-state legality, dependency effectiveness, conflict scope, stale-world checks; structured rejection reasons. & Complete \\
R3 & Compensation and recovery & Dependency-closed compensation, supersession, restore points, fail-closed compensation invariants; recovery-event log. & Complete \\
R4 & Idempotency and boundaries & \texttt{op\_id} logical idempotency, inbox deduplication, outbox intent boundary, multi-entity atomic commit batch. & Complete \\
R5 & Active contract records & \ACR{} creation and readiness as admitted lifecycle transitions, non-authoritative continuation output, scoped repair, failure signatures. & Complete \\
R6 & Benchmark and proposer integration & J1--J4 end-to-end harness; proposer-quality safety-invariance path; mechanism audit plus executable Temporal product check; recursive-recovery stress. & Complete \\
R7 & Storage substrate and cost audit & Optional environment-gated PostgreSQL and pooled-PostgreSQL paths; RQ5 worker and matched data-size audits. & Complete \\
R8 & Deployable service boundary & HTTP service with health, proposal-submission, \StateView{}, and metrics endpoints; authority separation audited under concurrent submission. & Complete \\
\bottomrule
\end{tabularx}
\end{table*}

\paragraph{Default reproducibility path.}
The default reproducibility path is PostgreSQL-free and deterministic:
\begin{quote}\small
Tag: \texttt{arxiv-atp-rq1-rq9b-r8-v2}\\
Commit: \texttt{7aab09c43c1672c01b415b4987cd2d1d9a6fd0a4}
\end{quote}
From the repository root, run:
\begin{verbatim}
python -m pytest -q tests/experiments/test_rq[1-8]_*.py
python -m pytest -q tests/benchmarks/test_production_tp_comparator.py
python -m pytest -q -m temporal \
  tests/benchmarks/test_temporal_product_baseline.py
python -m pytest -q \
  tests/benchmarks/test_recursive_recovery_stress.py
python -m pytest -q tests/benchmarks/test_rq5_data_scale.py
python -m benchmarks.realm.rq5_data_scale
\end{verbatim}
The five test commands yield \texttt{8}, \texttt{2}, \texttt{1}, \texttt{1}, and
\texttt{1} passing tests; the final command regenerates the RQ5 scaling reports. Together they
cover authority
separation, serial-equivalent admission, evidence-preserving repair, obligation
containment, effective-state and compensation safety, storage-substrate correctness,
the J1--J4 end-to-end execution, proposer-quality safety invariance, and the
workflow/saga mechanism comparison, executable Temporal check, recursive-recovery stress,
and matched data-size scaling. A broader safe command additionally runs
\path{tests/core}, \path{tests/apps}, and \path{tests/benchmarks}. Reviewers
should not run all of \texttt{tests} / \texttt{experiments} as a blanket command, because older
exploratory imports outside the \ATP{} artifact path are intentionally excluded from
the reproducibility suite.

\paragraph{Infrastructure-cost benchmark (RQ5).}
The cost audit is run as a separate infrastructure-cost benchmark over selected real
infrastructure tests rather than as a violation-suppression experiment. It reports
end-to-end, admission, and commit latency, process-level throughput at one, four, eight,
and 16 workers, and matched commit-only versus ATP throughput at 1,000, 5,000, and
10,000 operations. The worker audit saturates beyond eight workers; the data-size sweep
measures $5.2$--$6.6\%$ incremental ATP cost while exposing local storage/projection slowdown.
The scripts that generate the RQ5 tables are included alongside
\path{ARTIFACT_EVALUATION.md},
which lists the exact safety validation command and expected reports.

\paragraph{Optional live PostgreSQL path.}
The live PostgreSQL path for the storage-substrate experiment is optional and
environment-gated by the \texttt{MNEMOSYNE\_POSTGRES\_DATABASE\_URL} variable, with an
optional connection-pool dependency. When supplied, it validates recovery-event
append and list operations, canonical duplicate-idempotency behavior, clean
sequence-conflict handling, and pooled connection use. The artifact claim does not
require a live database: default CI remains PostgreSQL-free and pool-dependency-free,
so the default \texttt{8 passed} plus comparator \texttt{2 passed} results reproduce
without any database setup.

\section{Structure and Scalability of the Constraint Set $\mathcal{C}$}
\label{app:constraint-grammar}
A natural objection is that ATP merely moves the correctness burden from the agent prompt into the
validator: if every semantic rule must be hand-coded, $\mathcal{C}$ does not scale. In the
implementation most of $\mathcal{C}$ is not hand-coded per case. Its rules fall into three classes
(Table~\ref{tab:constraint-taxonomy}).

\begin{enumerate}[leftmargin=1.4em,itemsep=2pt,topsep=2pt]
\item \textbf{Structurally derived.} Generated mechanically from declarations the application already
maintains. Finite-state legality is derived from per-entity transition table keyed by
\texttt{action\_type}; dependency effectiveness and compensation closure are derived from the
effective-chain over \CTL{}; idempotency and conflict scope are derived from the record's identity
triad. No per-rule code is written.
\item \textbf{Declarative predicates.} Declared once as data, not procedures, and checked by a generic
evaluator: schema and type constraints, required-evidence handles, world-fact freshness keys, and
conflict-scope keys.
\item \textbf{Application validators.} The only genuinely hand-written part: domain semantics that
cannot be derived or declared (for example, a capacity rule specific to one operation). In the
artifact these are a small minority of the constraint surface.
\end{enumerate}

\begin{table}[t]
\centering
\caption{Taxonomy of the constraint set $\mathcal{C}$: most rules are derived or declared, not
hand-coded per case.}
\label{tab:constraint-taxonomy}
\setlength{\tabcolsep}{4pt}\footnotesize
\begin{tabularx}{\linewidth}{>{\raggedright\arraybackslash}X >{\raggedright\arraybackslash}X >{\raggedright\arraybackslash}X}
\toprule
\rowhead
Class & How declared & Examples \\
\midrule
Structurally derived & generated from FSM table, effective-chain, identity triad & finite-state legality, dependency effectiveness, compensation closure, idempotency, conflict scope \\
Declarative predicates & declared as data, evaluated generically & schema and type, required evidence, world-fact freshness \\
Application validators & hand-written per operation & domain capacity and business rules \\
\bottomrule
\end{tabularx}
\end{table}

A proposal package is admissible when it satisfies the conjunction of all applicable constraints,
which in grammar form is:
\begin{verbatim}
admit(p)       := structural(p) AND declarative(p) AND validators(p)
structural(p)  := fsm_legal(p) AND deps_effective(p)
                  AND compensation_closed(p) AND idempotent(p)
                  AND scope_consistent(p)
declarative(p) := schema_ok(p) AND evidence_present(p)
                  AND world_fresh(p)
validators(p)  := conjunction of application-declared predicates
\end{verbatim}
Because the structural and declarative classes are generated from declarations the application already
keeps (entity FSMs, the dependency model, the schema), adding a new operation typically extends
$\mathcal{C}$ by declaration rather than by new validator code. That is what keeps the admission
boundary scalable as an application grows, and it bounds the hand-written surface to genuine domain
semantics.

\section{Workflow-Engine and Storage-Substrate Selection}
\label{app:eng-engine}

This appendix records the substrate selection and its rationale. The governing
principle is that the substrate is non-authoritative: a workflow engine or storage
backend may change deployment scale, availability, or throughput, but it must
preserve the \ATP{} authority boundary. Committed truth is produced only by admitted
transitions, regardless of which engine orchestrates the work or which store persists
the log.

\paragraph{Why the substrate is non-authoritative.}
\Mnemo{} separates orchestration from authority. Runtime drivers schedule work, fire
timers, retry steps, and wake obligations, but they do not own committed truth: a
driver may detect a trigger and submit a readiness or breach proposal grounded in
admitted evidence. Only after admission commits that lifecycle state may the driver
call a continuation provider and submit its remedy package; it may not directly
mutate lifecycle or domain truth. Keeping the source of truth in the \CTL{} rather
than in engine-internal state
is a deliberate design fork. Letting the engine own state would tie the design to one
engine and weaken the validator's evidence story, because admission reads the
effective-state projection rather than engine-private execution history. Holding the
authority boundary in the admission gate is what lets a deployment substrate change
without changing the correctness semantics.

\paragraph{Recommended two-tier substrate.}
The deciding requirement is a true ACID store with snapshot isolation, which
PostgreSQL provides natively through MVCC. The recommendation is therefore a
PostgreSQL-anchored two-tier substrate: a Postgres-backed durable-execution and
transaction core that hosts the \CTL{}, the effective-state projection, and the
recovery-event log in one store, with a compute fan-out tier layered beneath it for
concurrent proposal and probe execution. The durable tier supplies transactional
bookkeeping and a measurable commit-failure rate; the compute tier supplies node-count
scale. In the reference implementation the store protocol runs on PostgreSQL (and pooled
PostgreSQL for concurrency) as the substrate for all reported experiments, with SQLite
retained as a zero-setup fallback for local development and portability
(Figure~\ref{fig:storage}); the backing store affects neither correctness semantics nor any
reported number. The scalability and Kubernetes deployment phase is a
deployment concern that may add containerized services, durable-workflow workers,
orchestration, observability, and load tests; it changes deployment scale, not
\ATP{} correctness semantics.

\paragraph{Durable engines are interop targets.}
The 2025--2026 durable-execution category provides persistence, retries, and
exactly-once or ACID semantics as infrastructure, and \Mnemo{} builds the \CTL{} and
the \ATP{} layer on top of such an engine rather than reimplementing logging or
recovery primitives. Table~\ref{tab:eng-engine-survey} summarizes the engines
considered. They are treated as interoperability or deployment targets behind an
engine-agnostic interface, not as the authority for committed truth: durable
workflow engines such as Temporal and Cadence orchestrate execution but are
non-authoritative, and Argo Workflows is retained as a Kubernetes deployment and
export target rather than a transaction engine. Checkpoint-persistence agent
frameworks are explicitly excluded as the transactional substrate, because checkpoint
persistence is not durable execution with exactly-once or ACID guarantees; they may
serve only as optional agent-authoring layers above a real durable engine.

\begin{table}[t]
\centering
\caption{Durable-execution and orchestration engines considered. Each is an
interoperability or deployment target behind an engine-agnostic interface; none owns
the \ATP{} authority boundary, which remains in the admission gate.}
\label{tab:eng-engine-survey}
\small
\begin{tabular}{p{0.20\linewidth}p{0.30\linewidth}p{0.38\linewidth}}
\toprule
\rowhead
Engine & Model & Role for \Mnemo{} \\
\midrule
PostgreSQL & ACID store with MVCC snapshot isolation & \makecell[tl]{Anchor substrate: hosts \CTL{},\\projection, recovery log.} \\
SQLite & Embedded ACID store & Zero-setup fallback for local development and portability. \\
Temporal & Durable execution, code-defined sagas & Orchestration and scale escape hatch; non-authoritative. \\
Cadence & Durable execution (Temporal predecessor) & Orchestration interop target; non-authoritative. \\
Argo Workflows & Kubernetes DAG execution & Deployment and export target, not a transaction engine. \\
\bottomrule
\end{tabular}
\end{table}

\paragraph{Engine portability.}
The durable-execution core sits behind an engine-agnostic interface so that the
choice of orchestrator is a contained substitution. The Temporal-style boundary makes
the contract concrete: the workflow layer is deterministic and side-effect free
(timers, timeouts, retries, signals), while the activity layer performs the side
effects (validate the commit batch, build the \CTL{} records, commit, reproject
\StateView{}, stage outbox). The workflow orchestrates and the activity commits, but
the store and \CTL{} own truth. Because activities execute at least once under a
retry policy, the \texttt{op\_id} idempotency key makes a retried activity return the
existing committed record rather than duplicate domain truth. This portability is why
the same authority boundary holds whether the orchestrator is a local driver or a
durable workflow engine, and why a change of engine does not change the safety
properties proved in the main text.

\section{Worked Proposal Examples}
\label{app:proposal-examples}
This appendix makes the \emph{proposal is not truth} boundary concrete. The examples use compact
JSON-like records to show what crosses the proposal boundary and what the admission gate does before
anything becomes committed truth. They are illustrative but match the implementation contract used
throughout the artifact: a proposal package carries a tenant, entity, operation, dependency set, world
assumptions, conflict scope, evidence handles, and an idempotency key. Admission evaluates the package
against the current effective \StateView{}, retained evidence, and the declared constraint set
$\mathcal{C}$. Two axes matter independently: whether the proposal is \emph{well formed}, and whether the
committed \emph{output} it would produce is $\mathcal{C}$-valid.

\subsection{Valid Proposal}
A valid proposal may come from an LLM, solver, workflow driver, or active contract continuation. The proposer
supplies a candidate action, but the candidate stays non-authoritative until admission accepts it.
\begin{verbatim}
{
  "proposal_id": "p_valid_hotel_001",
  "tenant": "t1",
  "entity": "trip_17",
  "operation": "book_hotel",
  "depends_on": ["flight_f1"],
  "world_assumptions": {"flight_f1_status": "confirmed"},
  "conflict_scope": "t1/trip_17/lodging",
  "evidence": ["user_request_e1"],
  "idempotency_key": "book_hotel_trip17_once"
}
\end{verbatim}

\noindent\textbf{Admission result.} Accepted. The referenced flight remains effective in \StateView{}, the
world assumption is current, no incompatible proposal occupies the lodging conflict scope, and the
idempotency key is fresh.

\noindent\textbf{Committed transition.} The admitted hotel booking is appended to the
committed-transition log (\CTL{}); \StateView{} projects the booking as effective current truth; a
confirmation message may be staged in the outbox under its own idempotency key.

\subsection{Invalid Proposal}
The next proposal is syntactically well formed but semantically stale. A classical parser would accept the
record; ATP rejects it before commit because its assumptions disagree with observed effective state.
\begin{verbatim}
{
  "proposal_id": "p_stale_hotel_002",
  "tenant": "t1",
  "entity": "trip_17",
  "operation": "book_hotel",
  "depends_on": ["flight_f2"],
  "world_assumptions": {"flight_f2_status": "confirmed"},
  "observed_world": {"flight_f2_status": "cancelled"},
  "conflict_scope": "t1/trip_17/lodging",
  "evidence": ["user_request_e1"],
  "idempotency_key": "book_hotel_trip17_once"
}
\end{verbatim}

\noindent\textbf{Admission result.} Rejected. The proposal depends on a flight that is no longer effective
under the admitted observations. The rejection is recorded with a durable reason, but no domain transition
is appended.

\noindent\textbf{Committed effect.} \CTL{} is unchanged with respect to domain truth, \StateView{} is
unchanged, and no external side effect is staged.

\subsection{Valid Output After a Valid Proposal}
A valid admitted proposal may create a valid outbox intent. The outbox record is not the external effect
itself; it is an admitted intent to perform the effect under an idempotency key.
\begin{verbatim}
{
  "proposal_id": "p_notify_after_booking_003",
  "tenant": "t1",
  "entity": "trip_17",
  "operation": "stage_outbox_notification",
  "depends_on": ["hotel_booking_b1"],
  "message_type": "booking_confirmation",
  "recipient": "user_u1",
  "conflict_scope": "t1/trip_17/notification",
  "idempotency_key": "notify_trip17_booking_once"
}
\end{verbatim}

\noindent\textbf{Admission result.} Accepted. The booking dependency is an effective committed fact, the
outbox idempotency key has not been consumed, and no incompatible notification intent exists for the same
scope.

\noindent\textbf{Committed effect.} \CTL{} records the admitted outbox intent; \StateView{} exposes the
notification as staged; the provider call stays outside committed truth until its result is observed and
admitted as a later transition.

\subsection{Invalid Output After a Valid Proposal}
Even after a valid domain proposal, a later output proposal can be invalid. Here an agent tries to charge a
card twice by changing only the provider-level idempotency key while targeting the same committed payment
obligation.
\begin{verbatim}
{
  "proposal_id": "p_duplicate_charge_004",
  "tenant": "t1",
  "entity": "trip_17",
  "operation": "stage_outbox_payment",
  "depends_on": ["hotel_booking_b1"],
  "payment_obligation": "hotel_booking_b1/deposit",
  "amount": "250.00",
  "conflict_scope": "t1/trip_17/payment/deposit",
  "idempotency_key": "pay_deposit_trip17_b1_once",
  "provider_idempotency_key": "new_key_but_same_obligation"
}
\end{verbatim}

\noindent\textbf{Admission result.} Rejected (or escalated for reconciliation). The proposal targets an
already staged or satisfied payment obligation under the same semantic conflict scope and obligation
idempotency key. A changed \emph{provider} key is evidence, not authority to create a duplicate external
effect.

\noindent\textbf{Committed effect.} No duplicate outbox payment intent becomes current truth, and the
rejected package remains queryable as evidence for audit or guard strengthening. Only the first and third
proposals commit; the second and fourth are rejected with a durable reason, so committed-state correctness
is independent of how the proposal was generated.

\section{An End-to-End Transaction Trace}
\label{app:transaction-example}
This appendix traces one disruption-recovery path through ATP, separating five objects that agentic
systems often conflate: input observation, proposal package, admission decision, committed log record, and
external side-effect intent. It continues the trip scenario of Appendix~\ref{app:proposal-examples}.

\subsection{Input Observation}
\begin{verbatim}
flight_cancelled(flight_f1)
source_event_id = flight_provider_event_e123
observed_at     = 2026-06-30T10:15:00Z
\end{verbatim}
The input observation is evidence. It is not, by itself, a domain repair, and it grants no write authority
to the proposer or the runtime driver.

\subsection{Proposal Package}
A planner or active contract continuation proposes a repair package:
\begin{verbatim}
{
  "proposal_id": "p_repair_trip_017",
  "tenant": "t1",
  "entity": "trip_17",
  "operation": "repair_trip_after_flight_cancellation",
  "depends_on": ["flight_f1", "hotel_h1"],
  "evidence": ["flight_provider_event_e123"],
  "conflict_scope": "t1/trip_17",
  "proposed_steps": [
    "compensate_hotel(hotel_h1)",
    "search_alternate_flight(trip_17)",
    "notify_user(user_u1)"
  ],
  "idempotency_key": "repair_trip17_after_flight_f1_once"
}
\end{verbatim}
The proposal is not committed truth; it is a candidate package supplied to the admission gate.

\subsection{Admission}
Admission evaluates the package against $\mathcal{C}$ and the current effective state:
\begin{verbatim}
checks:
  dependency closure:    hotel_h1 depends on flight_f1
  stale-world facts:     flight_f1 is observed cancelled
  compensation safety:   no effective downstream record is orphaned
  evidence preservation: flight_provider_event_e123 stays queryable
  conflict scope:        no active repair owns t1/trip_17
  idempotency:           repair_trip17_after_flight_f1_once is fresh
\end{verbatim}
If all checks pass, the proposal is admitted; if any fails, the rejection and its reason remain durable,
but the proposed repair does not become domain truth.

\subsection{Committed-Transition Log}
An admitted repair appends records such as:
\begin{verbatim}
CTL append:
  compensation_requested(hotel_h1)
  repair_proposal_admitted(trip_17)
  ACR_updated(trip_repair_obligation)
  recovery_metadata(...)
\end{verbatim}
\CTL{} is the source of committed truth. Compensation is logical: the earlier hotel record is not erased;
it becomes ineffective only through admitted compensating transitions and effective-state projection.

\subsection{StateView}
\StateView{} projects current effective truth from \CTL{}:
\begin{verbatim}
StateView:
  flight_f1              = cancelled
  hotel_h1               = compensation_pending
  trip_17                = repair_in_progress
  trip_repair_obligation = live
\end{verbatim}
\StateView{} is not raw history: it excludes compensated, superseded, or otherwise ineffective records
while preserving full \CTL{} history for audit.

\subsection{Outbox}
External actions are staged rather than executed as hidden side effects of the planner:
\begin{verbatim}
outbox:
  notify_user(user_u1)
  idempotency_key = notify_trip17_repair_once
  status          = staged
\end{verbatim}
An outbox intent is not the external effect itself. The provider call and its result stay outside committed
truth until observed and admitted as later evidence, which prevents a generated proposal from silently
turning into an irreversible external mutation.

\subsection{Boundary Summary}
\begin{quote}
Input observation is evidence, not committed repair. A proposal is candidate intent, not truth. An outbox
intent is a staged effect, not external truth. Only admitted \CTL{} records define committed truth, and
\StateView{} is the effective projection used for future admission.
\end{quote}

\bibliographystyle{plainnat}
\bibliography{reference,mnemosyne_workflow_docs}

\end{document}